\algnewcommand\algorithmicinput{\textbf{Input:}}
\algnewcommand\algorithmicoutput{\textbf{Output:}}
\algnewcommand\INPUT{\item[\algorithmicinput]}
\algnewcommand\OUTPUT{\item[\algorithmicoutput]}
\theoremstyle{plain}
\newtheorem{thm}{Theorem}
\newtheorem{lem}{Lemma}
\newtheorem{prop}{Proposition}
\newtheorem{cor}{Corollary}
\theoremstyle{definition}
\newtheorem{exmp}{Example}
\newtheorem{rmk}{Remark}
\def\entry#1{\llbracket #1 \rrbracket}
\def\Vec{\operatorname{vec}}
\def\ma{\bm{a}}
\def\mb{\bm{b}}
\def\me{\bm{e}}
\def\mx{\bm{x}}
\def\mA{\bm{A}}
\def\mB{\bm{B}}
\def\mD{\bm{D}}
\def\mH{\bm{H}}
\def\mI{\bm{I}}
\def\mL{\bm{L}}
\def\mM{\bm{M}}
\def\mN{\bm{N}}
\def\mO{\bm{O}}
\def\mS{\bm{S}}
\def\tA{\mathcal{A}}
\def\tB{\mathcal{B}}
\def\tC{\mathcal{C}}
\def\tD{\mathcal{D}}
\def\tE{\mathcal{E}}
\def\tH{\mathcal{H}}
\def\tL{\mathcal{L}}
\def\tM{\mathcal{M}}
\def\tO{\mathcal{O}}
\def\tS{\mathcal{S}}
\def\tX{\mathcal{X}}
\def\tY{\mathcal{Y}}
\def\trueT{\Theta_{\text{true}}}
\newcommand{\vectornorm}[1]{\left\lVert#1\right\rVert_2}
\newcommand{\zeronormSize}[2]{#1\lVert#2#1\rVert_0}
\newcommand{\nnormSize}[2]{#1\lVert#2#1\rVert_*}
\newcommand{\normSize}[2]{#1\lVert#2#1\rVert_\sigma}
\newcommand{\FnormSize}[2]{#1\lVert#2#1\rVert_F} 
\newcommand{\mnormSize}[2]{#1\lVert#2#1\rVert_\text{max}} 
\DeclareMathOperator*{\argmax}{arg\,max}
\def\MLET{\hat \Theta_{\text{MLE}}}
\newcommand*{\KeepStyleUnderBrace}[1]{
  \mathop{%
    \mathchoice
    {\underbrace{\displaystyle#1}}%
    {\underbrace{\textstyle#1}}%
    {\underbrace{\scriptstyle#1}}%
    {\underbrace{\scriptscriptstyle#1}}%
  }\limits
}
\begin{document}

\begin{center}
\begin{spacing}{1.5}
\textbf{\Large Learning from Binary Multiway Data: Probabilistic Tensor Decomposition and its Statistical Optimality }
\end{spacing}

\vspace{5mm}
Miaoyan Wang$^{1*}$\let\thefootnote\relax\footnote{$^*$To whom correspondence should be addressed: miaoyan.wang@wisc.edu} and Lexin Li$^{2}$

\vspace{5mm}
$^1$Department of Statistics, University of Wisconsin-Madison\\
$^2$Department of Biostatistics and Epidemiology, University of California, Berkeley 
\end{center}

\vspace{.5cm}

\begin{abstract}
We consider the problem of decomposing a higher-order tensor with binary entries. Such data problems arise frequently in applications such as neuroimaging, recommendation system, topic modeling, and sensor network localization. We propose a multilinear Bernoulli model, develop a rank-constrained likelihood-based estimation method, and obtain the theoretical accuracy guarantees. In contrast to continuous-valued problems, the binary tensor problem exhibits an interesting phase transition phenomenon according to the signal-to-noise ratio. The error bound for the parameter tensor estimation is established, and we show that the obtained rate is minimax optimal under the considered model. Furthermore, we develop an alternating optimization algorithm with convergence guarantees. The efficacy of our approach is demonstrated through both simulations and analyses of multiple data sets on the tasks of tensor completion and clustering. 
\end{abstract}

{\bf Keywords:}  binary tensor, CANDECOMP/PARAFAC tensor decomposition, constrained maximum likelihood estimation, diverging dimensionality, generalized linear model

\section{Introduction}

\subsection{Motivation}
Multiway arrays have gained increasing attention in numerous fields, such as genomics~\citep{hore2016tensor}, neuroscience~\citep{zhou2013tensor}, recommender systems~\citep{bi2018recom}, social networks~\citep{nickel2011three}, and computer vision~\citep{tang2013tensor}. An important reason of the wide applicability is the effective representation of data using tensor structure. One example is recommender system~\citep{bi2018recom}, the data of which can be naturally described as a three-way tensor of user $\times$ item $\times$ context and each entry indicates the user-item interaction under a particular context. Another example is the DBLP database~\citep{zhe2016distributed}, which is organized into a three-way tensor of author $\times$ word $\times$ venue and each entry indicates the co-occurrence of the triplets. 

Despite the popularity of continuous-valued tensors, recent decades have witnessed many instances of binary tensors, in which all tensor entries are binary indicators encoded as 0/1. Examples include click/no-click action in recommender systems, presence/absence of edges in multi-relational social networks~\citep{nickel2011three}, and connection/disconnection in brain structural connectivity networks~\citep{wang2019common}. These binary tensors are often noisy and high-dimensional. It is crucial to develop effective tools that reduce the dimensionality, take into account the tensor formation, and learn the underlying structures of these massive discrete observations. A number of successful tensor decomposition methods have been proposed~\citep{kolda2009tensor, anandkumar2014tensor, wang2017tensor}, revitalizing the classical methods such as CANDECOMP/PARAFAC (CP) decomposition~\citep{hitchcock1927expression} and Tucker decomposition~\citep{tucker1966some}. These methods treat tensor entries as continuous-valued, and therefore they are not suitable to analyze binary tensors. 

In this article, we develop a general method and the associated theory for binary tensor decomposition. Let $\tY = \entry{y_{i_1,\ldots,i_K}}\in\{0,1\}^{d_1\times \cdots \times d_K}$ be an order-$K$ $(d_1,\ldots,d_K)$-dimensional binary data tensor, where the entries $y_{i_1,\ldots,i_K}$ are either 1 or 0 that encodes the presence or absence of the event indexed by the $K$-tuplet $(i_1,\ldots,i_K)$. We consider the following low-rank Bernoulli model,
\begin{equation}\label{eq:modelintro}
\tY|\Theta\sim \text{Bernoulli}\{ f(\Theta) \},\quad  \text{where} \quad \text{rank}(\Theta)=R,
\end{equation}
where, for ease of notation, we have allowed the operators ($\sim$, $f$, etc) to be applied to tensors in an element-wise manner. That is, the entries of $\tY$ are realizations of independent Bernoulli random variables with success probability $f(\theta_{i_1,\ldots,i_K})$, where $f$ is a suitable function that maps $\mathbb{R}$ to $[0,1]$. The parameter tensor, $\Theta = \entry{\theta_{i_1,\ldots,i_K}}$ is of the same dimension as $\tY$ but its entries are continuous-valued, and we assume $\Theta$ admits a low-rank CP structure. Our goal is to estimate $\Theta$ from one instance of the binary tensor $\tY$. In particular, we are interested in the high dimensional setting where $d_{\min}=\min_{k\in[K]}d _k$ grows. Our primary focus is to understand (i) the statistical estimation error of binary tensor decomposition; (ii) the statistical hardness, in terms of minimax rate and signal-to-noise ratio, of the binary problem compared to its continuous-valued counterpart; and (iii) the computational properties of associated estimation algorithms.

\subsection{Related Work}
\label{sec:relatedwork}

Our work is closely related to but also clearly distinctive from several lines of existing research. We survey the main related approaches for comparison. 

\emph{Continuous-valued tensor decomposition.} In principle, one can apply the existing decomposition methods designed for continuous-valued tensor~\citep{kolda2009tensor, wang2017tensor} to binary tensor, by pretending the 0/1 entries were continuous. However, such an approach will yield an inferior performance: flipping the entry coding $0\leftrightarrow 1$ would totally change the decomposition result, and the predicted values for the unobserved entries could fall outside the valid range $[0,1]$. Our method, in contrast, is invariant to flipping, because reversing the entry coding of $\tY$ changes only the sign but not the decomposition result of the parameter $\Theta$. Moreover, as we show in Section~\ref{sec:phase}, binary tensor decomposition exhibits a ``dithering'' effect~\citep{davenport2014} that necessitates the presence of stochastic noise in order to estimate $\Theta$. This is clearly contrary to the behavior of continuous-valued tensor decomposition.  

\emph{Binary matrix decomposition.} When the order $K=2$, the problem reduces to binary or logit principal component analysis (PCA), and a similar model as \eqref{eq:modelintro} has been proposed~\citep{collins2002generalization,de2006principal,lee2010sparse}. While tensors are conceptual generalization of matrices, matrix decomposition and tensor decomposition are fundamentally different~\citep{kolda2009tensor}. Under the matrix case, the rank $R$ is required to be no greater than $\min(d_1,d_2)$, and the factor matrices are constrained to be orthogonal for the identification purpose. Both constraints are unnecessary for tensors, since the uniqueness of tensor CP decomposition holds under much milder conditions~\citep{bhaskara2014uniqueness}. In fact, factors involved in tensors may be nonorthogonal, and the tensor rank $R$ may exceed the dimension. These differences make the earlier algorithms built upon matrix decomposition unsuitable to tensors. Moreover, as we show in Section~\ref{sec:upperbound}, if we were to apply the matrix version of binary decomposition to a tensor by unfolding the tensor into a matrix, the result is suboptimal with a slower convergence rate.

\emph{Binary tensor decomposition.} More recently, \cite{mavzgut2014dimensionality,rai2015scalable,hong2020generalized} studied higher-order binary tensor decomposition, and we target the same problem. However, our study differs in terms of the scope of the results. In general, there are two types of properties that an estimator possesses. The first type is the algorithm-dependent property that quantifies the impact of a specific algorithm, such as the choice of loss function, initialization, and iterations, on the final estimator. The second type is the statistical property that characterizes the population behavior and is independent of any specific algorithm. Earlier solutions of \cite{mavzgut2014dimensionality,rai2015scalable,hong2020generalized} focused on the algorithm effectiveness, but did not address the population optimality. By contrast, we study both types of properties in Sections \ref{sec:statistic-property} and \ref{sec:algorithm-property}. This allows us to better understand the gap between a specific algorithm and the population optimality, which may in turn offer a useful guide to the algorithm design.

\emph{1-bit completion.} Our work is also connected to 1-bit matrix completion~\citep{cai2013max, davenport2014} and its recent extension to 1-bit tensor completion~\citep{ghadermarzy2018learning}. The completion problem aims to recover a matrix or tensor from incomplete observations of its entries. The observed entries are highly quantized, sometimes even to a single bit. We first show in Section~\ref{sec:latent} that our Bernoulli tensor model has an equivalent interpretation as the threshold model commonly used in 1-bit quantization. Then, the two methods are compared in Section~\ref{sec:upperbound}. We achieve a faster convergence rate than that in 1-bit tensor completion~\citep{ghadermarzy2018learning}, assuming the signal rank is of constant order. The optimality of our estimator is safeguarded by a matching minimax lower bound.

\emph{Boolean tensor decomposition.} Boolean tensor decomposition~\citep{miettinen2011boolean, erdos2013discovering, rukat2018probabilistic} is a data-driven algorithm that decomposes a binary tensor into binary factors. The idea is to use logical operations to replace arithmetic operations such as addition and multiplication in the factorization. These methods also study binary tensors, same as we do, but they took an empirical approach to approximate a particular data instance. One important difference is that we focus on parameter estimation in a population model. The population interpretation offers useful insight on the effectiveness of dimension reduction. Having a population model allows us to tease apart the algorithmic error versus the statistical error. We numerically compare the two approaches in Section~\ref{sec:empirical}.

\emph{Bayesian binary tensor decomposition.} There have been a number of Bayesian binary tensor decomposition algorithms~\citep{nickel2011three, rai2014scalable, rai2015scalable}. Most of these algorithms focus on the specific context of multi-relational learning. Although we take multi-relational learning as one of our applications, we address a general binary tensor decomposition problem, and we study the statistical properties of the problem, such as the SNR phase diagram and minimax rate. Besides, we provide a frequentist-type solution which is computationally more tractable than a Bayesian one.

\subsection{Our Contributions}
\label{sec:contributions}

The primary goal of this paper is to study both the statistical and computational properties of binary tensor problem. Our contributions are summarized below.

First, we quantify the differences and connections between binary tensor problem and continuous-valued tensor problem. We show that the Bernoulli tensor model \eqref{eq:modelintro} is equivalent to entrywise quantization of a latent noisy, continuous-valued tensor. The impact of latent signal-to-noise ratio (SNR) on the tensor recovery accuracy is characterized, and we identify three different phases for tensor recovery according to SNR; see Table~\ref{tab:comparison} in Section \ref{sec:phase}. When SNR is bounded by a constant, the loss in binary tensor decomposition is comparable to the case of continuous-valued tensor, suggesting very little information has been lost by quantization. On the other hand, when SNR is sufficiently large, stochastic noise turns out to be helpful, and is in fact essential, for estimating the signal tensor. The later effect is related to ``dithering''~\citep{davenport2014} and ``perfect separation''~\citep{albert1984existence} phenomenon, and this is clearly contrary to the behavior of continuous-valued tensor decomposition. 

Second, we propose a method for binary tensor decomposition and establish its statistical properties, including the upper bound and the minimax lower bound on the tensor recovery accuracy. These properties characterize the population optimality of the estimator. Note that, in our problem, the tensor dimensions $(d_1, \ldots, d_K)$ diverge, and so does the number of unknown parameters. As such, the classical maximum likelihood estimation (MLE) theory does not directly apply. We leverage the recent development in random tensor theory and high-dimensional statistics to establish the error bounds of the tensor estimation. The matching information-theoretical lower bounds are correspondingly provided. To our knowledge, these statistical guarantees are among the first for binary tensor decomposition. 

Lastly, we 
propose an alternating optimization algorithm for binary tensor decomposition and establish the 
algorithmic convergence. Our algorithm-dependent error bound reveals an interesting interplay between statistical and computational efficiency. We illustrate the efficacy of our algorithm through both simulations and data applications.

\subsection{Notation and Organization}
\label{sec:organization}

We adopt the following notation throughout the article. We use $\tY=\entry{y_{i_1,\ldots,i_K}}\in\mathbb{F}^{d_1\times \cdots \times d_K}$ to denote an order-$K$ $(d_1,\ldots,d_K)$-dimensional tensor over a filed $\mathbb{F}$. We focus on real or binary tensors, i.e., $\mathbb{F}=\mathbb{R}$ or $\mathbb{F}=\{0,1\}$. The Frobenius norm of $\tY$ is defined as $\FnormSize{}{\tY}=(\sum_{i_1,\ldots,i_K}y_{i_1,\ldots,i_K}^2)^{1/2}$, and the maximum norm of $\tY$ is defined as $\mnormSize{}{\tY}=\max_{i_1,\ldots,i_K}|y_{i_1,\ldots,i_K}|$. We use uppercase letters (e.g., $\Theta$, $\tY$, $\mA$) to denote tensors and matrices, and use lowercase letters (e.g.,\ $\theta$, $\ma$) to denote scales and vectors. The vectorization of tensor $\tY$, denoted $\text{vec}(\tY)$, is defined as the operation rearranging all elements of $\tY$ into a column vector. We use $\ma\otimes \mb$ to denote the kronecker product of vectors $\ma$ and $\mb$, and $\mA\odot\mB$ for the Khatri-Rao product of matrices $\mA$ and $\mB$. We use $\mS^{d-1} = \{\mx \in \mathbb{R}^d\colon \vectornorm{\mx} = 1\}$ to denote the $(d-1)$-dimensional unit sphere, and the shorthand $[n]:=\{1,...,n\}$ to denote the $n$-set for $n \in \mathbb{N}_{+}$. 

The rest of the article is organized as follows. Section~\ref{sec:model} presents the low-rank Bernoulli tensor model, its connection with 1-bit observation model, and the rank-constrained MLE framework. In Section~\ref{sec:statistic-property}, we establish the statistical estimation error bounds and the phase transition phenomenon. We next develop an alternating optimization algorithm and establish its convergence guarantees in Section~\ref{sec:algorithm-property}. We present the simulations in Section~\ref{sec:simulations} and data analyses in Section~\ref{sec:realdata}. All technical proofs are deferred to Section~\ref{sec:proofs} and Appendix~\ref{sec:lemma}. We conclude the paper with a discussion in Section~\ref{sec:conclusion}.

\section{Model}
\label{sec:model}

\subsection{Low-rank Bernoulli Model}
\label{sec:structure}

Let $\tY=\entry{y_{i_1,\ldots,i_K}}\in\{0,1\}^{d_1\times \cdots \times d_K}$ be a binary data tensor. We assume the tensor entries are realizations of independent Bernoulli random variables, such that, for all $(i_1,\ldots,i_K)\in[d_1]\times \cdots \times [d_K]$, 
\begin{equation}\label{eq:model}
\mathbb{P}(y_{i_1,\ldots,i_K}=1)=f(\theta_{i_1,\ldots,i_K}). 
\end{equation}
In this model, $f\colon \mathbb{R}\to[0,1]$ is a strictly increasing function. We further assume that $f(\theta)$ is twice-differentiable in $\theta\in\mathbb{R}/\{0\}$; $f(\theta)$ is strictly increasing and strictly log-concave; and $f'(\theta)$ is unimodal and symmetric with respect to $\theta=0$. All these assumptions are fairly mild. In the context of generalized linear models (GLMs), $f$ is often referred to as the ``inverse link function.'' When no confusion arises, we also call $f$ the ``link function.'' The parameter tensor $\Theta=\entry{\theta_{i_1,\ldots,i_K}}\in\mathbb{R}^{d_1\times \cdots \times d_K}$ is continuous-valued and unknown; it is the main object of interest in our tensor estimation inquiry. The entries of $\tY$ are assumed to be mutually independent conditional on $\Theta$, which is commonly adopted in the literature~\citep{collins2002generalization, de2006principal, lee2010sparse}. Note that this assumption does not rule out the marginal correlations among the entries of $\tY$. 

Furthermore, we assume the parameter tensor $\Theta$ admits a rank-$R$ CP decomposition,
\begin{equation}\label{eq:cp}
\Theta=\sum_{r=1}^R\lambda_r\ma^{(1)}_r\otimes \cdots \otimes \ma^{(K)}_r,
\end{equation}
where $\lambda_1 \geq \ldots \geq \lambda_R>0$ and $\ma^{(k)}_r\in\mathbf{S}^{d_k-1}$, for all $r\in[R]$, $k\in[K]$. Without loss of generality, we assume that $\Theta$ cannot be written as a sum of fewer than $R$ outer products. The CP structure in \eqref{eq:cp} is frequently used in tensor data analysis, and the rank $R$ determines the tradeoff between model complexity and model flexibility. 
For the theory, we assume the true rank $R$ is known; the adaptation to unknown $R$ is addressed in Section~\ref{sec:miss-rank}. The low-rank structure dramatically reduces the number of  parameters in $\Theta$, from the order of $\prod_{k}d_k$ to the order of $\sum_{k}d_k$. More precisely, the effective number of parameters in \eqref{eq:cp} is $p_e=R\left(d_1+d_2\right)-R^2$ for matrices ($K=2$) after adjusting for the nonsingular transformation indeterminacy, and $p_e=R\left(\sum_k d_k-K+1\right)$ for higher-order tensors ($K\geq 3$) after adjusting for the scaling indeterminacy.  

Combining \eqref{eq:model} and \eqref{eq:cp} leads to our low-rank Bernoulli model. We seek to estimate the rank-$R$ tensor $\Theta$ given the observed binary tensor $\tY$. The model can be viewed as a generalization of the classical CP decomposition for continuous-valued tensors to binary tensors, in a way that is analogous to the generalization from a linear model to a GLM. When imposing low-rank structure to a continuous-valued tensor $\tY$ directly, the problem amounts to seeking the best rank-$R$ approximation to $\tY$, in the least-squares sense. The  least-squares criterion is equivalent to the MLE for the low-rank tensor $\Theta$ based on a noisy observation $\tY = \Theta + \tE$, where $\tE \in \mathbb{R}^{d_1\times \cdots \times d_k}$ collects independent and identically distributed (i.i.d.) Gaussian noises. In the next section, we present a close connection between a continuous-valued tensor problem and a binary tensor problem.

\subsection{Latent Variable Model Interpretation} 
\label{sec:latent}

We show that our binary tensor model~\eqref{eq:model} has an equivalent interpretation as the threshold model commonly used in 1-bit quantization~\citep{davenport2014,bhaskar20151,cai2013max,ghadermarzy2018learning}. The later viewpoint sheds light on the nature of the binary (1-bit) measurements from the information perspective.  

Consider an order-$K$ tensor $\Theta=\entry{\theta_{i_1,\ldots,i_K}}\in\mathbb{R}^{d_1\times \cdots \times d_K}$ with a rank-$R$ CP structure. Suppose that we do not directly observe $\Theta$. Instead, we observe the quantized version $\tY=\entry{y_{i_1,\ldots,i_K}}\in\{0,1\}^{d_1\times \cdots \times d_K}$ following the scheme 
\begin{equation}\label{latent}
y_{i_1,\ldots,i_K}=\begin{cases}
1 & \text{if } \theta_{i_1,\ldots,i_K}+\varepsilon_{i_1,\ldots,i_K}\geq 0,\\
0 & \text{if } \theta_{i_1,\ldots,i_K}+\varepsilon_{i_1,\ldots,i_K}<0,
\end{cases}
\end{equation}
where $\tE=\entry{\varepsilon_{i_1,\ldots,i_K}}$ is a noise tensor to be specified later. Equivalently, the observed binary tensor is $\tY=\text{sign}(\Theta+\tE)$, and the associated latent tensor is $\Theta+\tE$. Here the sign function $\text{sign}(x)\stackrel{\text{def}}{=}\mathds{1}_{\{x\geq 0\}}$ is applied to tensors in an element-wise manner. In light of this interpretation, the tensor $\Theta$ serves as an underlying, continuous-valued quantity whose noisy discretization gives $\tY$.

The latent model~\eqref{latent} in fact is equivalent to our Bernoulli tensor model \eqref{eq:model}, if the link $f$ behaves like a cumulative distribution function. Specifically, for any choice of $f$ in \eqref{eq:model}, if we define $\tE$ as having i.i.d.\ entries drawn from a distribution whose cumulative distribution function is $\mathbb{P}(\varepsilon< \theta)=1-f(-\theta)$, then \eqref{eq:model} reduces to \eqref{latent}. Conversely, if we set the link function $f(\theta)=\mathbb{P}(\varepsilon \geq -\theta)$, then model \eqref{latent} reduces to \eqref{eq:model}. Such relationship gives a one-to-one correspondence between the error distribution in the latent model and the link function in the Bernoulli model. We describe three common choices of $f$, or equivalently, the distribution of $\tE$. 

\begin{exmp} (Logistic link/Logistic noise). The logistic model is represented by~\eqref{eq:model} with $f(\theta)=\left(1+e^{-\theta/\sigma}\right)^{-1}$ and the scale parameter $\sigma>0$. Equivalently, the noise $\varepsilon_{i_1,\ldots,i_K}$ in~\eqref{latent} follows i.i.d.\ logistic distribution with the scale parameter $\sigma$. 
\end{exmp}

\begin{exmp} (Probit link/Gaussian noise). The probit model is represented by~\eqref{eq:model} with $f(\theta) = \Phi(\theta/\sigma)$, where $\Phi$ is the cumulative distribution function of a standard Gaussian. Equivalently, the noise $\varepsilon_{i_1,\ldots,i_K}$ in \eqref{latent} follows i.i.d.\ $N(0,\sigma^2)$.
\end{exmp}

\begin{exmp} (Laplacian link/Laplacian noise). The Laplacian model is represented by~\eqref{eq:model} with
\begin{equation}
f(\theta)=\begin{cases}
{1\over 2} \exp \left({\theta \over \sigma}\right), & \text{if } \theta < 0,\\
1-{1\over 2}\exp(-{\theta \over \sigma}),& \text{if } \theta \geq 0,
\end{cases}
\end{equation}
and the scale parameter $\sigma>0$. Equivalently, the noise $\varepsilon_{i_1,\ldots,i_K}$ in \eqref{latent} follows i.i.d.\ Laplace distribution with the scale parameter $\sigma$. 
\end{exmp}

\noindent
The above link functions are common for the Bernoulli model, and the choice is informed by several considerations~\citep{mccullagh1980regression}. The probit is the canonical link based on the Bernoulli likelihood, and it has a direct connection with the log-odds of success. The probit is connected to threshold latent Gaussian tensors. The Laplace has a heavier tail than the normal distribution, and it is more suitable for modeling long-tail data.

\subsection{Rank-constrained Likelihood-based Estimation}

We propose to estimate the unknown parameter tensor $\Theta$ in model~\eqref{eq:model} using a constrained likelihood approach. The log-likelihood function for \eqref{eq:model} is
\begin{align}
\tL_{\tY}(\Theta)  &=  \sum_{i_1,\dots,i_K}\left[ \mathds{1}_{\{ y_{i_1,\ldots,i_K}=1\}}\log f(\theta_{i_1,\ldots,i_K}) 
+ \mathds{1}_{\{ y_{i_1,\ldots,i_K}=0\}}\log \left\{1-f(\theta_{i_1,\ldots,i_K})\right\} \right] \\
&=  \sum_{i_1,\ldots,i_K}\log f\left[(2y_{i_1,\ldots,i_K}-1)\theta_{i_1,\ldots,i_K}\right], 
\end{align}
where the second equality is due to the symmetry of the link function $f$. To incorporate the CP structure \eqref{eq:cp}, we propose a constrained optimization,
\begin{align}\label{eq:MLE}
\MLET=\argmax_{\Theta\in\tD}\tL_{\tY}(\Theta), \quad \text{where} \;\;
\tD \subset \tS=\left\{ \Theta\colon \text{rank} (\Theta)=R, \textrm{ and } \mnormSize{}{\Theta} \leq \alpha  \right\},
\end{align}
for a given rank $R\in\mathbb{N}_{+}$ and a bound $\alpha\in\mathbb{R}_{+}$. Here the search space $\tD$ is assumed to be a compact set containing the true parameter $\Theta_{\text{true}}$. The candidate tensor of our interest satisfies two constraints. The first is that $\Theta$ admits the CP structure \eqref{eq:cp} with rank $R$. As discussed in Section~\ref{sec:structure}, the low-rank structure \eqref{eq:cp} is an effective dimension reduction tool in tensor data analysis. The second constraint is that all the entries of $\Theta$ are bounded in absolute value by a constant $\alpha\in\mathbb{R}_{+}$. We refer to $\alpha$ as the ``signal'' bound of $\Theta$.  This maximum-norm condition is a technical assumption to aid the recovery of $\Theta$ in the noiseless case. Similar techniques have been employed for the matrix case~\citep{davenport2014,bhaskar20151,cai2013max}.

In the next section, we first investigate the statistical error bounds for the global optimizer $\MLET$. These bounds characterize the population behavior of the global estimator and weave three quantities: tensor dimension, rank, and signal-to-noise ratio. We then compare these properties to the information-theoretical bound and reveal a phase-transition phenomenon. In Section~\ref{sec:algorithm-property}, we develop a specific algorithm for the optimization problem in \eqref{eq:MLE}, and we derive the convergence properties of the empirical estimator resulting from this algorithm.

\section{Statistical Properties}
\label{sec:statistic-property}

\subsection{Performance Upper Bound}
\label{sec:upperbound}

We define two quantities $L_\alpha$ and $\gamma_\alpha$ to control the ``steepness'' and ``convexity'' of the link function $f$. Let 
\begin{equation}
L_\alpha=\sup_{|\theta|\leq \alpha}\left\{{ \dot{f}(\theta) \over f(\theta) \left(1-f(\theta) \right) }\right\}, \quad\text{and}\quad
\gamma_\alpha = \inf_{|\theta|\leq \alpha}\left\{ {\dot{f}^2 (\theta)\over f^2(\theta)} -{\ddot{f}(\theta)\over f(\theta)} \right\},
\end{equation}
where $\dot{f}(\theta)=df(\theta)/d\theta$, and $\alpha$ is the bound on the entrywise magnitude of $\Theta$. When $\alpha$ is a fixed constant and $f$ is a fixed function, all these quantities are bounded by some fixed constants independent of the tensor dimension. In particular, for the logistic, probit and Laplacian models, we have 
\begin{align}
\textrm{Logistic model: } \quad & L_\alpha={ 1\over \sigma}, \quad \gamma_\alpha={ e^{\alpha/\sigma} \over (1+e^{\alpha/\sigma})^2 \sigma^2}, \\
\textrm{Probit model: } \quad & L_\alpha\leq {2\over \sigma}\left({\alpha\over \sigma}+1\right), \quad \gamma_\alpha\geq {1\over \sqrt{2\pi}\sigma^2} \left({\alpha\over \sigma}+{1\over 6}\right)e^{-x^2/\sigma^2},\\
\textrm{Laplacian model: }\quad & L_\alpha \leq {2\over \sigma},\quad \gamma_\alpha\geq {e^{-\alpha/\sigma}\over 2\sigma^2}.
\end{align}

We assess the estimation accuracy using the deviation in Frobenius norm. For the true coefficient tensor $\trueT\in\mathbb{R}^{d_1\times\cdots\times d_K}$ and its estimator $\hat\Theta$, define 
\begin{equation}
\text{Loss}(\hat \Theta, \trueT) = {1\over \sqrt{\prod_{k}d_k}}\FnormSize{}{\hat \Theta-\trueT}.
\end{equation}

The next theorem establishes the upper bound for $\MLET$ under model \eqref{eq:model}.

\begin{thm}[Statistical convergence] \label{thm:rate}
Suppose $\tY\in\{0,1\}^{d_1\times\dots\times d_K}$ is an order-$K$ binary tensor following model~\eqref{eq:model} with the link function $f$ and the true coefficient tensor $\trueT\in\tD$. Let $\MLET$ be the constrained MLE in~\eqref{eq:MLE}. Then, there exists an absolute constant $C_1>0$, and a constant $C_2>0$ that depends only on $K$, such that, with probability at least $1-\exp\left( -C_1 \log K \sum_k d_k \right)$,
\begin{equation}\label{eq:bound}
\text{Loss}(\MLET, \trueT) \leq \min\left( 2\alpha,\ {C_2 L_\alpha\over  \gamma_\alpha} \sqrt{ R^{K-1} \sum_kd_k\over  \prod_k d_k} \right). 
\end{equation}
\end{thm}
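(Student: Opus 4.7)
The plan is to combine the standard M-estimation basic inequality with a concentration bound for the score process restricted to low-rank tensors. The first branch of the min is trivial: since both $\MLET$ and $\trueT$ lie in the max-norm ball $\mnormSize{}{\cdot} \leq \alpha$, every entry of the difference is at most $2\alpha$, so the normalized Frobenius loss inherits this bound. The real content lies in the second branch.

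The starting point is a second-order Taylor expansion of $\tL_\tY$ around $\trueT$. Strict log-concavity of $f$ together with the definition of $\gamma_\alpha$ makes the per-entry negative log-likelihood $\gamma_\alpha$-strongly convex on the segment joining $\MLET$ and $\trueT$, so
\begin{equation}
0 \;\leq\; \tL_\tY(\MLET) - \tL_\tY(\trueT) \;\leq\; \bigl\langle \nabla \tL_\tY(\trueT),\, \MLET - \trueT \bigr\rangle \;-\; \tfrac{\gamma_\alpha}{2}\,\FnormSize{}{\MLET - \trueT}^2,
\end{equation}
where the first inequality is the optimality of $\MLET$ on $\tD \ni \trueT$. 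Rearranging yields $\tfrac{\gamma_\alpha}{2}\FnormSize{}{\Delta}^2 \leq \langle S, \Delta\rangle$ with $\Delta = \MLET - \trueT$ and score tensor $S = \nabla \tL_\tY(\trueT)$. Entrywise, $S_{i_1,\ldots,i_K} = \tfrac{\dot f(\theta^\ast)}{f(\theta^\ast)(1-f(\theta^\ast))}\bigl(y_{i_1,\ldots,i_K} - f(\theta^\ast)\bigr)$ with $\theta^\ast = (\trueT)_{i_1,\ldots,i_K}$, so the entries of $S$ are mean zero, mutually independent, and uniformly bounded by $L_\alpha$ (using $|y-f(\theta)|\leq 1$).

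The core step is a rank-restricted concentration bound. Since $\MLET$ and $\trueT$ both have rank $R$, their difference satisfies $\text{rank}(\Delta) \leq 2R$, hence
\begin{equation}
\langle S, \Delta\rangle \;\leq\; \FnormSize{}{\Delta} \cdot \sup_{\substack{\text{rank}(\Theta)\,\leq\, 2R \\ \FnormSize{}{\Theta}\,\leq\, 1}} |\langle S, \Theta\rangle|.
\end{equation}
I would control the supremum with an $\epsilon$-net argument over the CP parameterization of rank-$\leq 2R$ tensors. Parameterizing by factors $\{\ma^{(k)}_r \in \mathbf{S}^{d_k-1}\}$ plus weights gives a manifold of dimension $O(R\sum_k d_k)$, so the covering number at scale $\epsilon$ in Frobenius distance is $\exp(CR\sum_k d_k \log(1/\epsilon))$. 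Boundedness of $S$ gives a Hoeffding/subgaussian tail on each fixed inner product, and a union bound over the net yields a supremum of order $L_\alpha\sqrt{R^{K-1}\sum_k d_k}$ with probability at least $1 - \exp(-C_1 \log K \sum_k d_k)$. The extra $R^{K-1}$ factor beyond the naive $\sqrt{\sum_k d_k}$ reflects the ill-conditioning intrinsic to CP decompositions in order $K\geq 3$: controlling the magnitude of individual rank-one summands by $\FnormSize{}{\Theta}$ costs a factor $R^{(K-2)/2}$, which enters the variance proxy squared.

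Combining the two displays gives $\gamma_\alpha \FnormSize{}{\Delta}^2 \leq 2 L_\alpha \FnormSize{}{\Delta}\, C\sqrt{R^{K-1}\sum_k d_k}$, and dividing through by $\FnormSize{}{\Delta}$ and then by $\sqrt{\prod_k d_k}$ yields the stated rate. The principal obstacle I expect is the rank-constrained supremum: unlike the matrix case, where nuclear/spectral duality plus matrix Bernstein cleanly delivers a $\sqrt{d_1+d_2}$ rate, for tensors one must simultaneously track the CP parameter count (for the net size) and the relation between the Frobenius norm of $\Delta$ and the magnitudes of its rank-one constituents (for the tail), and it is the interaction of these two that produces the exact $R^{K-1}$ scaling. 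A peeling argument over the scale of $\FnormSize{}{\Delta}$ may be needed if the direct net bound does not tighten to the claimed rate, and replacing Hoeffding by Bernstein could sharpen constants by exploiting the small variance of Bernoulli scores near the boundary $|\theta|=\alpha$.
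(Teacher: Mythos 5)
Your overall skeleton---the basic inequality from the second-order Taylor expansion with curvature $\gamma_\alpha$, the score tensor $S$ with independent, mean-zero entries bounded by $L_\alpha$, the observation $\mathrm{rank}(\Delta)\leq 2R$, and the reduction to a rank-restricted supremum of $\langle S,\Theta\rangle$---coincides with the paper's proof up to the key concentration step. But that step, as you propose it, has a genuine gap. The set $\{\Theta\colon \mathrm{rank}(\Theta)\leq 2R,\ \FnormSize{}{\Theta}\leq 1\}$ is \emph{not} the image of a bounded set of CP parameters: for $K\geq 3$ the set of rank-$\leq r$ tensors is not closed, and a tensor of unit Frobenius norm may admit only CP decompositions whose rank-one summands have arbitrarily large magnitude. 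For instance, $\Theta_t=t\,\ma^{\otimes 3}-t\,(\ma+\mv/t)^{\otimes 3}$ has $\FnormSize{}{\Theta_t}=O(1)$ and exact rank two, yet (by Kruskal uniqueness) its only rank-two decomposition has weights of order $t$, which is unbounded as $t\to\infty$. Consequently your claim that the summand magnitudes are controlled by $R^{(K-2)/2}\FnormSize{}{\Theta}$ is false, the CP parameterization is not a Lipschitz cover of the constraint set, and the covering-number bound $\exp(CR\sum_k d_k\log(1/\epsilon))$ together with the union bound does not go through as stated: the net you build simply misses part of the set over which $\Delta$ can range.

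The paper avoids this entirely by splitting the linear term via duality rather than covering the low-rank set directly: $|\langle S,\Delta\rangle|\leq \normSize{}{S}\,\nnormSize{}{\Delta}$ (Lemma~\ref{lem:inq}), the deterministic norm comparison $\nnormSize{}{\Delta}\leq (2R)^{(K-1)/2}\FnormSize{}{\Delta}$ valid for every tensor of rank at most $2R$ (Lemma~\ref{lem:nuclear}, proved through matricization and Tucker rank), and the spectral-norm concentration $\normSize{}{S}\leq C L_\alpha\sqrt{\sum_k d_k}$ for bounded-entry random tensors (Lemmas~\ref{lem:tensor} and~\ref{lem:noisytensor}), where the underlying net is over rank-one \emph{unit} tensors, i.e.\ products of spheres, which is well-conditioned. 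Thus the $R^{(K-1)/2}$ factor comes from a norm-comparison inequality, not from a covering-number count, and no peeling is needed. If you insist on the empirical-process formulation, you could replace CP rank by multilinear (Tucker) rank at most $2R$ in each mode, which contains the CP ball and has a well-posed core-plus-orthonormal-factor parameterization, at the cost of an extra $(2R)^K$ in the parameter count; or simply note that your supremum is bounded by $(2R)^{(K-1)/2}\normSize{}{S}$ via the paper's two lemmas. Your remaining steps (the trivial $2\alpha$ branch, the curvature bound on the quadratic term, and the final rearrangement) match the paper.
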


\noindent
Note that $f$ is strictly log-concave if and only if $\ddot{f}(\theta)f(\theta)< \dot{f}(\theta)^2$~\citep{boyd2004convex}. Henceforth, $\gamma_\alpha>0$ and $L_\alpha>0$, which ensures the validity of the bound in \eqref{eq:bound}. 

In fact, the proof of Theorem~\ref{thm:rate} (see Section~\ref{sec:proofs}) shows that the statistically optimal rate holds, not only for the MLE $\hat \Theta_{\text{MLE}}$, but also for any estimators $\hat \Theta$ in the level set $\left\{ \hat\Theta \in \tD\colon \tL_\tY(\hat \Theta)\geq \tL_{\tY}(\trueT) \right\}$. 

To compare our upper bound to existing results in literature, we consider a special setting where the dimensions are the same in all modes; i.e., $d_1=\cdots=d_K=d$. In such a case, our bound \eqref{eq:bound} reduces to
\begin{equation}\label{eq:ours}
\text{Loss}(\MLET, \trueT) \leq \tO\left( 1 \over d^{(K-1)/2}\right), \text{ as }d\to \infty,
\end{equation}
for a fixed rank $R$ and a fixed signal bound $\alpha$. The MLE thus achieves consistency with polynomial convergence rate. Our bound has a faster convergence rate than that in 1-bit tensor recovery~\citep{ghadermarzy2018learning},  
\begin{equation}
\text{Loss}(\hat \Theta, \trueT) \leq \tO\left(1 \over d^{(K-1)/4}\right), \text{ as }d\to \infty.
\end{equation}
The rate improvement comes from the fact that we impose an exact low-rank structure on $\Theta$, whereas \cite{ghadermarzy2018learning} employed the max norm as a surrogate rank measure. 

Our bound also generalizes the previous results on low-rank binary matrix completion. The convergence rate for rank-constrained matrix completion is $\tO(1/\sqrt{d})$~\citep{bhaskar20151}, which fits into our special case when $K=2$. Intuitively, in the tensor data analysis problem, we can view each tensor entry as a data point, and sample size is the total number of entries. A higher tensor order has a larger number of data points and thus exhibits a faster convergence rate as $d\to \infty$.  

We compare the results~\eqref{eq:ours} to the scenario if we apply the matrix version of binary decomposition to a tensor by unfolding the tensor into a matrix. The ``best'' matricization solution that unfolds a tensor into a near-square matrix~\citep{mu2014square} gives a convergence rate $\tO(d^{-{\lfloor{K/2\rfloor} \over 2}})$, with $\lfloor{K/2\rfloor}$ being the integer part of $K/2$. The gap between the rates highlights the importance of decomposition that specifically takes advantage of the multi-mode structure in tensors.

As an immediate corollary of Theorem~\ref{thm:rate}, we obtain the explicit form of the upper bound \eqref{eq:bound} when the link $f$ is a logistic, probit, or Laplacian function. 

\begin{cor}
Assume the same setup as in Theorem~\ref{thm:rate}. There exists an absolute constant $C'>0$ such that with probability at least $1-\exp\left( -C' \log K \sum_k d_k \right)$,
\begin{equation} \label{eq:probit}
\text{Loss}(\MLET, \trueT) \leq  \min\left\{2\alpha,\ C(\sigma, \alpha) \sqrt{ R^{K-1} \sum_kd_k\over  \prod_k d_k}\right\},
\end{equation}
where $C(\alpha,\sigma)$ is a scaler factor,
\[
C(\alpha, \sigma)=\begin{cases}
C_1 \sigma \left( 2+ e^{\alpha\over \sigma}+ e^{-{\alpha\over \sigma}}\right)& \text{for the logistic link},\\
C_2 \sigma\left(\displaystyle {\alpha+\sigma\over 6\alpha+\sigma}\right)e^{\alpha^2 \over \sigma^2}& \text{for the probit link},\\
C_3 \alpha e^{ \alpha \over \sigma} & \text{for the Laplacian link},
\end{cases}
\]
and $C_1, C_2, C_3>0$ are constants that depend only on $K$.
\end{cor}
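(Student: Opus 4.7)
The plan is to specialize Theorem~\ref{thm:rate} by substituting the link-specific values of $L_\alpha$ and $\gamma_\alpha$ into the general bound~\eqref{eq:bound}. The probabilistic statement---the event on which the bound holds and its failure probability---is inherited verbatim from Theorem~\ref{thm:rate}, so the only new content is a deterministic, link-by-link calculation that isolates the dependence on $\alpha$ and $\sigma$ inside the multiplicative constant.

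First I would verify the three pairs $(L_\alpha,\gamma_\alpha)$ already displayed in Section~\ref{sec:upperbound} just before this corollary. A useful reformulation is $\gamma_\alpha=-\inf_{|\theta|\leq\alpha}(\log f)''(\theta)$, which makes log-concavity of $f$ visible and collapses the two derivatives into a single quantity. For the logistic link, direct differentiation of $f(\theta)=1/(1+e^{-\theta/\sigma})$ gives the exact identities $\dot f=f(1-f)/\sigma$ and $-(\log f)''=f(1-f)/\sigma^2$; the infimum of $f(1-f)$ on $|\theta|\leq\alpha$ is attained at the boundary by symmetry and unimodality, yielding the displayed closed forms for $L_\alpha$ and $\gamma_\alpha$. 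For the Laplacian link, the density $\dot f(\theta)=(2\sigma)^{-1}e^{-|\theta|/\sigma}$ is piecewise exponential, and the elementary bound $\min\{f(\theta),1-f(\theta)\}\geq\tfrac12 e^{-\alpha/\sigma}$ on $|\theta|\leq\alpha$ gives the stated inequalities for $L_\alpha$ and $\gamma_\alpha$ essentially by inspection.

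The probit case is the most delicate piece. Since $f=\Phi(\cdot/\sigma)$ has no closed form, I would use the standard Mills-ratio inequality $\Phi(-t)\geq\phi(t)/(t+1)$, the identity $\phi'(t)=-t\phi(t)$, and log-concavity of $\Phi$ to bound $\dot f/[f(1-f)]$ above by a multiple of $(|\theta|/\sigma+1)/\sigma$, producing the stated $L_\alpha$. The bound on $\gamma_\alpha$ follows similarly from expanding $(\log\Phi)''$ in terms of $\phi/\Phi$ and applying the same Mills-ratio estimate; this is where the Gaussian tail factor $e^{-\alpha^2/\sigma^2}$ appears after evaluating the worst case at $|\theta|=\alpha$.

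Finally, once $L_\alpha$ and $\gamma_\alpha$ are in hand for each link, I would form the ratio $L_\alpha/\gamma_\alpha$ and simplify: for the logistic link this uses the algebraic identity $(1+e^{\alpha/\sigma})^2/e^{\alpha/\sigma}=2+e^{\alpha/\sigma}+e^{-\alpha/\sigma}$; for the Laplacian it collapses to a scalar multiple of $\alpha e^{\alpha/\sigma}$; and for the probit it yields the displayed rational-times-Gaussian factor. Substituting into~\eqref{eq:bound} and absorbing numerical factors into $C_1,C_2,C_3$ (which retain the $K$-dependence inherited from the constant $C_2$ in Theorem~\ref{thm:rate}) completes the argument, with the stated probability $1-\exp(-C'\log K \sum_k d_k)$ carried over unchanged. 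The main obstacle is the probit step, but it is routine once the Mills-ratio bound is invoked.
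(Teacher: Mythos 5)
Your proposal matches the paper's (implicit) argument exactly: the corollary is treated as immediate from Theorem~\ref{thm:rate} by substituting the link-specific values of $L_\alpha$ and $\gamma_\alpha$ displayed in Section~\ref{sec:upperbound} into the bound \eqref{eq:bound} and simplifying the ratio $L_\alpha/\gamma_\alpha$, with the probabilistic statement carried over unchanged. One small caveat: for the Laplacian link the displayed bounds give $L_\alpha/\gamma_\alpha \le 4\sigma e^{\alpha/\sigma}$, so your claim that the ratio ``collapses to a scalar multiple of $\alpha e^{\alpha/\sigma}$'' (matching the stated $C_3\,\alpha e^{\alpha/\sigma}$) papers over a $\sigma$-versus-$\alpha$ discrepancy that originates in the paper's own statement rather than in your method.
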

\noindent
The dependency of the above error bounds on the signal bound $\alpha$ and the noise level $\sigma$ will be discussed in Section~\ref{sec:phase}.

\subsection{Information-theoretical Lower Bound}
\label{sec:information}

We next establish two lower bounds. The first lower bound is for all statistical estimators $\hat\Theta$, including but not limited to the estimator $\MLET$ in~\eqref{eq:MLE}, under the binary tensor model \eqref{eq:model}. The result is based on the information theory and is thus algorithm-independent. We show that this lower bound nearly matches the upper bound on the estimation accuracy of $\MLET$, thereby implying the rate optimality of $\MLET$. 

With a little abuse of notation, we use $\tD(R,\alpha)$ to denote the set of tensors with the  rank bounded by $R$ and the maximum norm bounded by $\alpha$. The next theorem establishes this first lower bound for all estimators $\hat\Theta$ in $\tD(R,\alpha)$ under the model~\eqref{eq:model}.

\begin{thm}[Minimax lower bound for binary tensors]\label{thm:minimax}
Suppose $\tY \in \{0,1\}^{d_1\times\dots\times d_K}$ is an order-$K$ binary tensor generated from the model $\tY=\text{sign}(\trueT+\tE)$, where $\trueT\in\tD(R,\alpha)$ is the true parameter tensor and $\tE$ is a noise tensor of i.i.d.\ Gaussian entries. Suppose that $R\leq \min_k d_k$ and the dimension $\max_k d_k \geq 8$. Let $\text{inf}_{\hat \Theta}$ denote the infimum over all estimators $\hat \Theta \in \tD(R, \alpha)$ based on the binary tensor observation $\tY$. Then, there exist absolute constants $\beta_0\in(0,1)$ and $c_0>0$, such that
\begin{equation}\label{eq:lower}
\inf_{\hat \Theta }\sup_{\trueT\in\tD(R,\alpha)} \mathbb{P}\left\{ \text{Loss}(\hat \Theta, \trueT)  \geq c_0 \min\left( \alpha, \ \sigma{\sqrt{Rd_{\max} \over \prod_k d_k}} \right) \right\} \geq \beta_0.
\end{equation}
\end{thm}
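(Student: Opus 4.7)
The plan is to invoke Fano's method on a Varshamov--Gilbert packing of rank-$R$ tensors in $\tD(R,\alpha)$. Under the sign-threshold model with i.i.d.\ Gaussian noise, each entry satisfies $y_{i_1,\ldots,i_K}\sim\mathrm{Ber}(\Phi(\theta_{i_1,\ldots,i_K}/\sigma))$, so the problem reduces to parametric estimation with $\prod_k d_k$ Bernoulli observations indexed by a low-rank parameter. I will build a rich packing that is well separated in Frobenius norm while keeping pairwise KL divergences small, and then read the lower bound off Fano.

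The key device is the packing. Assume without loss of generality that $d_1=d_{\max}$, partition $[d_2]$ into $R$ disjoint blocks $B_1,\ldots,B_R$ of near-equal size $\lfloor d_2/R\rfloor$ (possible since $R\leq\min_k d_k$), and for each sign pattern $\omega=(\omega^{(1)},\ldots,\omega^{(R)})\in\{-1,+1\}^{Rd_1}$ set
$$\Theta(\omega)=\delta\sum_{r=1}^R \omega^{(r)}\otimes\bm{1}_{B_r}\otimes\bm{1}_{d_3}\otimes\cdots\otimes\bm{1}_{d_K},$$
where $\bm{1}_{B_r}\in\mathbb{R}^{d_2}$ is the indicator vector of $B_r$ and $\delta>0$ will be tuned later. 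Since the $B_r$'s are disjoint, every entry of $\Theta(\omega)$ is $\pm\delta$, so $\|\Theta(\omega)\|_\text{max}=\delta$ and the tensor is a sum of $R$ rank-one outer products; hence $\Theta(\omega)\in\tD(R,\alpha)$ whenever $\delta\leq\alpha$. The Varshamov--Gilbert lemma (using $d_{\max}\geq 8$) yields $\Omega\subset\{-1,+1\}^{Rd_1}$ of cardinality $M\geq 2^{Rd_1/8}$ with pairwise Hamming distance at least $Rd_1/4$. A direct entrywise calculation, exploiting that each sign flip alters $\lfloor d_2/R\rfloor\prod_{k\geq 3}d_k$ entries, then gives $\|\Theta(\omega)-\Theta(\omega')\|_F^2\geq\delta^2\prod_k d_k$ for all distinct $\omega,\omega'\in\Omega$.

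For the KL upper bound, the constraint $|\theta_{i_1,\ldots,i_K}|\leq\alpha$ keeps all Bernoulli parameters $\Phi(\theta/\sigma)$ uniformly bounded away from $\{0,1\}$ by a quantity depending only on $\alpha/\sigma$. Combining $|\Phi(a)-\Phi(b)|\leq|a-b|/\sqrt{2\pi}$ with $\mathrm{KL}(\mathrm{Ber}(p)\|\mathrm{Ber}(p'))\leq(p-p')^2/[p'(1-p')]$ yields an entrywise estimate $\mathrm{KL}\leq C(\alpha,\sigma)(\theta-\theta')^2/\sigma^2$. Summing over entries and using the trivial upper bound $\|\Theta(\omega)-\Theta(\omega')\|_F^2\leq 4\delta^2\prod_k d_k$ produces
$$\mathrm{KL}\bigl(P_{\Theta(\omega)}\,\big\|\,P_{\Theta(\omega')}\bigr)\;\leq\;\frac{4C(\alpha,\sigma)\,\delta^2\prod_k d_k}{\sigma^2}.$$

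Choose $\delta=\min\bigl(\alpha,\;c\sigma\sqrt{Rd_{\max}/\prod_k d_k}\bigr)$ with $c>0$ small enough that this KL is at most $\tfrac{1}{4}\log M=Rd_1/32$. Fano's inequality then delivers an absolute $\beta_0\in(0,1)$ satisfying
$$\inf_{\hat\Theta}\sup_{\omega\in\Omega}\mathbb{P}\Bigl(\|\hat\Theta-\Theta(\omega)\|_F\geq\tfrac{\delta}{2}\sqrt{\textstyle\prod_k d_k}\Bigr)\;\geq\;\beta_0,$$
which is exactly the stated lower bound on $\mathrm{Loss}(\hat\Theta,\trueT)$ with $c_0=c/2$. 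The main obstacle is engineering a single packing that simultaneously attains the right log-cardinality ($\asymp Rd_1$), the right Frobenius separation ($\delta\sqrt{\prod_k d_k}$), and the rank-and-max-norm constraints defining $\tD(R,\alpha)$; the disjoint block-partition on mode $2$ is the device that reconciles all three scalings, and the $\min(\alpha,\cdot)$ form of the final bound emerges naturally because either $\delta\leq\alpha$ or the KL-versus-log-cardinality inequality becomes the binding constraint on $\delta$.
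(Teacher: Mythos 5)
Your proposal is correct and follows essentially the same route as the paper's proof: a Varshamov--Gilbert block packing inside $\tD(R,\alpha)$ with entrywise magnitude $\delta=\min\bigl(\alpha,\,c\sigma\sqrt{Rd_{\max}/\prod_k d_k}\bigr)$, an entrywise Bernoulli KL bound, and a Fano-type argument (the paper invokes Tsybakov's Theorem 2.5 with KL measured against the zero-tensor reference, while you use pairwise KL; the two are interchangeable up to constants). One small refinement: the uniform control of the Bernoulli parameters away from $0$ and $1$ should be deduced from $\delta\leq c\sigma$ (so packing entries satisfy $|\theta|/\sigma\leq c$), not from $|\theta|\leq\alpha$; this is what makes $c_0$ and $\beta_0$ absolute constants as the theorem asserts.
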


\noindent 
Here we only present the result for the probit model, while similar results can be obtained for the logistic and Laplacian models. In this theorem, we assume that $R \leq \min_kd_k$. This condition is automatically satisfied in the matrix case, since the rank of a matrix is always bounded by its row and column dimension. For the tensor case, this assertion may not always hold. However, in the most applications, the tensor rank is arguably smaller than its dimension. We view this as a mild condition. Note that the earlier Theorem~\ref{thm:rate} places no constraint on the rank $R$. In Section~\ref{sec:empirical}, we will assess the empirical performance when the rank exceeds dimension.   

We next compare the lower bound~\eqref{eq:lower} to the upper bound~\eqref{eq:probit}, as the tensor dimension $d_k \to \infty$ while the signal bound $\alpha$ and the noise level $\sigma$ are fixed. Since $d_{\max}\leq \sum_k d_k \leq Kd_{\max}$, both the bounds are of the form $C \sqrt{d_{\max}} \left( \prod_{k}d_k\right)^{-1/2}$, where $C$ is a factor that does not depend on the tensor dimension. Henceforth, our estimator $\MLET$ is rate-optimal. 

The second lower bound is for all estimators $\tilde\Theta$ based on the ``unquantized" observation $(\Theta+\tE)$, which enables the evaluation of information loss due to binary quantization $\tY=\text{sign}(\Theta+\tE)$. Recall that Section~\ref{sec:latent} introduces a latent variable view of binary tensor model as an entrywise quantization of a noisy continuous-valued tensor. We seek an estimator $\tilde\Theta$ by ``denoising'' the continuous-valued observation $(\Theta+\tE)$. The lower bound is obtained via an information-theoretical argument and is again applicable to all estimators $\tilde\Theta \in \tD(R,\alpha)$. 

\begin{thm}[Minimax lower bound for continuous-valued tensors] \label{thm:ratereal}
Suppose $\tilde \tY \in \mathbb{R}^{d_1\times\dots\times d_K}$ is an order-$K$ continuous-valued tensor generated from the model $\tilde\tY=\trueT+\tE$, where $\trueT\in\tD(R,\alpha)$ is the true parameter tensor and $\tE$ is a noise tensor of i.i.d.\ Gaussian entries. Suppose that $R\leq \min_kd_k$ and $\max_k d_k \geq 8$. Let $\text{inf}_{\hat \Theta}$ denote the infimum over all estimators $\tilde \Theta \in \tD(R, \alpha)$ based on the continuous-valued tensor observation $\tilde \tY$. Then, there exist absolute constants $\beta_0\in(0,1)$ and $c_0>0$ such that
\begin{equation}\label{eq:lowerreal}
\inf_{\tilde \Theta }\sup_{\trueT\in\tD(R,\alpha)}  \mathbb{P}\left\{ \text{Loss}(\tilde \Theta, \trueT)\geq c_0 \min\left( \alpha, \sigma \sqrt{Rd_{\max}\over \prod_k d_k} \right) \right\} \geq \beta_0.
\end{equation}
\end{thm}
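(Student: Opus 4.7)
The plan is to prove the lower bound by reducing the estimation problem to multiple hypothesis testing and applying Fano's lemma. The model $\tilde\tY = \trueT + \tE$ with i.i.d.\ Gaussian noise makes the KL divergence between the observation distributions cleanly equal to $\tfrac{1}{2\sigma^2}\FnormSize{}{\trueT_1 - \trueT_2}^2$, so the whole proof reduces to constructing a sufficiently rich and well-separated packing subset $\mathcal{M}\subset \tD(R,\alpha)$.

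For the packing, I would assume without loss of generality that $d_1 = d_{\max}$ and consider tensors of the form
\begin{equation}
\Theta_{\mu} = \delta \sum_{r=1}^{R} \mu_r \otimes \me_r^{(2)} \otimes \cdots \otimes \me_r^{(K)},
\end{equation}
where $\mu_r \in \{-1,+1\}^{d_1}$ and $\me_r^{(k)}$ is the $r$-th canonical basis vector of $\mathbb{R}^{d_k}$. The assumption $R\le \min_k d_k$ ensures that the $\me_r^{(k)}$ are distinct and orthogonal, which immediately makes $\text{rank}(\Theta_\mu) = R$; the construction also gives $\mnormSize{}{\Theta_\mu} = \delta$, so choosing $\delta \le \alpha$ places every $\Theta_\mu$ in $\tD(R,\alpha)$. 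By the Varshamov--Gilbert bound, one obtains a subset of sign vectors $\mathcal{M}\subset \{-1,+1\}^{Rd_1}$ with $|\mathcal{M}|\ge 2^{Rd_1/8}$ whose pairwise Hamming distance exceeds $Rd_1/8$. Because the supports of the different rank-one components are disjoint as sub-tensors (indexed by $(\me_r^{(2)},\ldots,\me_r^{(K)})$), one gets the clean pairwise bounds
\begin{equation}
\tfrac{1}{2}\delta^2 R d_1 \;\le\; \FnormSize{}{\Theta_\mu - \Theta_{\mu'}}^2 \;\le\; 4\delta^2 R d_1 \qquad \text{for all } \mu\ne \mu' \in \mathcal{M}.
\end{equation}

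Plugging the upper bound into the Gaussian KL formula gives $\max_{\mu\neq\mu'} KL \le 2\delta^2 R d_1/\sigma^2$, while $\log|\mathcal{M}|\ge (Rd_1/8)\log 2$. Fano's inequality then says the minimax testing error stays bounded away from $0$ provided one chooses $\delta \asymp \min(\alpha,\sigma)$. Translating the testing lower bound into an estimation lower bound via the separation inequality above yields
\begin{equation}
\text{Loss}(\tilde\Theta,\trueT) \;\ge\; \tfrac{1}{2}\frac{\delta\sqrt{Rd_1/2}}{\sqrt{\prod_k d_k}} \;\gtrsim\; \min\!\left(\alpha,\ \sigma\sqrt{\frac{Rd_{\max}}{\prod_k d_k}}\right)
\end{equation}
with probability bounded below by some absolute $\beta_0 \in (0,1)$, which is the claimed bound. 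The condition $\max_k d_k \ge 8$ is used to absorb the additive $\log 2$ term in Fano's inequality into the leading $\log|\mathcal{M}|$.

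The routine part is the KL computation and the Fano/Gilbert--Varshamov bookkeeping; the genuinely delicate step is constructing a packing that simultaneously (i) sits inside the rank-$R$ variety, (ii) respects the max-norm bound $\alpha$, and (iii) captures the $\sqrt{Rd_{\max}/\prod_k d_k}$ scaling that will match the upper bound of Theorem~\ref{thm:rate}. The trick of fixing orthogonal canonical basis vectors on modes $2,\ldots,K$ and using sign perturbations only on the longest mode is what makes these three requirements compatible, and it is also why the condition $R \le \min_k d_k$ is essential. Once the packing is in hand, the remainder of the argument is standard, and an identical scheme with $d_{\max}$ replaced by any $d_k$ shows the bound cannot be sharpened beyond the $d_{\max}$ rate.
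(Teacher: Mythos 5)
Your overall strategy (a Varshamov--Gilbert packing combined with a Fano-type argument and the exact Gaussian KL formula) is the same as the paper's, which applies Tsybakov's Theorem 2.5 to the packing built in Lemma~\ref{lem:construction}. The genuine gap is in your packing construction. Because you place the $R$ sign-perturbed fibers on disjoint canonical-basis positions of modes $2,\ldots,K$, every $\Theta_\mu$ is supported on only $Rd_1$ of the $\prod_k d_k$ entries (with $d_1=d_{\max}$). Consequently the Frobenius separation is capped at $2\delta\sqrt{Rd_1}$, where $\delta\le\alpha$ is forced by the max-norm constraint and $\delta\lesssim\sigma$ by the KL/Fano condition, so the best loss separation your construction can certify is of order $\min(\alpha,\sigma)\sqrt{Rd_1/\prod_k d_k}$. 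This does not match the claimed rate $\min\bigl(\alpha,\ \sigma\sqrt{Rd_{\max}/\prod_k d_k}\bigr)$ whenever $\alpha<\sigma$: in the regime $\alpha\le\sigma\sqrt{Rd_{\max}/\prod_k d_k}$ the theorem requires a lower bound of order $\alpha$, but your separation gives only $\alpha\sqrt{Rd_1/\prod_k d_k}\ll\alpha$; and in the intermediate regime $\sigma\sqrt{Rd_{\max}/\prod_k d_k}\le\alpha<\sigma$ you fall short by the factor $\alpha/\sigma$. Your final display asserting $\delta\sqrt{Rd_1/2}\big/\bigl(2\sqrt{\prod_k d_k}\bigr)\gtrsim\min\bigl(\alpha,\sigma\sqrt{Rd_{\max}/\prod_k d_k}\bigr)$ with $\delta\asymp\min(\alpha,\sigma)$ is therefore false in those regimes: the $\alpha$ branch of the minimum simply cannot be produced by a packing whose elements occupy only $Rd_1$ cells, since under $\mnormSize{}{\Theta}\le\alpha$ such tensors can never be $\asymp\alpha\sqrt{\prod_k d_k}$ apart in Frobenius norm.

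The fix is exactly what the paper's Lemma~\ref{lem:construction} does: keep a $d_1\times R$ free pattern, but replicate it $\lfloor d_2/R\rfloor$ times along mode 2 and tensor it with all-ones vectors $\mathbf{1}_{d_3},\ldots,\mathbf{1}_{d_K}$ on the remaining modes. This preserves rank at most $R$ (the replicated $d_1\times d_2$ matrix has column space spanned by the $R$ base columns), while amplifying the squared Frobenius separation by a factor of order $\prod_{k\ge2}d_k/R$. A per-entry magnitude $\gamma\min\bigl(\alpha,\sigma\sqrt{Rd_{\max}/\prod_k d_k}\bigr)$ then simultaneously respects the max-norm bound, keeps each KL divergence $\lesssim Rd_{\max}\lesssim\log\text{Card}(\tX)$, and yields a Frobenius separation of order $\min\bigl(\alpha\sqrt{\prod_k d_k},\ \sigma\sqrt{Rd_{\max}}\bigr)$, covering both branches of the minimum. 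With that replacement, your KL computation and Fano/Varshamov--Gilbert bookkeeping go through essentially unchanged.
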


\noindent
This lower bound~\eqref{eq:lowerreal} quantifies the statistical hardness of the tensor estimation problem. In the next section, we compare the information loss of tensor estimation, based on the data with quantization, $\text{sign}(\Theta+\tE)$, vs.\ the data without quantization, $(\Theta+\tE)$.

\subsection{Phase Diagram}
\label{sec:phase}

The error bounds we have established depend on the signal bound $\alpha$ and the noise level $\sigma$. In this section, we define three regimes based on the signal-to-noise ratio (SNR) $=\mnormSize{}{\Theta} / \sigma$, in which the tensor estimation exhibits different behaviors. Table~\ref{tab:comparison} and Figure~\ref{fig:snr} summarize the error bounds of the three phrases under the case when $d_1=\cdots=d_K=d$. Our discussion focuses on the probit model, but similar patterns also hold for the logistic and Laplacian models.

\begin{table}[b!]
\centering
\resizebox{\columnwidth}{!}{%
\begin{tabular}{l|ccc} \hline
Tensor type & SNR  $\gg \tO(1) $  & $\tO(1) \gtrsim \text{SNR} \gg \tO(d^{-(K-1)/2}) $& $\tO(d^{-(K-1)/2}) \gtrsim \text{SNR}$\\
\hline
Binary &$\sigma e^{\alpha^2/\sigma ^2} d^{-(K-1)/2}$&$\sigma d^{-(K-1)/2}$&$\alpha$\\
\hline
Continuous & $\sigma d^{-(K-1)/2}$& $\sigma d^{-(K-1)/2}$&$\alpha$\\
\hline
\end{tabular}
}
\caption{Error rate for low-rank tensor estimation. For ease of presentation, we omit the constants that depend on the order $K$ or rank $R$.}
\label{tab:comparison}
\end{table}

\begin{figure}[t]
\centering
\includegraphics[width=13.5cm]{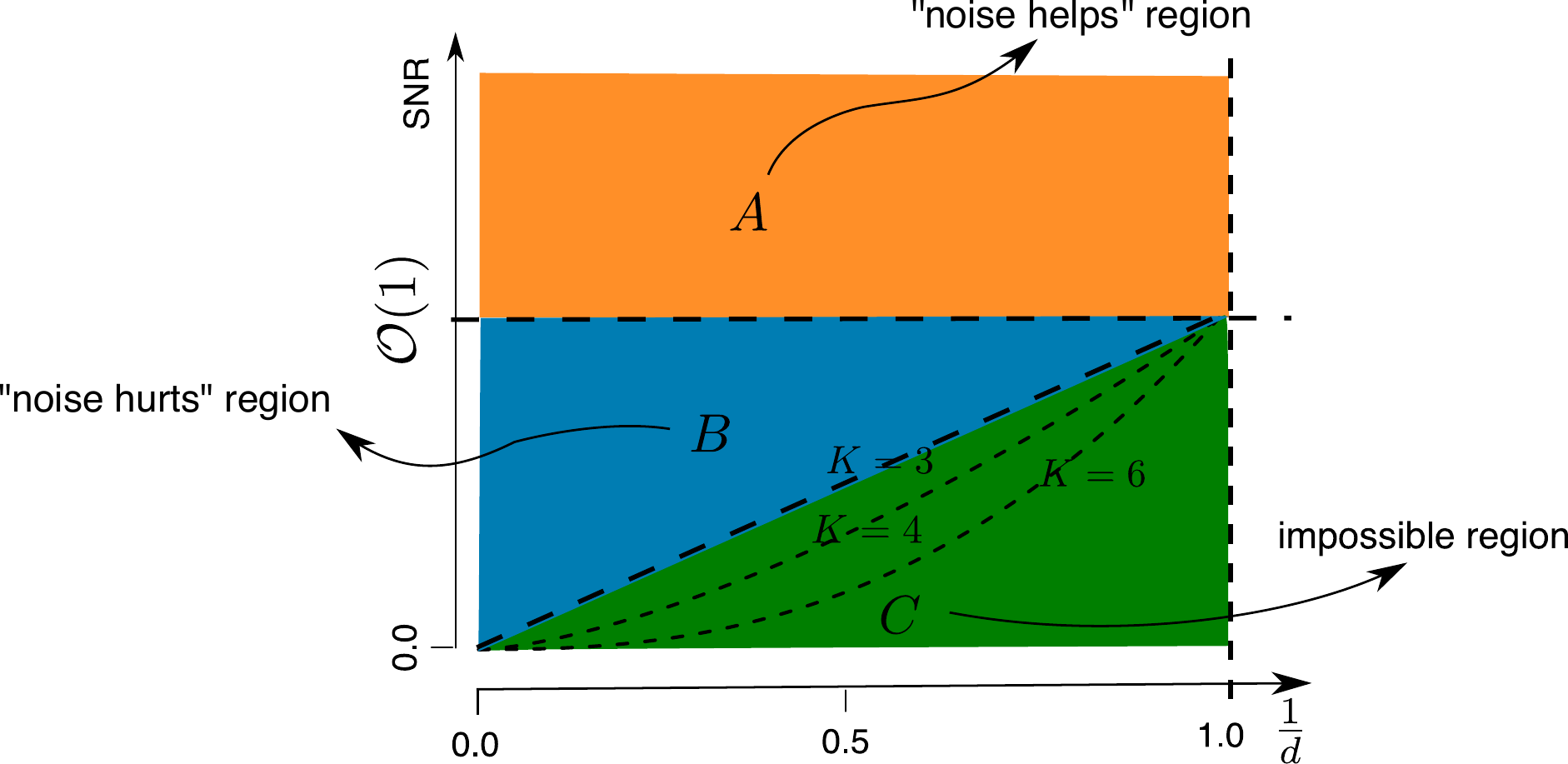}
\caption{Phase diagram according to the SNR. (A) ``Noise helps'' region: the estimation error decreases with the noise . (B) ``Noise hurts'' region: the error increases with the noise. (C) Impossible region: a consistent estimator of $\Theta$ is impossible. The dashed line between regions (B) and (C) depicts the boundary $d^{-(K-1)/2}$ as $K$ varies. Note that the origin in the $x$-axis corresponds to the high-dimensional region, $d^{-(K-1)/2}\to 0$, which is of our main interest.}\label{fig:snr} 
\end{figure}

The first phase is when the noise is weak, in that $\sigma \ll \alpha$ equivalently $\text{SNR} \gg \tO(1)$. In this regime, the error bound in~\eqref{eq:probit} scales as $\sigma \exp({\alpha^2 / \sigma^2})$, suggesting that increasing the noise level would lead to an improved tensor estimation accuracy. This ``noise helps'' region may seem surprising; however it is not an artifact of our proof. It turns out this phenomena is intrinsic to 1-bit quantization, and we confirm this behavior in simulations in Section~\ref{sec:empirical}. As the noise level $\sigma$ goes to zero, the problem essentially reverts to the noiseless case where an accurate estimation of $\Theta$ becomes impossible. To see this, we consider a simple example with a rank-1 signal tensor in the latent model \eqref{latent} in the absence of noise. Two different coefficient tensors, $\Theta_1=\ma_1\otimes \ma_2\otimes \ma_3$ and $\Theta_2=\text{sign}(\ma_1)\otimes \text{sign}(\ma_2)\otimes \text{sign}(\ma_3)$, would lead to the same observation $\tY$, and thus recovery of $\Theta$ from $\tY$ becomes hopeless. Interestingly, adding a stochastic noise $\tE$ to the signal tensor prior to 1-bit quantization completely changes the nature of the problem, and an efficient estimator can be obtained through the likelihood approach. In the 1-bit matrix/tensor completion literature, this phenomenon is referred to as ``dithering'' effect of random noise~\citep{davenport2014}. 

The second phase is when the noise is comparable to the signal, in that $\tO(1) \gtrsim \text{SNR} \gg \tO(d^{-(K-1)/2})$. In this regime, the error bound in~\eqref{eq:probit} scales linearly with $\sigma$. We find that the lower bound~\eqref{eq:lowerreal} from the unquantized tensor matches with the upper bound~\eqref{eq:probit} from a quantized one. This suggests that 1-bit quantization induces very little loss of information towards the estimation of $\Theta$. In other words, $\MLET$, which is based on the quantized observation, can achieve the similar degree of accuracy as if the completely unquantized measurements were observed.

The third phase is when the noise completely dominates the signal, in that $\text{SNR}\lesssim \tO(d^{-(K-1)/2})$. A consistent estimation of $\Theta$ becomes impossible. In this regime, a trivial zero estimator achieves the minimax rate.

\section{Algorithm and Convergence Properties}
\label{sec:algorithm-property}

\subsection{Alternating Optimization Algorithm}
In this section, we introduce an algorithm to solve \eqref{eq:MLE} and study the algorithmic convergence. For notational convenience, we drop the subscript $\tY$ in $\tL_{\tY}(\Theta)$ and simply write $\tL(\Theta)$. The optimization~\eqref{eq:MLE} is a non-convex problem in $\Theta$ due to the non-convexity in the feasible set $\tD$. We use the CP representation of $\Theta$ in~\eqref{eq:cp} and turn the optimization into a block-wise convex problem. Algorithm~\ref{alg:binary} summarizes the full optimization procedure, and we discuss the individual steps in the next paragraph.

\begin{algorithm}[t]
\caption{Binary tensor decomposition}\label{alg:binary}
\begin{algorithmic}[1]
\INPUT Binary tensor $\tY\in\{0,1\}^{d_1\times \cdots \times d_K}$, link function $f$, rank $R$, and entrywise bound $\alpha$.
\OUTPUT Rank-$R$ coefficient tensor $\Theta$, along with the factor matrices $\mA=(\mA_1, \ldots, \mA_K)$. 
  \State Initialize random matrices $\mA^{(0)}=\left\{ \mA^{(0)}_1,\ldots,\mA^{(0)}_K \right\}$ and iteration index $t=0$.
  \While {the relative increase in objective function $\tL(\mA)$ is less than the tolerance} 
    \State Update iteration index $t \leftarrow t + 1$.
    \For {$k$ = 1 to $K$}
    \State Obtain $\mA^{(t+1)}_k$ by solving $d_k$ separate GLMs with link function $f$.
    \EndFor
  \State Line search to obtain $\gamma^*$.
  \State Update $\mA^{(t+1)}_k \leftarrow \gamma^*\mA_k^{(t)}+(1-\gamma^*)\mA_k^{(t+1)}$, for all $k\in[K]$.
  \State Normalize the columns of $\mA^{(t+1)}_k$ to be of unit-norm for all $k\leq K-1$, and absorb the scales into the columns of $\mA^{(t+1)}_K$.
  \EndWhile
\end{algorithmic}
\end{algorithm}

Specifically, write the mode-$k$ factor matrices from \eqref{eq:cp} as 
\begin{equation}\label{eq:convention}
\mA_k=\left[\ma_1^{(k)},\ldots,\ma_R^{(k)}\right] \in \mathbb{R}^{d_k\times R}, \ \text{for }k \in [K-1], \ \textrm{ and } \, \mA_{K}=\left[\lambda_1\ma_1^{(K)},\ldots,\lambda_R\ma_R^{(K)} \right]\in \mathbb{R}^{d_K\times R}, 
\end{equation}
where, without loss of generality, we choose to collect $\lambda_k$'s into the last factor matrix. Let $\mA=(\mA_1,\ldots,\mA_K)$ denote the collection of all block variables satisfying the above convention. Then the optimization problem \eqref{eq:MLE} is equivalent to
\begin{equation}\label{eq:newloss}
\max_{\mA} \tL\{ \Theta(\mA) \}, \;\; \textrm{ subject to } \, \Theta(\mA)\in\tD.
\end{equation}
Although the objective function in \eqref{eq:newloss} is in general not concave in the $K$ factor matrices jointly, the problem is concave in each factor matrix individually with all other factor matrices fixed. This feature enables a block relaxation type minimization, where we alternatively update one factor matrix at a time while keeping the others fixed. In each iteration, the update of each factor matrix involves solving a number of separate GLMs. To see this, let $\mA_k^{(t)}$ denote the $k$th factor matrix at the $t$th iteration, and 
\[
\mA^{(t)}_{-k} = \mA^{(t+1)}_1\odot\cdots\odot \mA^{(t+1)}_{k-1}\odot\mA^{(t)}_{k+1}\odot\cdots\odot\mA^{(t)}_{K}, \quad k=1, \ldots, K.
\]  
Let $\tY(:, j(k), :)$ denote the subtensor of $\tY$ at the $j$th position of the $k$th mode. Then the update $\mA_k^{(t+1)}$ can be obtained row-by-row by solving $d_k$ separate GLMs, where each GLM takes $\text{vec}\{ \tY(:, j(k), :) \} \in \mathbb{R}^{(\prod_{i\neq k}d_i)\times 1}$ as the ``response", $\mA^{(t)}_{-k}\in\mathbb{R}^{(\prod_{i\neq k}d_i)\times R}$ as the ``predictors", and the $j$th row of $\mA_k$ as the ``regression coefficient", for all $j \in[d_k], k \in [K]$. In each GLM, the effective number of predictors is $R$, and the effective sample size is $\prod_{i\neq k}d_i$. These separable, low-dimensional GLMs allow us to leverage the fast GLM solvers as well as parallel processing to speed up the computation. After each iteration, we post-process the factor matrices $\mA^{(t+1)}_k$ by performing a line search,
\begin{equation} 
\gamma^* = \argmax_{\gamma\in[0,1]} \tL_\tY \left\{ \gamma \mA_k^{(t)} + (1-\gamma)\mA_k^{(t+1)} \right\}, \; \textrm{ subject to } \; \mnormSize{}{\Theta}\leq \alpha.
\end{equation}
We then update $\mA^{(t+1)}_k = \gamma^* \mA^{(t)}_k+(1-\gamma^*)\mA^{(t+1)}_k$ and normalize the columns of $\mA^{(t+1)}_k$.

In practice, we run the algorithm from multiple initializations to locate a final estimate with the highest objective value.

\subsection{Algorithmic Properties}\label{sec:algo-convergence}

We study the convergence of Algorithm~\ref{alg:binary}. The convergence of the objective function $\tL$ is guaranteed whenever the $\tL$ is bounded from above, due to the monotonic nature of $\tL$ over iterations. We next study the convergence of the iterates $\mA^{(t)}$ and $\Theta^{(t)}=\Theta\{ \mA^{(t)} \}$. To simplify the analysis, we assume the optimization path is in the interior of the search domain $\{\Theta\colon \mnormSize{}{\Theta}\leq \alpha\}$. We drop the dependence of $\alpha$ for technical convenience, but all the results should be interpreted with this assumption imposed. In practice, $\alpha$ can be adjusted via probing the MLE frontier~\citep{sur2019modern}. One may start with a reasonably large $\alpha$ and check whether MLE is in the interior of the search domain. If perfect separation occurs, one may want to reduce $\alpha$ to a smaller value in order to control the estimation error. We refer to \cite{sur2019modern} for more discussions on adjusting $\alpha$ via probing the MLE frontier.

We need the following assumptions for algorithmic convergence.
\begin{enumerate}[({A}1)]
\item (Regularity condition) The log-likelihood $\tL(\mA)$ is continuous and the set $\{\mA\colon \tL(\mA)\geq \tL(\mA^{(0)})\}$ is compact. \label{ass:1}

\item (Strictly local maximum condition) Each block update in Algorithm~\ref{alg:binary} is well-defined; i.e.,\ the GLM solution exists and is unique, and the corresponding sub-block in the Hession matrix is non-singular at the solution. \label{ass:2}

\item (Local uniqueness condition) The set of stationary points of $\tL(\mA)$ are isolated module scaling. \label{ass:3}

\item (Local Lipschitz condition) Let $\mA^*$ be a local maximizer of $\tL$. The rank-$R$ CP representation $\Theta=\Theta(\mA)$ is locally Lipschitz at $\mA^*$; namely, there exist two constants $c_1,c_2>0$ such that
\[
c_1\FnormSize{}{\mA'-\mA''}\leq \FnormSize{}{\Theta(\mA')-\Theta(\mA'')}\leq c_2\FnormSize{}{\mA'-\mA''},
\]
for $\mA',\mA''$ sufficiently close to $\mA^*$. Here $\mA', \mA''$ represent the block variables subject to convention~\eqref{eq:convention}. 

\end{enumerate}
These conditions are mild and often imposed in the literature. Specifically, Assumption (A1) ensures the upper boundedness of log-likelihood and the existence of global optimum. Therefore, the stopping rule of Algorithm~\ref{alg:binary} is well defined. Assumption (A2) asserts the negative-definiteness of the Hessian in the block coordinate $\mA_k$. Note that the full Hession needs not to be negative-definite in all variables simultaneously. We consider this requirement as a reasonable assumption, as similar conditions have been imposed in various non-convex problems~\citep{uschmajew2012local,zhou2013tensor}.  Assumptions (A2)--(A4) guarantee the local uniqueness of the CP representation $\Theta=\Theta(\mA)$. The conditions exclude the case of rank-degeneracy; e.g.,\ the case when the tensor $\Theta$ can be written in fewer than $R$ factors, or when the columns of $\mA^{(t)}_{-k}$ are linearly dependent in the GLM update.

We comment that the local uniqueness condition is fairly mild for tensors of order three or higher. This property reflects the fundamental difference between tensor and matrix decomposition, in
that the same property often fails for the matrix case. Consider an example of a 2-by-2 matrix. Suppose that the local maximizer is $\Theta^*=\Theta^*(\me_1,\me_2)=\me_1^{\otimes 2}+\me_2^{\otimes 2}$, where $\me_1, \me_2$ are canonical vectors in $\mathbb{R}^2$. The variable $\mA^*=(\me_1,\me_2)$ is a non-attracting point for the matrix problem. Indeed, one can construct a point $\mA^{(0)}=(\ma_1,\ma_2)$, with $\ma_1=(\sin \theta, \cos\theta)'$, and $\ma_2=(\cos \theta, -\sin \theta)'$. The point $\mA^{(0)}$ can be made arbitrarily close to $\mA^{*}$ by tuning $\theta$, but the algorithm iterates initialized from $\mA^{(0)}$ would never converge to $\mA^*$. In contract, a 2-by-2-by-2 tensor problem with the maximizer $\tilde\Theta^*=\tilde\Theta^*(\me_1,\me_2)=\me_1^{\otimes 3}+\me_2^{\otimes 3}$ possesses locally unique decomposition. For more discussion on decomposition uniqueness and its implication in the optimization, we refer to~\cite{kruskal1977three,uschmajew2012local,zhou2013tensor}.

\begin{prop}[Algorithmic convergence] \label{prop:alg}
Suppose Assumptions (A1)-(A3) hold.
\begin{enumerate}[(i)]
\item (Global convergence) Every sequence $\mA^{(t)} = \left\{ \mA^{(t)}_1,\ldots,\mA^{(t)}_K \right\}$ generated by Algorithm~\ref{alg:binary} converges to a stationary point of $\tL(\mA)$.\label{eq:global}
\item (Locally linear convergence) Let $\mA^*$ be a local maximizer of $\tL$. There exists an $\varepsilon$-neighborhood of $\mA^*$, such that, for any staring point $\mA^{(0)}$ in this neighborhood, the iterates $\mA^{(t)}$ of Algorithm~\ref{alg:binary} linearly converge to $\mA^*$,
\begin{equation}
\FnormSize{}{\mA^{(t)}-\mA^*}\leq \rho^t\FnormSize{}{\mA^{(0)}-\mA^*},
\end{equation}
where $\rho\in(0,1)$ is a contraction parameter. Furthermore, if Assumption (A4) holds at $\mA^*$, then there exists a constant $C>0$ such that
\begin{equation}
\FnormSize{}{\Theta(\mA^{(t)})-\Theta(\mA^*)}\leq C\rho^t\FnormSize{}{\Theta(\mA^{(0)})-\Theta(\mA^*)}.
\end{equation}
\label{eq:local}
\vspace{-1cm}
\end{enumerate}
\end{prop}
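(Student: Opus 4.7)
\textbf{Proof plan for Proposition \ref{prop:alg}.}

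The plan is to cast Algorithm~\ref{alg:binary} as a block coordinate ascent (block relaxation) scheme on the smooth objective $\tL$ over the blocks $\mA_1,\ldots,\mA_K$, and then invoke standard global-convergence and local-rate theory for such schemes, adapted to the CP parameterization.

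For part (\ref{eq:global}), I would proceed in four steps. First, by construction each inner step of the algorithm, including the line search in $\gamma$, is monotone, so $\tL(\mA^{(t+1)})\geq\tL(\mA^{(t)})$; combined with (A\ref{ass:1}), the level set $\{\mA\colon\tL(\mA)\geq\tL(\mA^{(0)})\}$ is compact, hence $\{\tL(\mA^{(t)})\}$ converges and $\{\mA^{(t)}\}$ is bounded. Second, from any subsequence I extract a further subsequence converging to some limit $\bar\mA$. Third, using (A\ref{ass:2})---strict concavity of each block problem at its optimizer together with non-singularity of the corresponding sub-Hessian---I show that the block-update map $\mA\mapsto\mathcal{M}(\mA)$ implicitly defined by the $K$ sub-GLMs and the $\gamma$-line search is continuous on a neighborhood of $\bar\mA$ (implicit function theorem applied to each sub-block's first-order condition). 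Passing to the limit in $\tL(\mathcal{M}(\mA^{(t)}))\geq \tL(\mA^{(t)})$ and using that $\bar\mA$ must already be a fixed point of the block-wise first-order conditions gives $\nabla_{\mA_k}\tL(\bar\mA)=0$ for every $k$, i.e.\ $\bar\mA$ is a stationary point of $\tL$. Fourth, I combine the compactness of $\{\mA^{(t)}\}$ with (A\ref{ass:3}) and the standard Ostrowski-type argument: the set of subsequential limits of a bounded sequence is a connected subset of the stationary set, and if stationary points are isolated (modulo the innocuous column scaling fixed by the normalization step in Algorithm~\ref{alg:binary}), this set must be a singleton, so $\mA^{(t)}\to\bar\mA$.

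For part (\ref{eq:local}), near a local maximizer $\mA^*$ I would analyze the block-update operator $\mathcal{M}$ as a smooth self-map on a neighborhood of $\mA^*$ with $\mathcal{M}(\mA^*)=\mA^*$. By (A\ref{ass:2}), each block-diagonal Hessian $\nabla^2_{\mA_k\mA_k}\tL(\mA^*)$ is negative definite, so the Jacobian of the block-Gauss-Seidel operator at $\mA^*$ has the classical form
\begin{equation}
D\mathcal{M}(\mA^*) \;=\; -(\mathbf{D}+\mathbf{L})^{-1}\mathbf{U},
\end{equation}
where $\mathbf{D},\mathbf{L},\mathbf{U}$ are the block-diagonal, strictly block-lower, and strictly block-upper parts of $-\nabla^2\tL(\mA^*)$. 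A standard algebraic argument (convergence of block Gauss--Seidel on symmetric positive-definite systems) gives that the spectral radius $\rho:=\rho(D\mathcal{M}(\mA^*))\in[0,1)$. The Ostrowski contraction theorem then yields, for any $\mA^{(0)}$ in a sufficiently small $\varepsilon$-neighborhood of $\mA^*$, the linear bound
\begin{equation}
\FnormSize{}{\mA^{(t)}-\mA^*}\leq \rho^t\FnormSize{}{\mA^{(0)}-\mA^*}.
\end{equation}

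To transfer the rate to $\Theta(\mA)$, I would invoke (A\ref{ass:3})--(A\ref{ass:2})--(A\ref{ass:2})\!... more precisely (A\ref{ass:2}) and the local Lipschitz property (A4): the upper Lipschitz bound gives $\FnormSize{}{\Theta(\mA^{(t)})-\Theta(\mA^*)}\leq c_2\FnormSize{}{\mA^{(t)}-\mA^*}$, while the lower Lipschitz bound (applied once at $t=0$) gives $\FnormSize{}{\mA^{(0)}-\mA^*}\leq c_1^{-1}\FnormSize{}{\Theta(\mA^{(0)})-\Theta(\mA^*)}$. Combining the two yields
\begin{equation}
\FnormSize{}{\Theta(\mA^{(t)})-\Theta(\mA^*)}\leq C\rho^t\FnormSize{}{\Theta(\mA^{(0)})-\Theta(\mA^*)},
\end{equation}
with $C=c_2/c_1$.

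The main obstacle I expect is step three of the global argument: the CP parameterization has intrinsic scaling/permutation indeterminacy, so stationary points of $\tL$ in $\mA$-space are never truly isolated. This is precisely why (A\ref{ass:3}) specifies isolation \emph{modulo scaling} and why Algorithm~\ref{alg:binary} enforces a fixed normalization after every sweep; I will need to check carefully that the normalization step makes $\mathcal{M}$ well-defined and single-valued on the quotient space, so that the Ostrowski-type argument and the contraction analysis apply on an honest finite-dimensional manifold rather than on orbits. Handling the same indeterminacy in the Jacobian computation---so that the restricted spectral radius is strictly less than one---is the most delicate part of part (ii).
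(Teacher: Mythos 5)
Your proposal follows essentially the same route as the paper's proof: monotone ascent plus compactness and connectedness of the set of limit points combined with the isolation assumption (A3) for global convergence, the block Gauss--Seidel Jacobian $-(\mD+\mL)^{-1}\mL^{T}$ together with an Ostrowski-type contraction argument for the locally linear rate, and the two-sided Lipschitz bounds in (A4) to transfer the rate from $\mA$ to $\Theta(\mA)$ with $C=c_2/c_1$. The scaling-indeterminacy subtlety you flag at the end (strict contraction only modulo the scaling orbit) is precisely the point the paper handles by invoking Uschmajew's local convergence theorem rather than re-deriving it, so your outline is aligned with the published argument.
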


\noindent
Proposition~\ref{prop:alg}\eqref{eq:local} shows that every local maximizer of $\tL$ is an attractor of Algorithm~\ref{alg:binary}. This property ensures an exponential decay of the estimation error near a local maximum. Combining Proposition~\ref{prop:alg} and Theorem~\ref{thm:rate}, we have the following theorem. 

\begin{thm}[Empirical performance]\label{thm:empirical} 
Let $\tY\in\{0,1\}^{d_1\times \cdots \times d_K}$ be a binary data tensor under the Bernoulli tensor model~\eqref{eq:model} with parameter $\trueT=\Theta(\mA_{\text{true}})$. Let $\mA^{(t)}$ denote a sequence of estimators generated from Algorithm~\ref{alg:binary}, with the limiting point $\mA^*$. Suppose $\mA^*$ is a local maximizer satisfying that $\tL\left( \Theta(\mA^*) \right) \geq \tL(\trueT)$. Furthermore, Assumptions (A1)-(A4) hold. Then, with probability at least $1-\text{exp}(-C'\log K\sum_k d_k)$, there exists an iteration number $T_0\geq0$, such that, 
\begin{equation}\label{eq:empirical}
\text{Loss}\left( \Theta( \mA^{(t)} ),\trueT \right) \leq  \KeepStyleUnderBrace{C_1\rho^{t-T_0}\text{Loss}(\Theta(\mA^{(T_0)}), \trueT)}_{\text{algorithmic error}}+\KeepStyleUnderBrace{{C_2 L_{\alpha}\over \gamma_\alpha}\sqrt{R^{K-1}\sum_k d_k \over \prod_k d_k} }_{\text{statistical error}},
\end{equation}
for all $t\geq T_0$, where $\rho\in (0,1)$ is a contraction parameter, and $C_1,C_2>0$ are two constants.
\end{thm}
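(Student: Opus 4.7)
The plan is to split the error $\text{Loss}(\Theta(\mA^{(t)}), \trueT)$ into an \emph{algorithmic} piece measuring the distance to the algorithm's limit $\Theta(\mA^*)$ and a \emph{statistical} piece measuring the distance of that limit to the truth, and then apply Proposition~\ref{prop:alg} to the former and Theorem~\ref{thm:rate} to the latter. Concretely, by the triangle inequality,
\begin{equation}
\text{Loss}(\Theta(\mA^{(t)}), \trueT) \leq \text{Loss}(\Theta(\mA^{(t)}), \Theta(\mA^*)) + \text{Loss}(\Theta(\mA^*), \trueT).
\end{equation}

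For the statistical term, the hypothesis $\tL(\Theta(\mA^*)) \geq \tL(\trueT)$ places $\Theta(\mA^*)$ in the level set $\{\hat\Theta\in\tD\colon \tL_{\tY}(\hat\Theta)\geq \tL_{\tY}(\trueT)\}$. As remarked immediately after Theorem~\ref{thm:rate}, the statistical rate holds for every element of this level set, not merely the global MLE. Hence on the high-probability event of that theorem,
\begin{equation}
\text{Loss}(\Theta(\mA^*),\trueT) \leq \frac{C_2 L_\alpha}{\gamma_\alpha}\sqrt{\frac{R^{K-1}\sum_k d_k}{\prod_k d_k}}.
\end{equation}
For the algorithmic term, Proposition~\ref{prop:alg}\eqref{eq:global} gives $\mA^{(t)}\to \mA^*$, so I can pick $T_0$ large enough that $\mA^{(T_0)}$ enters the $\varepsilon$-neighborhood on which the locally linear convergence of Proposition~\ref{prop:alg}\eqref{eq:local} kicks in. Under Assumption (A4), that proposition gives $\FnormSize{}{\Theta(\mA^{(t)})-\Theta(\mA^*)} \leq C\rho^{t-T_0}\FnormSize{}{\Theta(\mA^{(T_0)})-\Theta(\mA^*)}$ for all $t\geq T_0$; dividing by $\sqrt{\prod_k d_k}$ converts this into the same inequality in $\text{Loss}$. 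Another triangle inequality then yields
\begin{equation}
\text{Loss}(\Theta(\mA^{(T_0)}),\Theta(\mA^*)) \leq \text{Loss}(\Theta(\mA^{(T_0)}),\trueT) + \text{Loss}(\Theta(\mA^*),\trueT),
\end{equation}
and the second term here is again bounded by the statistical rate. Since $\rho^{t-T_0}\leq 1$, the contribution $C\rho^{t-T_0}\cdot(\text{stat rate})$ is absorbed into the statistical term with a slightly enlarged constant, while the remaining contribution $C\rho^{t-T_0}\text{Loss}(\Theta(\mA^{(T_0)}),\trueT)$ is exactly the algorithmic-error term in \eqref{eq:empirical} with $C_1:=C$.

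The main obstacle is a conceptual rather than technical one: verifying that the iterate eventually enters the $\varepsilon$-neighborhood of a local maximizer satisfying the likelihood-dominance condition $\tL(\Theta(\mA^*))\geq \tL(\trueT)$. The theorem takes this as a hypothesis on $\mA^*$, and Assumption (A1)--(A3) plus Proposition~\ref{prop:alg}\eqref{eq:global} guarantee convergence to \emph{some} stationary point; the existence of $T_0$ is then a consequence of this convergence. The only subtlety in the stitching is the translation between contraction in $\mA$-space and contraction in $\Theta$-space, which is precisely what the local Lipschitz Assumption (A4) is designed to supply. Once these ingredients are in place, the proof reduces to bookkeeping of constants, which can be collected into $C_1$ and a redefined $C_2$ without affecting the statistical rate.
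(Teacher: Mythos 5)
Your proposal is correct and follows essentially the same route as the paper: a triangle inequality splitting the error into the distance to the limit point $\Theta(\mA^*)$ (controlled by Proposition~\ref{prop:alg} under Assumptions (A1)--(A4)) and the distance from $\Theta(\mA^*)$ to $\trueT$ (controlled by the level-set extension of Theorem~\ref{thm:rate} via the hypothesis $\tL(\Theta(\mA^*))\geq \tL(\trueT)$). Your extra bookkeeping step converting $\text{Loss}(\Theta(\mA^{(T_0)}),\Theta(\mA^*))$ into $\text{Loss}(\Theta(\mA^{(T_0)}),\trueT)$ plus an absorbed statistical term is in fact slightly more careful than the paper's own write-up in matching the exact form of \eqref{eq:empirical}.
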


\noindent
Theorem~\ref{thm:empirical} provides the estimation error of the empirical estimator from our Algorithm~\ref{alg:binary} at each iteration. The bound~\eqref{eq:empirical} consists of two terms: the first term is the computational error, and the second is the statistical error. The computational error decays exponentially with the number of iterations, whereas the statistical error remains the same as $t$ grows. The statistical error is unavoidable, as it reflects the statistical error due to estimation with noise; see also Theorem~\ref{thm:minimax}. For tensors with  $d_1=\cdots=d_K=d$, the computational error is dominated by the statistical error when the iteration number satisfies
\[
t\geq T=\log_{1/\rho}\left( {C_1 \text{Loss}(\Theta(\mA^{T_0}), \trueT) \over {C_2 L_{\alpha}\over \gamma_\alpha}\sqrt{R^{K-1}\sum_k d_k\over \prod_k d_k} } \right)+T_0 \asymp \log_{1/\rho}\left\{d^{(k-1)/2}\right\}.
\]

\subsection{Missing Data, Rank Selection, and Computational Complexity}\label{sec:miss-rank}

When some tensor entries $y_{i_1,\ldots,i_K}$ are missing, we replace the objective function $\tL_{\tY}(\Theta)$ with $\sum_{(i_1,\ldots,i_K)\in\Omega}\log f(q_{i_1,\ldots,i_K}\theta_{i_1,\ldots,i_K})$, where $\Omega\subset[d_1]\times\cdots\times[d_K]$ is the index set for non-missing entries. The same strategy has been used for continuous-valued tensor decomposition~\citep{acar2010scalable}. For implementation, we modify line 5 in Algorithm~\ref{alg:binary}, by fitting GLMs to the data for which $y_{i_1,\ldots,i_K}$ are observed. Other steps in Algorithm~\ref{alg:binary} are amendable to missing data accordingly. Our approach requires that there are no completely missing subtensors $\tY(:,j(k),:)$, which is a fairly mild condition. This requirement is similar to the coherence condition in the matrix completion problem; for instance, the recovery of true decomposition is impossible if an entire row or column of a matrix is missing. 

As a by-product, our tensor decomposition output can also be used for missing value prediction. That is, we predict the missing values $y_{i_1,\ldots,i_K}$ using $f(\hat \theta_{i_1,\ldots,i_K})$, where $\hat \Theta$ is the coefficient tensor estimated from the observed entries. Note that the predicted values are always between 0 and 1, which can be interpreted as a prediction for $\mathbb{P}(Y_{i_1,\ldots,i_K}=1)$. For accuracy guarantees with missing data, we refer to~\cite{lee2020tensor} for detailed results. 

Algorithm~\ref{alg:binary} takes the rank of $\Theta$ as an input. Estimating an appropriate rank given the data is of practical importance. We adopt the usual Bayesian information criterion (BIC) and choose the rank that minimizes BIC; i.e., 
\begin{equation}
\hat R=\arg\min_{R\in\mathbb{R}_{+}} \textrm{BIC}(R) = \arg\min_{R\in\mathbb{R}_{+}}\left[ -2\tL_\tY\{\hat\Theta(R)\} + p_e(R) \log\left(\prod_{k}d_k\right)\right],
\end{equation}
where $\hat\Theta(R)$ is the estimated coefficient tensor $\hat \Theta$ under the working rank $R$, and $p_e(R)$ is the effective number of parameters. This criterion aims to balance between the goodness-of-fit for the data and the degree of freedom in the population model. The empirical performance of BIC is investigated in Section~\ref{sec:empirical}. 

Finally, the computational complexity of our algorithm is $\tO(R^3 \prod_k d_k)$ for each iteration. The per-iteration computational cost scales linearly with the tensor dimension, and this complexity matches with the classical continuous-valued tensor decomposition~\citep{kolda2009tensor}. More precisely, the update of $\mA_k$ involves solving $d_k$ separate GLMs. Solving these GLMs requires $\tO(R^3 d_k + R^2 \prod_k d_k)$, and therefore the cost for updating $K$ factors in total is $\tO(R^3 \sum_k d_k + R^2 K \prod_k d_k)$. We further report the computation time in Section~\ref{sec:empirical}.

\section{Simulations}
\label{sec:simulations}\label{sec:empirical}

\subsection{CP Tensor Model}\label{sec:sim-cp}

In this section, we first investigate the finite-sample performance of our method when the data indeed follows the CP tensor model. We consider an order-3 dimension-$(d,d,d)$ binary tensor $\tY$ generated from the threshold model~\eqref{latent}, where $\trueT=\sum_{r=1}^R\ma^{(1)}_r\otimes \ma^{(2)}_r\otimes \ma^{(3)}_r$, and the entries of $\ma_r^{(k)}$ are i.i.d.\ drawn from Uniform$[-1,1]$ for all $k\in[3]$ and $r\in[R]$. Without loss of generality, we scale $\trueT$ such that $\mnormSize{}{\trueT}=1$. The binary tensor $\tY$ is generated based on the  entrywise quantization of the latent tensor $(\Theta^{\text{true}}+\tE)$, where $\tE$ consists of i.i.d.\ Gaussian entries. We vary the rank $R\in\{1,3,5\}$, the tensor dimension $d\in\{20,30,\ldots,60\}$, and the noise level $\sigma\in \{10^{-3},10^{-2.5},\ldots,10^{0.5}\}$. We use BIC to select the rank and report the estimation error based on logistic link averaged across $n_\text{sim}=30$ replications.

Figure~\ref{fig:SNR}(a) plots the estimation error $\text{Loss}(\trueT, \MLET)$ as a function of the tensor dimension $d$ while holding the noise level fixed at $\sigma = 10^{-0.5}$ for three different ranks $R\in\{1,3,5\}$. We find that the estimation error of the constrained MLE decreases as the dimension increases. Consistent with our theoretical results, the decay in the error appears to behave on the order of $d^{-1}$. A higher-rank tensor tends to yield a larger recovery error, as reflected by the upward shift of the curves as $R$ increases. Indeed, a higher rank means a higher intrinsic dimension of the problem, thus increasing the difficulty of the estimation.

\begin{figure}[t]
\centerline{\includegraphics[width=0.8\textwidth]{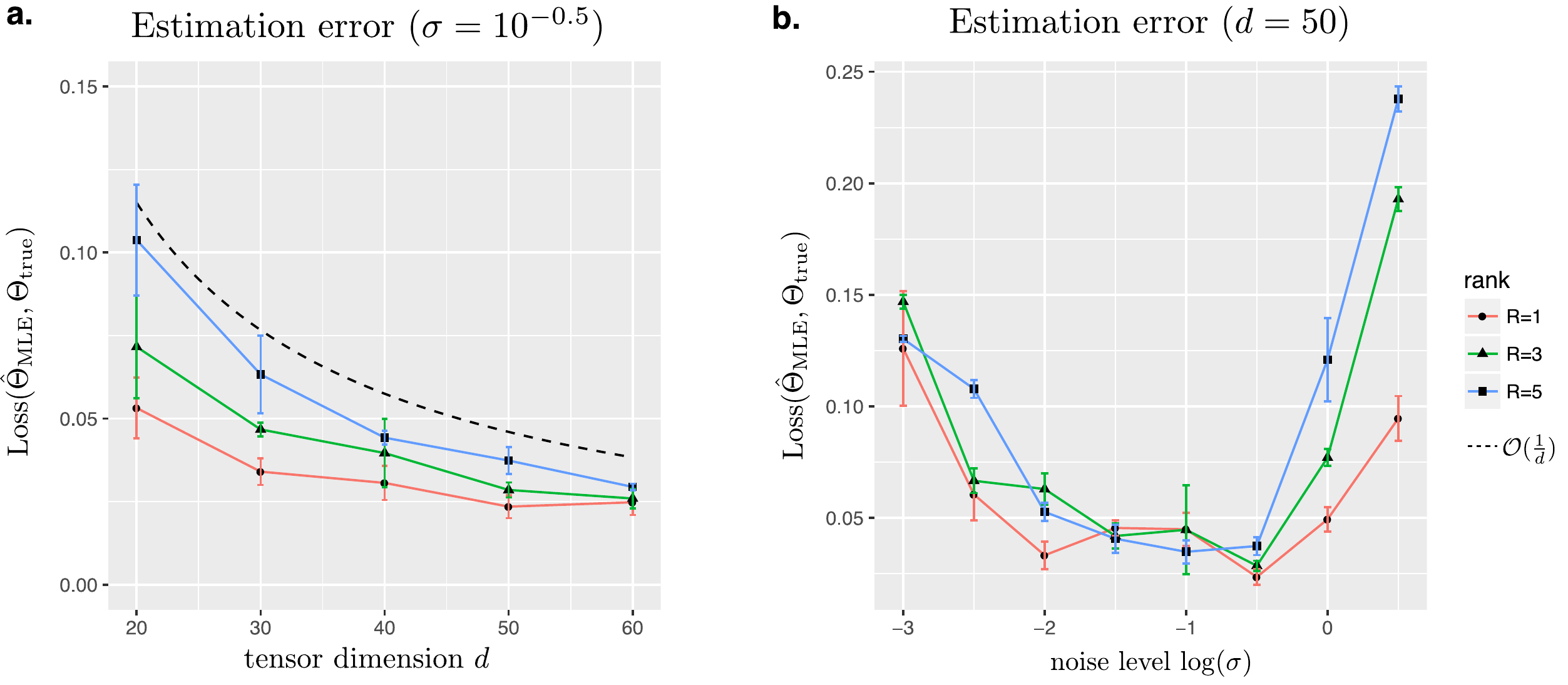}}
\caption{Estimation error of binary tensor decomposition. (a) Estimation error as a function of the tensor dimension $d=d_1=d_2=d_3$. (b) Estimation error as a function of the noise level.}
\label{fig:SNR}
\vspace{-.5cm}
\end{figure}

Figure~\ref{fig:SNR}(b) plots the estimation error as a function of the noise level $\sigma$ while holding the dimension fixed at $d = 50$ for three different ranks $R\in\{1,3,5\}$. A larger estimation error is observed when the noise is either too small or too large. The non-monotonic behavior confirms the phase transition with respect to the SNR. Particularly, the random noise is seen to improve the recovery accuracy in the high SNR regime. This is consistent to our theoretical result on the ``dithering'' effects brought by stochastic noise.

We next assess the tensor rank selection by BIC. We consider the tensor dimension $d\in\{20,40,60\}$ and rank $R\in \{5,10,20,40\}$. Note that, in some of the combinations, the rank equals or exceeds the tensor dimension. We set the noise level $\sigma\in\{0.1,0.01\}$ such that the noise is neither negligible nor overwhelming. For each combination, we simulate the tensor data following the Bernoulli tensor model~\eqref{eq:model}. We minimize BIC using a grid search from $R-5$ to $R+5$. Table~\ref{tab:rank} reports the selected rank averaged over $n_\text{sim}=30$ replications, with the standard error shown in the parenthesis. We find that, when $d=20$, the selected rank is slightly smaller than the true rank, whereas for $d\geq 40$, the selection is accurate. This agrees with our expectation, as the total number of entries corresponds to the sample size in tensor decomposition. A larger $d$ implies a larger sample size, so the BIC selection becomes more accurate.

\begin{table}[t!]
\centering
\begin{tabular}{c|ccc|ccc} \hline
 & \multicolumn{3}{c|}{$\sigma = 0.1$} &  \multicolumn{3}{c}{$\sigma = 0.01$} \\ \cline{2-7}
True rank & $d=20$ &  $d=40$ & $d=60$ & $d=20$ & $d=40$ & $d=60$ \\ \hline
$R=5$   & 4.9 (0.2) & 5 (0) & 5 (0) & 4.8 (1.0) & 5 (0) & 5 (0) \\
$R=10$ & 8.7 (0.9) & 10 (0) & 10 (0) & 8.8 (0.4) & 10 (0) & 10 (0) \\ 
$R=20$ & 17.7(1.7)&20.4(0.5)&20.2(0.5)&16.4(0.5)&20.4(0.5)&20.6(0.5) \\
$R=40$ &36.8(1.1)&39.6(1.7)& 40.2(0.4)&36.0(1.2)&38.8(1.6)&40.3(1.1) \\ \hline
\end{tabular}
\caption{Rank selection in binary tensor decomposition via BIC. The selected rank is averaged across $30$ simulations, with the standard error shown in the parenthesis.}
\label{tab:rank}
\end{table}

We also evaluate the numerical stability of our optimization algorithm. Although Algorithm~\ref{alg:binary} has no theoretical guarantee to land at the global optimum, in practice, we often find that the convergence point $\hat\Theta$ is satisfactory, in that the corresponding objective value $\tL_{\tY}(\hat\Theta)$ is close to and actually slightly larger than the objective function evaluated at the true parameter $\tL_{\tY}(\trueT)$. As an illustration, Figure~\ref{fig:traj} shows the typical trajectories of the objective function under different tensor dimensions and ranks. The dashed line is the objective value at the true parameter, $\tL_{\tY}(\trueT)$. We find that, upon random initializations, the algorithm lands at a good convergence point and converges quickly. It usually takes fewer than 8 iterations for the relative change in the objective to be below 3\%, even for a large $d$ and $R$. The average computation time per iteration is shown in the plot legend. For instance, when $d = 60$ and $R=10$, each iteration of Algorithm~\ref{alg:binary} takes fewer than 3 seconds on average.

\begin{figure}[t]
\centerline{\includegraphics[width=.7\textwidth]{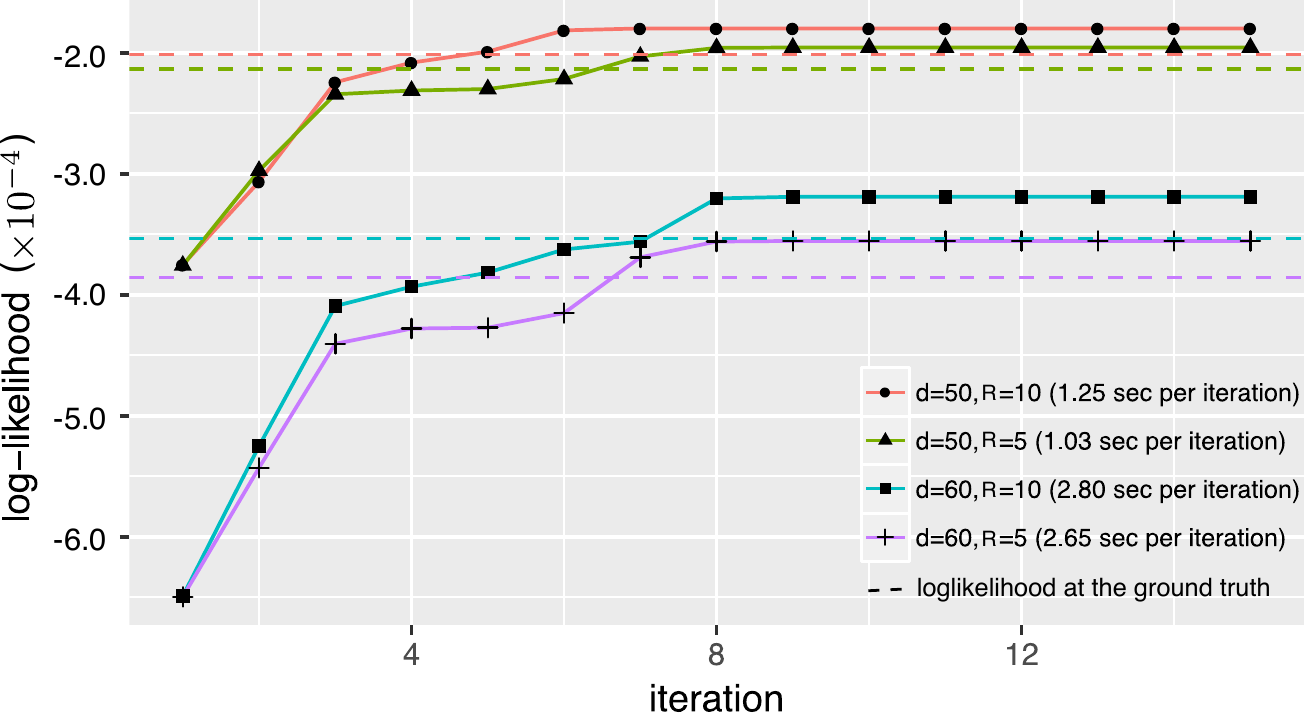}}
\caption{Trajectory of the objective function over iterations with varying $d$ and $R$.}
\label{fig:traj}
\vspace{-.5cm}
\end{figure}

\subsection{Stochastic Multi-way Block Model}\label{sec:block}

We next evaluate our method under the stochastic multi-way block model, which can be viewed as a higher-order generalization of the stochastic block model commonly used for random graphs, network analysis, and community detection. Under this model, the signal tensor does not have an explicit CP structure with known rank. Specifically, we generate $\tY$ of dimension $d=d_1=d_2=d_3$, where we vary $d\in\{20,30,40,50,60\}$. The entries in $\tY$ are realizations of independent Bernoulli variables with a probability tensor $\Theta$. The probability tensor $\Theta$ has five blocks along each of the modes, 
\begin{equation}
\text{Probit}^{-1}(\Theta)=\tC\times_1\mN_1\times_2\mN_2\times_3\mN_3,
\end{equation}
where $\mN_1, \mN_2, \mN_3 \in \{0,1\}^{d\times 5}$ are membership matrices indicating the block allocation along each of the mode, $\times_k$ denotes the tensor-by-matrix multiplication~\citep{kolda2009tensor} for $k\in[3]$, and $\tC=\entry{c_{m_1m_2m_3}}\in\mathbb{R}^{5\times 5\times 5}$ is a core tensor corresponding to the block-means on a probit scale,  and $m_1,m_2,m_3 \in \{1,\ldots,5\}$ are block indices. We generate the block means $c_{m_1m_2m_3}$ in the following ways:
\begin{itemize}
\item Combinatorial-mean model: $c_{m_1m_2m_3}\stackrel{\text{i.i.d.}}{\sim} \text{Uniform}[-1,1]$; i.e., each three-way block has its own mean, independent of each other. 
\item Additive-mean model: $c_{m_1m_2m_3}=c_{m_1}^1+\mu_{m_2}^2+\mu_{m_3}^3$, where $\mu^1_{m_1}$, $\mu^2_{m_2}$ and $\mu^3_{m_3}$ are i.i.d.\ drawn from $\text{Unif}[-1,1]$. 
\item Multiplicative-mean model: $c_{m_1m_2m_3} = c_{m_1}^1\mu_{m_2}^2\mu_{m_3}^3$, and the rest of setup is the same as the additive-mean model.
\end{itemize}

We evaluate our method in terms of the accuracy of recovering the latent tensor $\Theta$ given the binary observations. Table~\ref{tab:block} reports the relative loss, the estimated rank, and the running time, averaged over $n_\text{sim}=30$ data replications, for the above three sub-models. The relative loss is computed as $\FnormSize{}{\MLET-\trueT}/\FnormSize{}{\trueT}$. Our method is able to recover the signal tensors well in all three scenarios. As an illustration, we also plot one typical realization of the true signal tensor, the input binary tensor, and the recovered signal tensor for each sub-model in Table~\ref{tab:block}. It is interesting to see that, not only the block structure but also the tensor magnitude are well recovered. We remark that, the data has been generated from a probit model, but we always fit with a logistic link. Our method is shown to maintain a reasonable performance under this model misspecification.

\begin{table}[t!]
\resizebox{\columnwidth}{!}{
\begin{tabular}{c|c|c|c|c}\hline
\multirow{2}{*}{Block model}&Experiment& Relative & Rank& Time\\
\cline{2-2}
&True signal  $\quad$ Input tensor $\quad$ Output tensor& Loss &Estimate& (sec)\\
\hline
Additive&
\begin{minipage}{.45\textwidth} 
\includegraphics[width=7cm]{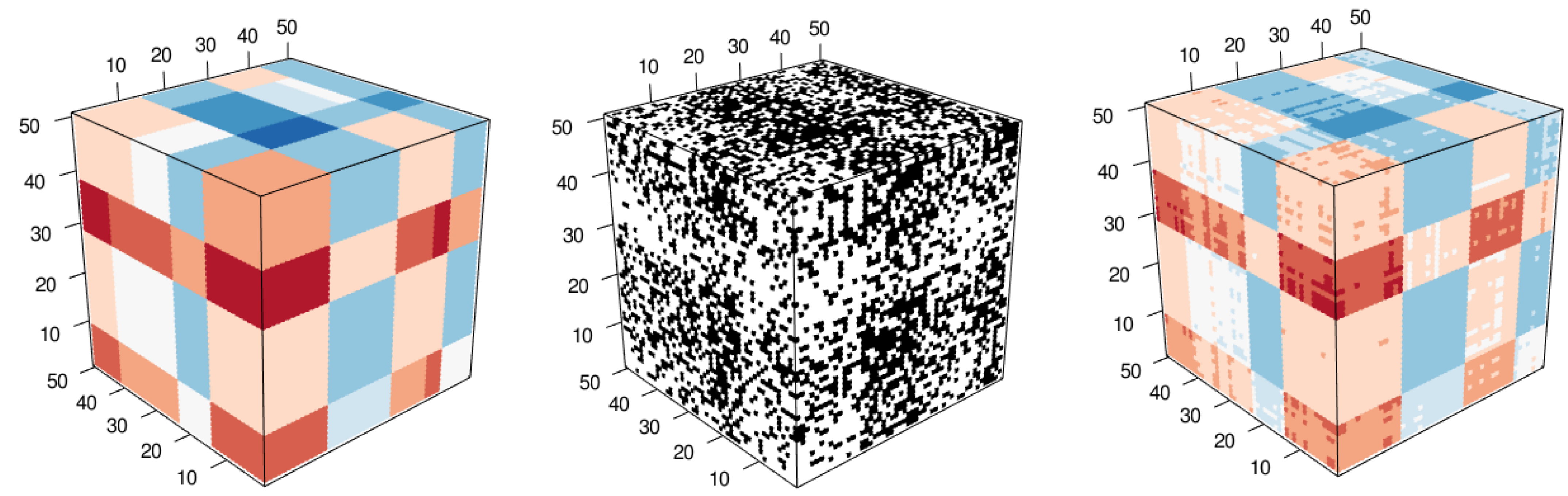}
\end{minipage}
 &0.23(0.05)&1.9(0.3)& 4.23(1.62)\\  
 
\hline
Multiplicative&
\begin{minipage}{.45\textwidth} 
\includegraphics[width=7cm]{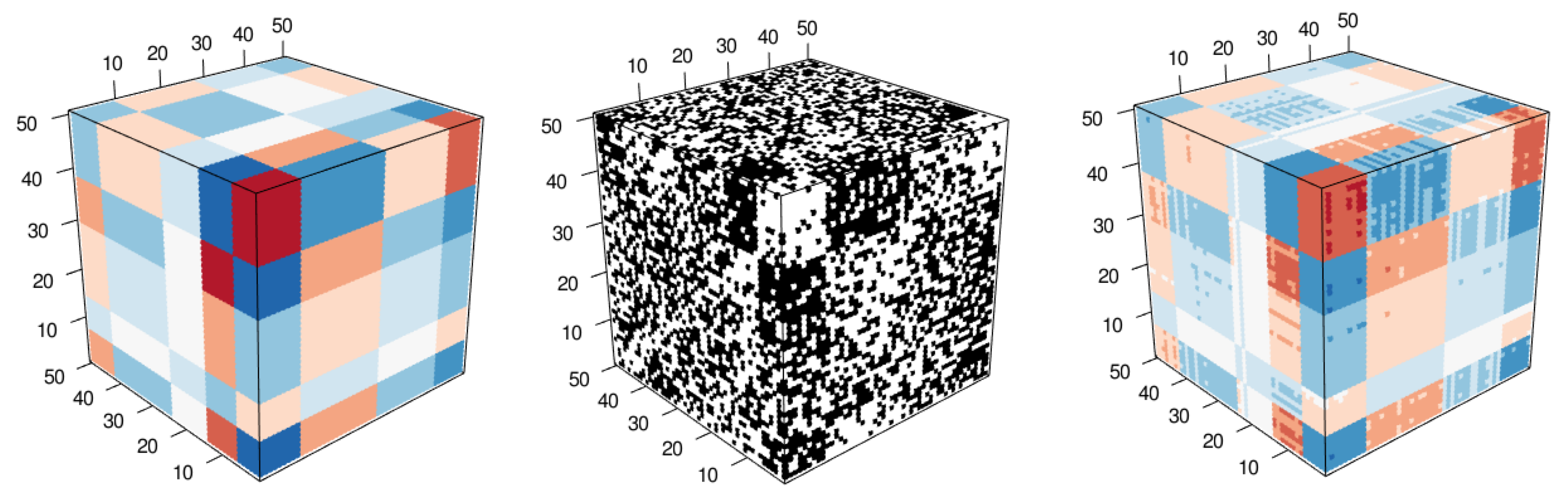}
 \end{minipage}
 &0.22(0.07)&1.0(0.0)&1.70(0.09)\\  
 
 \hline
Combinatorial&
\begin{minipage}{.45\textwidth} 
\includegraphics[width=7cm]{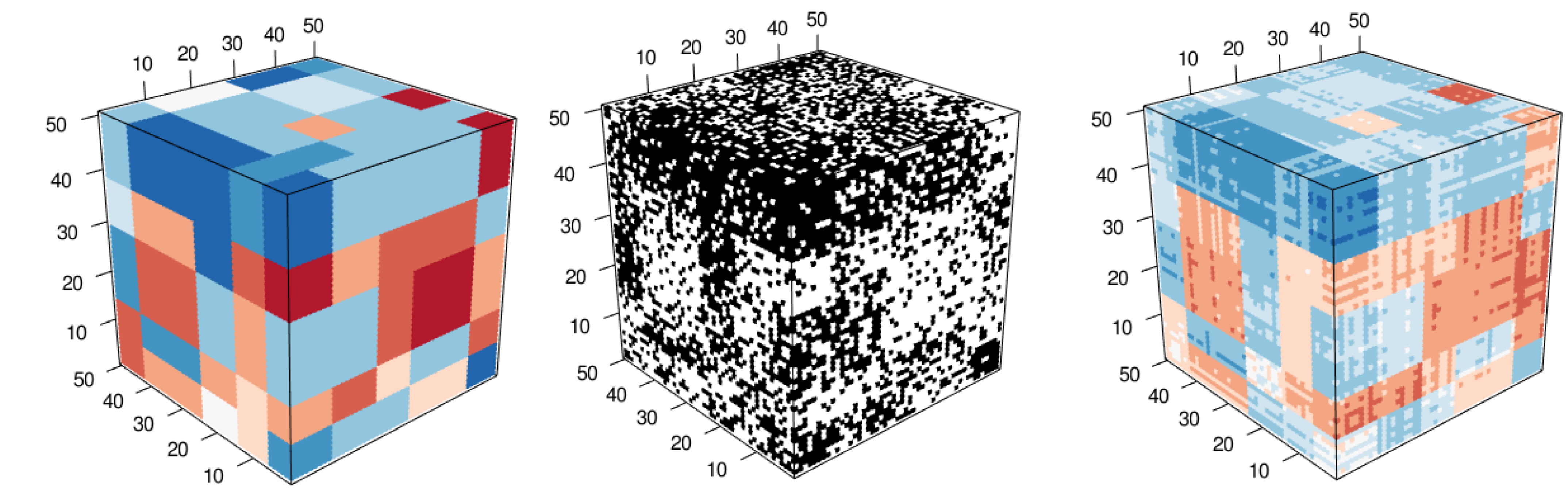}
\end{minipage}
 &0.48(0.04)&6.0(0.9)&10.4(3.4)\\  
 \hline
  
\end{tabular}
}

\hspace{2.5cm}
\begin{minipage}{.1\textwidth} 
\includegraphics[width=5cm]{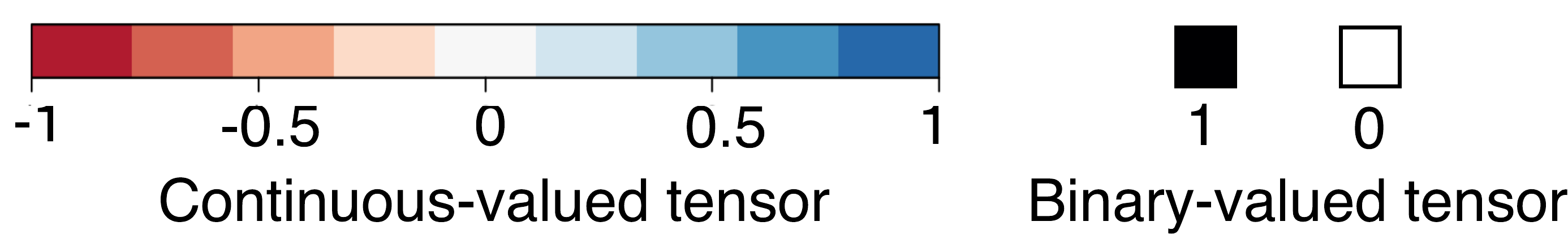}
 \end{minipage}
 
\caption{Latent tensor recovery. Figures in the column of ``Experiment'' are color images of the simulated tensor under different block mean models. Reported are the relative loss, estimated rank, and running time, averaged over 30 data replications. Standard error is shown in the parenthesis.}\label{tab:block}
\vspace{-.5cm}
\end{table}

\subsection{Comparison with Alternative Methods}\label{sec:sensitivity}

We next compare our method with a number of alternative solutions for binary tensor decomposition.  
\begin{itemize}
\item Boolean tensor factorization (BooleanTF)~\citep{miettinen2011boolean,erdos2013walk,rukat2018probabilistic}. This method decomposes a binary tensor into binary factors and then recovers the binary entries based on a set of logical rules among the factors. We use the implementation of \cite{rukat2018probabilistic}.  

\item Bayesian tensor factorization (BTF\_Bayeisan)~\citep{rai2014scalable}. This method uses expectation-maximization to decompose a binary tensor into continuous-valued factors. The algorithm imposes a Gaussian prior on the factor entries and a multiplicative gamma process prior on the factor weights $\{\lambda_r\}$.

\item Bernoulli tensor factorization with gradient descent (BTF\_Gradient)~\citep{hong2020generalized}. This method uses a gradient descent algorithm to decompose a binary tensor into continuous-valued factors. We use the implementation in the toolbox of Matlab. 
\end{itemize}

For easy reference, we denote our method by BTF\_Alternating\footnote{Software implementation: \url{https://github.com/Miaoyanwang/Binary-Tensor}}. These four methods differ in several ways. BooleanTF is different from the other three in both the cost function and the output format. The rest are all based on the Bernoulli model \eqref{eq:model}, but with different implementations. BTF\_Bayesian employs a Bayesian approach, whereas the other two are frequentist solutions. BTF\_Gradient and our method, BTF\_Alternating, share the same model, but utilize different optimization algorithms. So the two methods complement each other. On the other hand, we provide not only the algorithm-specific convergence properties, but also algorithm-independent statistical properties including the statistical convergence rate, SNR phase diagram, and mini-max rate. These results are not available in the proposal of BTF\_Gradient~\citep{hong2020generalized}. 

We apply the four methods with default parameters, while selecting the rank $R$ using the recommended approach of each. For our method BTF\_Alternating, we use the proposed BIC to select the rank. Because BTF\_Gradient does not provide any rank selection criterion, we apply the same $R$ selected by our BIC. For BTF\_Alternating, we set the hyper-parameter $\alpha$ to infinity, which essentially poses no prior on the tensor magnitude. Besides, because BTF\_Bayesian only supports the logistic link, we use the logistic link in all three BTF methods. 

We evaluate each method by two metrics. The first metric is the root mean square error, $\text{RMSE}=\left(\sqrt{\prod_k d_k}\right)^{-1} \FnormSize{}{\widehat{ \mathbb{E}(\tY)}-\mathbb{E}(\tY)}$, where $\widehat{ \mathbb{E}(\tY)}$ denotes the estimated probability tensor. For BooleanTF, this quantity is represented as the posterior mean of $\tY$~\citep{miettinen2011boolean}, and for the other three methods, $\widehat{ \mathbb{E}(\tY)}=\text{logit}(\hat{\Theta})$. The second metic is the misclassification error rate, $\text{MER}=\left(\prod_k d_k\right)^{-1} \zeronormSize{}{\mathds{1}_{\widehat{\mathbb{E}(\tY)}\geq 0.5}-\mathds{1}_{\mathbb{E}(\tY)\geq 0.5}}$. Here the indicator function is applied to tensors in an element-wise manner, and $\zeronormSize{}{\cdot}$ counts the number of non-zero entries in the tensor. These metrics reflect two aspects of the statistical error. RMSE summarizes the estimation error in the parameters, whereas MER summarizes the classification errors among 0's and 1's. 

We simulate data from two different models, and in both cases, the signal tensors do not necessarily follow an exact low-rank CP structure. Therefore, in addition to method comparison, it also allows us to evaluate the robustness of our method under potential model misspecification. 

The first model is a boolean (logical) tensor model following the setup in \cite{rukat2018probabilistic}. We first simulate noiseless tensors $\tY=\entry{y_{ijk}}$ from the following model,
\begin{equation}
\displaystyle y_{ijk} = \bigvee _{r=1}^R \bigwedge_{ijk} a_{ir} b_{jr} c_{kr}, \;
\textrm{with}\ a_{ir} \sim \text{Ber}(p^{a}_{ir}),\ b_{jr}\sim \textrm{Bernoulli}(p^{b}_{jr}), c_{kr}\sim \textrm{Bernoulli}(p^{c}_{kr}),
\end{equation}
where the binary factor entries $\{a_{ir}\}$, $\{b_{jr}\}$, $\{c_{kr}\}$ are mutually independent with each other, the factor probabilities $\{p^a_{ir}\}$, $\{p^b_{jr}\}$, $\{p^c_{kr}\}$ are generated i.i.d.\ from Beta(2,4), and $\vee$ and $\wedge$ denote the logical OR and AND operations, respectively. Equivalently, the tensor entry is 1 if and only if there exists one or more components in which all corresponding factor entries are 1. It is easy to verify that
\[
\mathbb{E}(y_{ijk}|\{p^a_{ir},p^b_{jr},p^c_{kr}\})=1-\prod_{r=1}^R\left(1-p^{a}_{ir}p^{b}_{jr}p^{c}_{kr}\right).
\]
We then add contamination noise to $\tY$ by flipping the tensor entries $0\leftrightarrow 1$ i.i.d.\ with probability 0.1. We consider the tensor dimension $d_1=d_2=d_3=50$ and the boolean rank $R\in\{10,15,20,25,30\}$.

Figure~\ref{fig:simulation}(a)-(b) shows the performance comparison based on $n_{\text{sim}}=30$ replications. We find that the three BTF methods outperform BooleanTF in RMSE. The results shows the advantage of a probabilistic model, upon which all three BTF methods are built. In contrast, BooleanTF seeks patterns in a specific data realization, but does not target for population estimation. For classification, BooleanTF performs reasonably well in distinguishing 0's versus 1's, which agrees with the data mining nature of BooleanTF. It is also interesting to see that MER peaks at $R= 20$. Further investigation reveals that this setting corresponds to the case when the Bernoulli probabilities $\mathbb{E}(\tY)$ concentrate around $0.5$, which becomes particularly challenging for classification. Actually, the average Bernoulli probability for $R=$10, 15, 20, 25, 30 is  0.31, 0.44, 0.53, 0.61, 0.68, respectively. Figure~\ref{fig:simulation}(b) also shows that BTF\_Alternating and BTF\_Gradient achieve a smaller  classification error than BTF\_Bayesian. One possible explanation is that the normal prior in BTF\_Bayesian has a poor distinguishing power around $\theta \approx 0$, which corresponds to the hardest case when Bernoulli probability $\approx 0.5$.

\begin{figure}[t]
\begin{center}
\includegraphics[width=.9\textwidth]{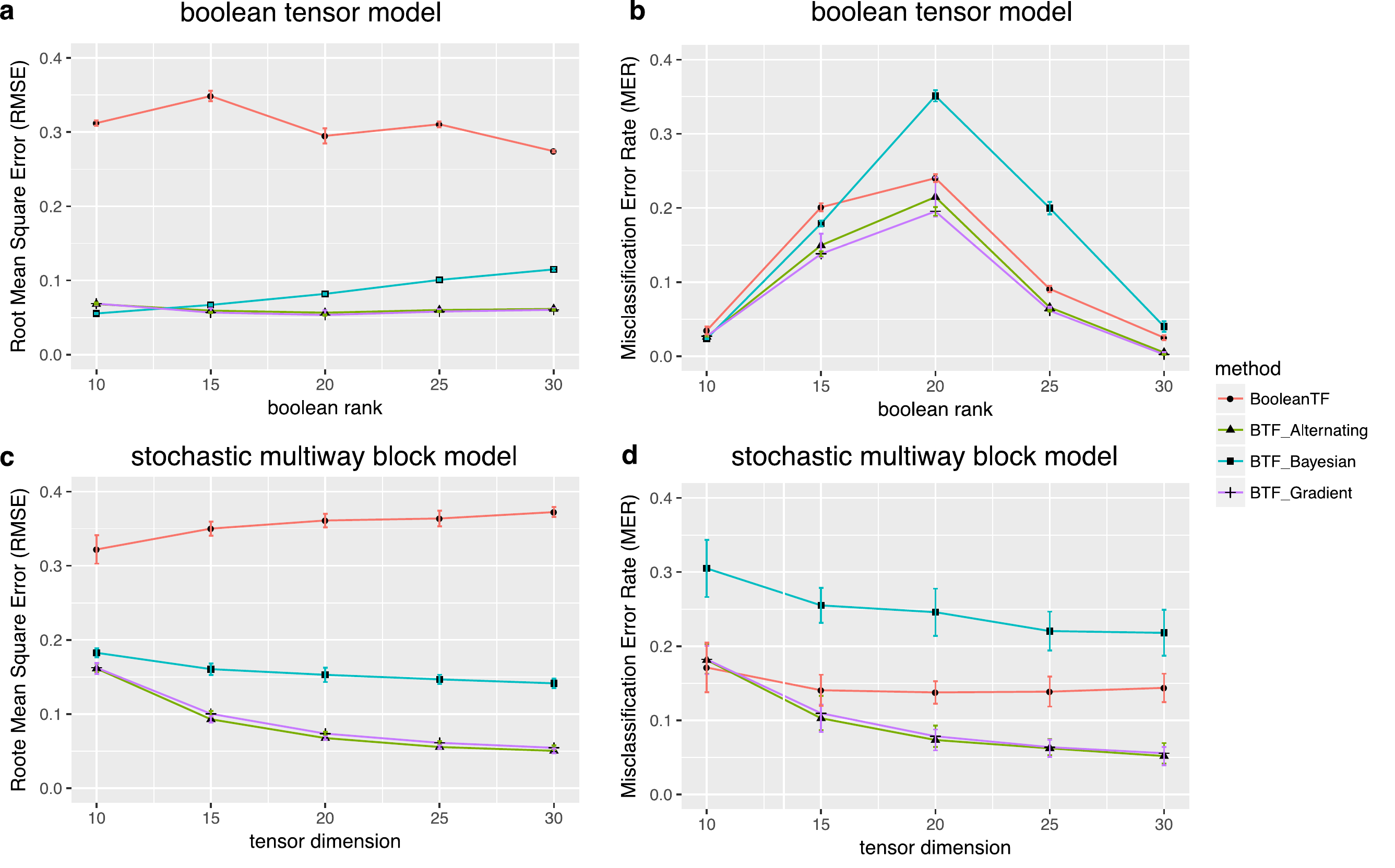}
\caption{Performance comparison in terms of root mean squared error and misclassification error rate. (a)-(b) Estimation errors under the boolean tensor model. (c)-(d) Estimation errors under the stochastic multiway block model. Error bars represent one standard error around the mean.}\label{fig:simulation}
\vspace{-1cm}
\end{center}
\end{figure}

The second model is the stochastic multi-way block model considered in Section~\ref{sec:block}, with the block means $\{c_{m_1m_2m_3}\}$ generated from the combinatorial-mean sub-model. Figure~\ref{fig:simulation}(c)-(d) shows the performance comparison, and a similar pattern is observed. The two frequentist-type BTF methods, BTF\_Gradient and BTF\_Alternating, behave numerically similarly, and they outperform the other alternatives. In particular, the BTF methods exhibit decaying estimation errors, whereas BooleanTF appears to flatten out as dimension grows. This observation suggests that, compared to the algorithmic error, the statistical error is likely more dominating in this setting.

\section{Data Applications}
\label{sec:realdata}

We next illustrate the applicability of our binary tensor decomposition method on a number of data sets, with applications ranging from social networks, email communication networks, to brain structural connectivities. We consider two tasks: one is tensor completion, and the other is clustering along one of the tensor modes. The data sets include: 
\begin{itemize}
\item \textit{Kinship}~\citep{nickel2011three}: This is a 104 $\times$ 104 $\times$ 26 binary tensor consisting of 26 types of relations among a set of 104 individuals in Australian Alyawarra tribe. The data was first collected by~\cite{denham2005multiple} to study the kinship system in the Alyawarra language. The tensor entry $\tY(i,j,k)$ is 1 if individual $i$ used the kinship term $k$ to refer to individual $j$, and 0 otherwise.  

\item \textit{Nations}~\citep{nickel2011three}: This is a 14 $\times$ 14 $\times$ 56 binary tensor consisting of 56 political relations of 14 countries between 1950 and 1965. The tensor entry indicates the presence or absence of a political action, such as ``treaties'', ``sends tourists to'', between the nations. We note that the relationship between a nation and itself is not well defined, so we exclude the diagonal elements $\tY(i,i,k)$ from the analysis. 

\item \textit{Enron}~\citep{zhe2016distributed}: This is a 581 $\times$ 124 $\times$ 48 binary tensor consisting of the three-way relationship, (sender, receiver, time), from the Enron email data set. The Enron data is a large collection of emails from Enron employees that covers a period of 3.5 years. Following~\cite{zhe2016distributed}, we take a subset of the Enron data and organize it into a binary tensor, with entry $\tY(i,j,k)$ indicating the presence of emails from a sender $i$ to a receiver $j$ at a time period $k$. 

\item \textit{HCP}~\citep{wang2019common}: This is a 68 $\times$ 68 $\times$ 212 binary tensor consisting of structural connectivity patterns among 68 brain regions for 212 individuals from Human Connectome Project (HCP). All the individual images were preprocessed following a standard pipeline~\citep{zhang2018mapping}, and the brain was parcellated to 68 regions-of-interest following the Desikan atlas~\citep{desikan2006automated}. The tensor entries encode the presence or absence of fiber connections between those 68 brain regions for each of the 212 individuals. 
\end{itemize}

The first task is binary tensor completion, where we apply tensor decomposition to predict the missing entries in the tensor. We compare our binary tensor decomposition method using a logistic link function with the classical continuous-valued tensor decomposition. Specifically, we split the tensor entries into 80\% training set and 20\% testing set, while ensuring that the nonzero entries are split the same way between the training and testing data. The entries in the testing data are masked as missing, and we predict them based on the tensor decomposition from the training data. The training-testing split is repeated five times, and we report the average area under the receiver operating characteristic curve (AUC) and RMSE across five splits in Table~\ref{tab:link}. It is clearly seen that the binary tensor decomposition substantially outperforms the classical continuous-valued tensor decomposition. In all data sets, the former obtains a much higher AUC and mostly a lower RMSE. We also report in Table~\ref{tab:link} the percentage of nonzero entries for each data. We find that our decomposition method performs well even in the sparse setting. For instance, for the Enron data set, only 0.01\% of the entries are non-zero. The classical decomposition almost blindly  assigns 0 to all the hold-out testing entires, resulting in a poor AUC of 79.6\%. By comparison, our binary tensor decomposition achieves a much higher classification accuracy, with AUC = 94.3\%.

\begin{table}[t!]
\centering
\begin{tabular}{lc|cc|cc} \hline
&  &\multicolumn{4}{c}{Tensor decomposition method}\\ \cline{3-6}
Data set& Non-zeros&\multicolumn{2}{c|}{Binary (logistic link)} &  \multicolumn{2}{c}{Continuous-valued }\\\hline
&&AUC& RMSE& AUC & RMSE\\
\emph{Kinship}&3.80\%&0.9708&$1.2\times 10^{-4}$& 0.9436&$1.4\times 10^{-3}$\\
\emph{Nations}& 21.1\%&0.9169&$1.1\times 10^{-2}$&0.8619&$2.2\times 10^{-2}$\\
\emph{Enron}&0.01\%&0.9432&$6.4\times 10^{-3}$& 0.7956&$6.3\times 10^{-5}$\\ 
\emph{HCP}&35.3\% &0.9860& $1.3\times 10^{-3}$&0.9314&$1.4\times 10^{-2}$\\ \hline
\end{tabular}
\caption{Tensor completion for the four binary tensor data sets using two methods: the proposed binary tensor decomposition, and the classical continuous-valued tensor decomposition.}
\label{tab:link}
\end{table}

\begin{figure}[t]
\centering
\includegraphics[width=.9\textwidth]{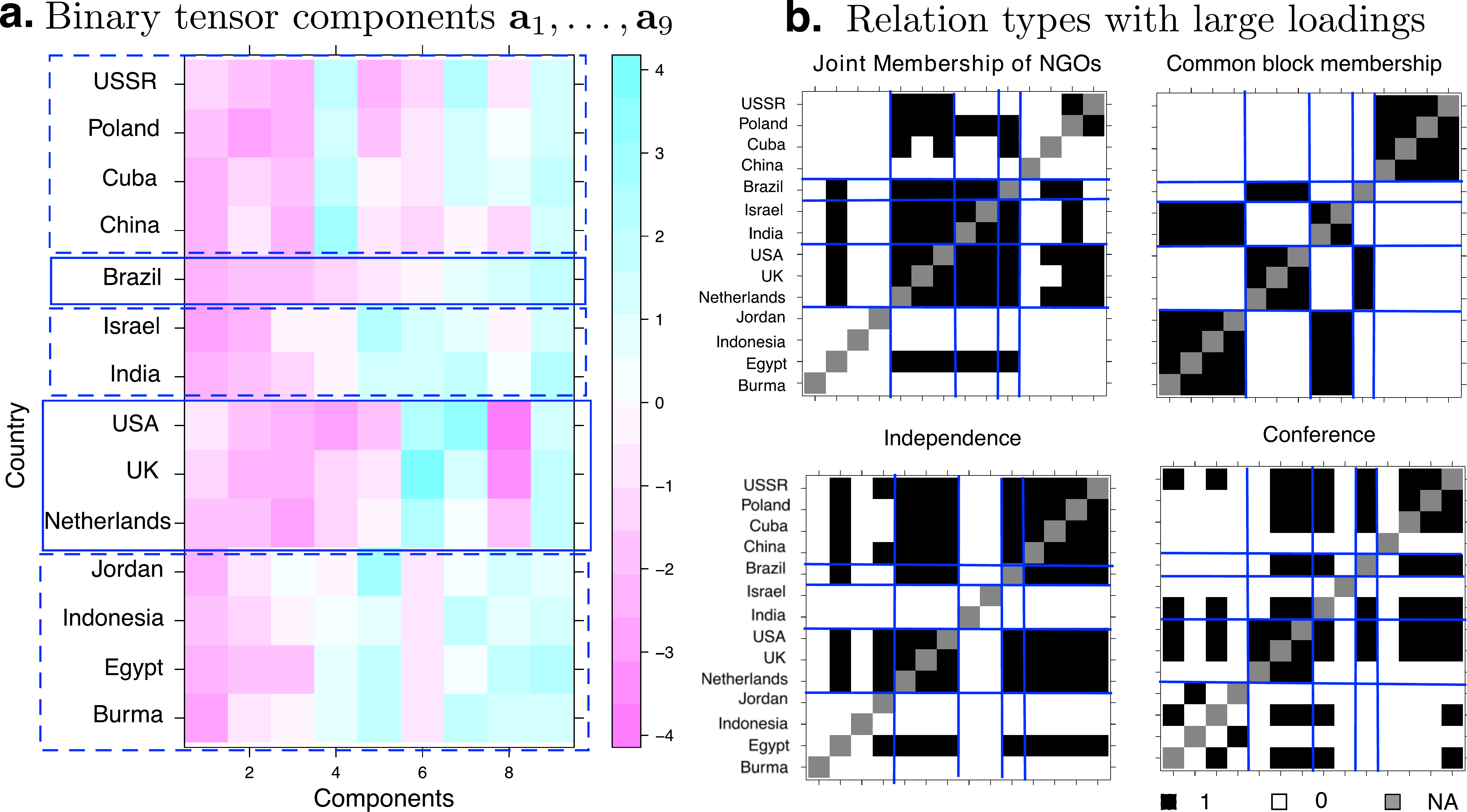}
\caption{Analysis of the \emph{Nations} data set. (a) Top nine tensor components in the country mode from the binary tensor decomposition. The overlaid box depicts the results from the $K$-means clustering. (b) Relation types with large loadings. Top four relationships identified from the top tensor components are plotted. }\label{fig:nation}
\vspace{-.5cm}
\end{figure}

The second task is clustering. We perform the clustering analyses on two data sets, \emph{Nations} and \emph{HCP}. For the \emph{Nations} data set, we utilize a two-step procedure by first applying the proposed binary tensor decomposition method with the logistic link, then applying the $K$-means clustering along the country mode from the decomposition. In the first step, the BIC criterion suggests $R = 9$ factors, and in the second step, the classical elbow method selects 5 clusters out of the 9 components. Figure~\ref{fig:nation}(a) plots the 9 tensor factors along the country mode. It is interesting to observe that the countries are partitioned into one group containing those from the communist bloc, two groups from the western bloc, two groups from the neutral bloc, and Brazil forming its own group. We also plot the top four relation types based on their loadings in the tensor factors along the relationship mode in Figure~\ref{fig:nation}(b). The partition of the countries is consistent with their relationship patterns in the adjacency matrices. Indeed, those countries belonging to the same group tend to have similar linking patterns with other countries, as reflected by the block structure in Figure~\ref{fig:nation}(b).

\begin{figure}[t]
\centering
\includegraphics[width=0.6\textwidth]{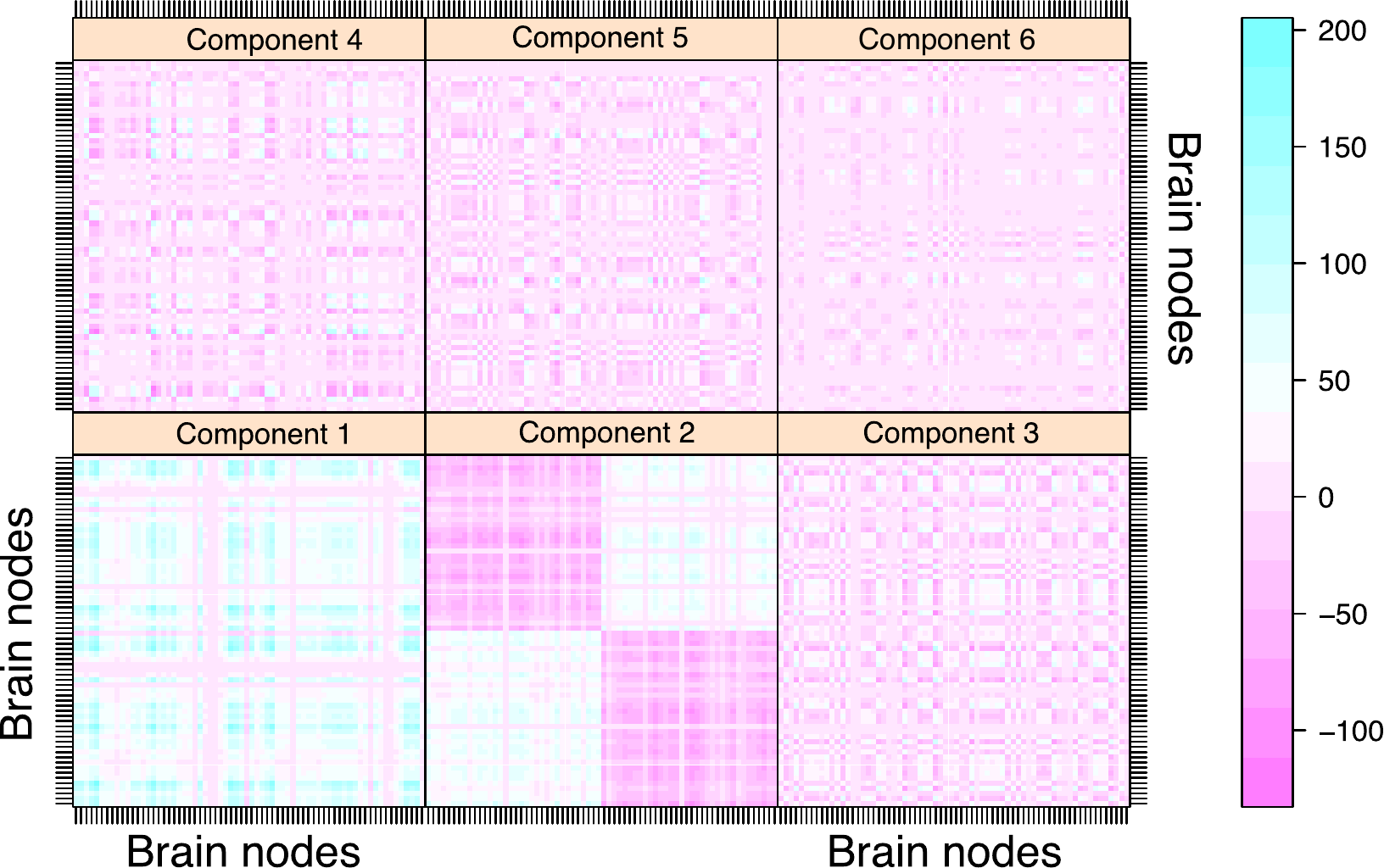}
\caption{Heatmap for binary tensor components across brain regions in the \emph{HCP} analysis. The connection matrix $\mA_r=\lambda_r\ma_r\otimes \ma_r$ is plotted for component $r\in[6]$.}
\label{fig:brain}
\end{figure}

\begin{figure}[t]
\centering
\includegraphics[width=0.6\textwidth]{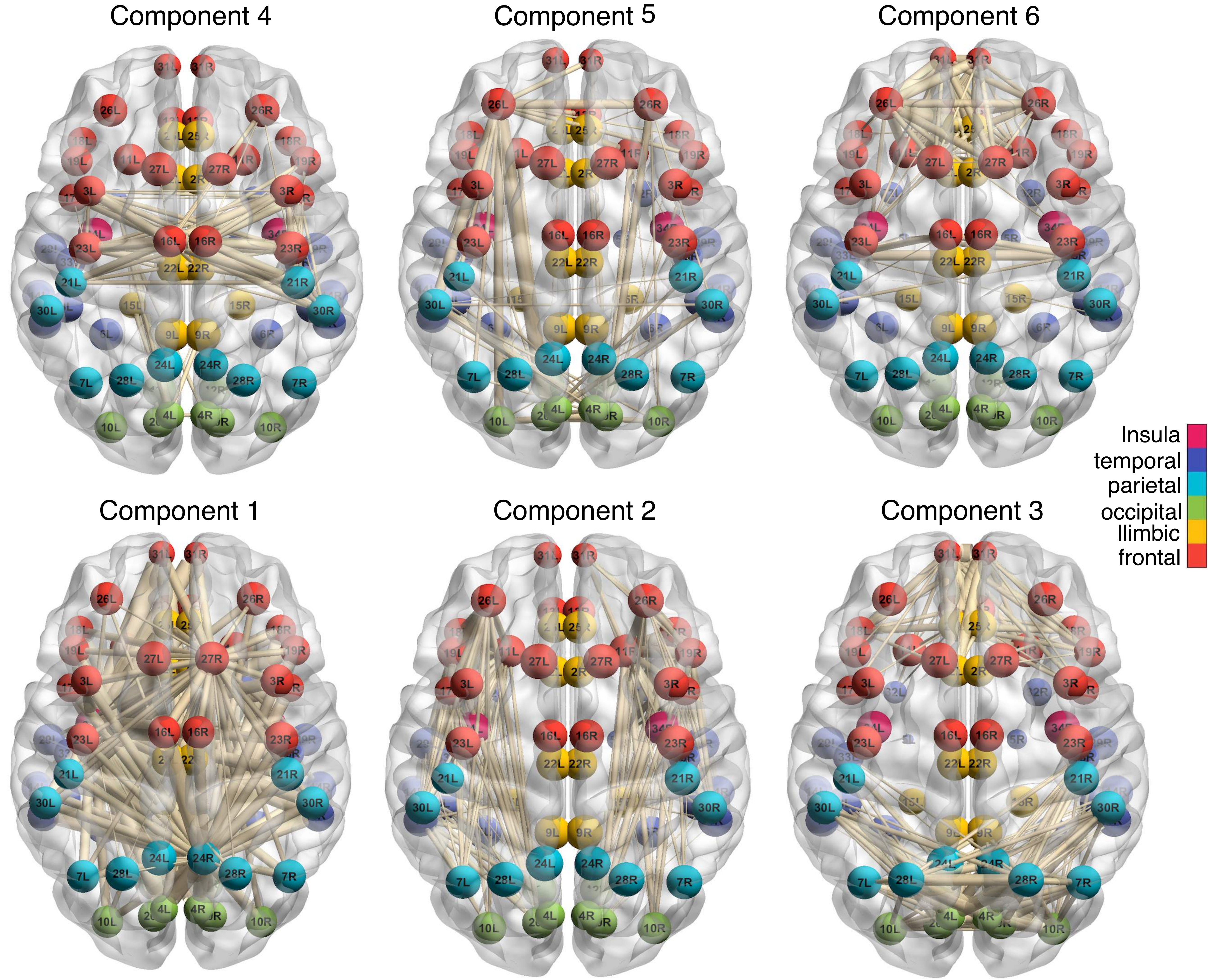}
\caption{Edges with high loadings in the \emph{HCP} analysis. The top 10\% edges with positive loadings $\mA_r(i,j)$ are plotted, for $r\in[6]$ and $(i,j)\in[68]^2$. The width of the edge is proportional to the magnitude of $\mA_r(i,j)$.}
\label{fig:component}
\vspace{-.5cm}
\end{figure}

We also perform the clustering analysis on the data set \emph{HCP}. We apply the decomposition method with the logistic link and BIC-selected rank $R = 6$. Figure~\ref{fig:brain} plots the heatmap for the top 6 tensor components across the 68 brain regions, and Figure~\ref{fig:component} shows the edges with high loadings based on the tensor components. Edges are overlaid on the brain template BrainMesh\_ICBM152~\citep{xia2013brainnet}, and nodes are color coded based on their regions. We see that the brain regions are spatially separated into several groups and that the nodes within each group are more densely connected with each other. Some interesting spatial patterns in the brain connectivity are observed. For instance, the edges captured by tensor component 2 are located within the cerebral hemisphere. The detected edges are association tracts consisting of the long association fibers, which connect different lobes of a hemisphere, and the short association fibers, which connect different gyri within a single lobe. In contrast, the edges captured by tensor component 3 are located across the two hemispheres. Among the nodes with high connection intensity, we identify superior frontal gyrus, which is known to be involved in self-awareness and sensory system~\citep{goldberg2006brain}. We also identify corpus callosum, which is the largest commissural tract in the brain that connects two hemispheres. This is consistent with brain anatomy that suggests the key role of corpus callosum in facilitating interhemispheric connectivity~\citep{roland2017role}. Moreover, the edges shown in tensor component 4 are mostly located within the frontal lobe, whereas the edges in component 5 connect the frontal lobe with parietal lobe.

\section{Proofs}\label{sec:proofs}

\subsection{Proof of Theorem~\ref{thm:rate}}
\label{sec:proof-thm1}

\begin{proof}
It follows from the expression of $\tL_\tY(\Theta)$ that
\begin{align}
{\partial \tL_\tY\over \partial \theta_{i_1,\ldots,i_K}}&={\dot{f}(\theta_{i_1,\ldots,i_K})\over f(\theta_{i_1,\ldots,i_K})}\mathds{1}_{\{y_{i_1,\ldots,i_K}=1\}}-{\dot{f}(\theta_{i_1,\ldots,i_K})\over 1-f(\theta_{i_1,\ldots,i_K})}\mathds{1}_{\{y_{i_1,\ldots,i_K}=-1\}},\\
{\partial^2 \tL_\tY\over\partial \theta^2_{i_1,\ldots,i_K}}&=-\left[  { \dot{f}^2(\theta_{i_1,\ldots,i_K}) \over f^2(\theta_{i_1,\ldots,i_K})} -{\ddot{f}(\theta_{i_1,\ldots,i_K}) \over f(\theta_{i_1,\ldots,i_K})} \right]\mathds{1}_{\{y_{i_1,\ldots,i_K}=1\}}\\
&\quad -\left[ { \ddot{f}(\theta_{i_1,\ldots,i_K}) \over 1-f(\theta_{i_1,\ldots,i_K})} +{\dot{f}^2(\theta_{i_1,\ldots,i_K}) \over 
\{1-f(\theta_{i_1,\ldots,i_K})\}^2} \right]\mathds{1}_{\{y_{i_1,\ldots,i_K}=-1\}}, \\
\partial^2 \tL_\tY\over\partial \theta_{i_1,\ldots,i_K}\partial \theta_{i'_1,\ldots,i'_K}&=0,\quad\text{if}\quad (i_1,\ldots,i_K)\neq (i'_1,\ldots,i'_K).
\end{align}
Define
\[
\tS_{\tY}(\trueT)=\left\llbracket \partial \tL_\tY\over \partial \theta_{i_1,\ldots,i_K} \right\rrbracket \Big|_{\Theta=\trueT}, \quad \textrm{ and } \quad
\tH_{\tY}(\trueT)=\left\llbracket \partial^2 \tL_\tY\over\partial \theta_{i_1,\ldots,i_K}\partial \theta_{i'_1,\ldots,i'_K} \right\rrbracket \Big|_{\Theta=\trueT},
\]
where $\tS_{\tY}(\trueT)$ is the collection of the score functions evaluated at $\trueT$, and $\tH_{\tY}(\trueT)$ is the collection of the Hession functions evaluated at $\trueT$. We organize the entries in $\tS_{\tY}(\trueT)$ and treat $\tS_{\tY}(\trueT)$ as an order-$K$ dimension-$(d_1,\ldots,d_K)$ tensor. Similarly, we organize the entries in $\tH_\tY(\trueT)$ and treat $\tH_\tY(\trueT)$ as a $\prod_k d_k$-by-$\prod_k d_k$ matrix. By the second-order Taylor's theorem, we expand $\tL_\tY(\Theta) $ around $\trueT$ and obtain
\begin{equation}\label{eq:taylor}
\tL_{\tY}(\Theta)=\tL_{\tY}(\trueT)+\langle S_{\tY}(\trueT), \Theta-\trueT  \rangle+{1\over 2} \Vec(\Theta-\trueT)^T \tH_{\tY}(\check\Theta)\Vec(\Theta-\trueT),
\end{equation}
where $\check\Theta = \gamma \trueT+(1-\gamma )\Theta$ for some $\gamma\in[0,1]$, and $\tH_{\tY}(\check\Theta)$ denotes the $\prod_k d_k$-by-$\prod_k d_k$ Hession matrix evaluated at $\check\Theta$. 

We first bound the linear term in \eqref{eq:taylor}. Note that, by Lemma~\ref{lem:inq}, 
\begin{equation}\label{eq:linear}
|\langle S_{\tY}(\trueT), \Theta-\trueT  \rangle|\leq \normSize{}{S_{\tY}(\trueT)} \nnormSize{}{\Theta-\trueT}.
\end{equation}
Define 
\[
s_{i_1,\ldots,i_K}={\partial \tL_\tY\over \theta_{i_1,\ldots,i_K}}\big|_{\Theta=\trueT} \;\; \textrm{ for all } \; (i_1,\ldots,i_K)\in[d_1]\times\cdots\times [d_K].
\]
It follows from model~\eqref{eq:model} and the expression for $L_\alpha$ that $S_{\tY}(\trueT)=\entry{s_{i_1,\ldots,i_K}}$ is a random tensor whose entries are independently distributed and satisfy 
\begin{equation}\label{eq:norm}
\mathbb{E}(s_{i_1,\ldots,i_K})=0,\quad |s_{i_1,\ldots,i_K}|\leq L_\alpha, \quad \text{for all }(i_1,\ldots,i_K)\in[d_1]\times \cdots \times [d_K].
\end{equation}
By Lemma~\ref{lem:noisytensor}, with probability at least $1-\exp(-C_1 \log K \sum_kd_k)$, we have
\begin{equation}\label{eq:normrandom} 
\normSize{}{S_\tY(\trueT)} \leq C_2 L_\alpha\sqrt{\sum_k d_k},
\end{equation}
where $C_1, C_2$ are two positive constants. Furthermore, note that $\text{rank}(\Theta)\leq R$, $\text{rank}(\trueT)\leq R$, so $\text{rank}(\Theta-\trueT)\leq 2R$. By Lemma~\ref{lem:nuclear}, $\nnormSize{}{\Theta-\trueT}\leq (2R)^{K-1\over 2}\FnormSize{}{\Theta-\trueT}$. Combining~\eqref{eq:linear}, \eqref{eq:norm} and \eqref{eq:normrandom}, we have that, with probability at least $1-\exp(-C_1 \log K \sum_kd_k)$,
\begin{equation}\label{eq:linearconclusion}
|\langle S_{\tY}(\trueT), \Theta-\trueT  \rangle | \leq C_2 L_\alpha  \sqrt{R^{K-1} \sum_k d_k}  \FnormSize{}{\Theta-\trueT},
\end{equation}
where the constant $C_2$ absorbs all factors that depend only on $K$. 

We next bound the quadratic term in \eqref{eq:taylor}. Notice that
\begin{align}\label{eq:quadratic}
 \Vec(\Theta-\trueT)^T H_{\tY}(\check{\Theta})\Vec(\Theta-\trueT)&=\sum_{i_1,\ldots,i_K}\left( {\partial^2\tL_{\tY}\over \partial \theta^2_{i_1,\ldots,i_K}} \Big|_{\Theta=\check\Theta} \right)(\Theta_{i_1,\ldots,i_K}-\Theta_{{\text{true}},i_1,\ldots,i_K})^2 \nonumber \\
&\leq - \gamma_\alpha\sum_{i_1,\ldots,i_K}(\Theta_{i_1,\ldots,i_K}-\Theta_{\text{true},i_1,\ldots,i_K})^2 \nonumber \\
&=-\gamma_\alpha\FnormSize{}{\Theta-\trueT}^2,
\end{align}
where the second line comes from the fact that  $\mnormSize{}{\check\Theta}\leq \alpha$ and the definition of $\gamma_\alpha$.

Combining~\eqref{eq:taylor}, \eqref{eq:linearconclusion} and~\eqref{eq:quadratic}, we have that, for all $\Theta\in\tD$, with probability at least $1-\exp(-C_1 \log K \sum_kd_k)$, 
\begin{equation}
\tL_\tY(\Theta)\leq \tL_{\tY}(\trueT)+C_2L_\alpha  \left(R^{K-1}\sum_k d_k\right)^{1/2}  \FnormSize{}{\Theta-\trueT}-{\gamma_\alpha\over 2}\FnormSize{}{\Theta-\trueT}^2,
\end{equation}
In particular, the above inequality also holds for $\hat \Theta\in\tD$. Therefore,
\[
\tL_\tY(\hat \Theta)\leq \tL_{\tY}(\trueT)+C_2L_\alpha \left(R^{K-1}\sum_k d_k\right)^{1/2}  \FnormSize{}{\hat \Theta-\trueT}-{\gamma_\alpha\over 2} \FnormSize{}{\hat \Theta-\trueT}^2.
\]
Since $\hat \Theta=\argmax_{\Theta\in\tD}\tL_\tY(\Theta)$, $\tL_\tY(\hat \Theta)-\tL_{\tY}(\trueT)\geq 0$, which gives
\[
C_2L_\alpha \left(R^{K-1}\sum_k d_k\right)^{1/2}  \FnormSize{}{\hat \Theta-\trueT}-{\gamma_\alpha\over 2}\FnormSize{}{\hat \Theta-\trueT}^2\geq 0.
\]
Henceforth, 
\[
{1\over \sqrt{\prod_k d_k}} \FnormSize{}{\hat \Theta-\trueT}\leq {2C_2L_\alpha \sqrt{R^{K-1}\sum_k d_k}\over \gamma_\alpha \sqrt{\prod_k d_k}}=2C_2{L_\alpha\over \gamma_\alpha} \sqrt{ R^{K-1}\sum_k d_k \over \prod_k d_k}.
\]

\end{proof}

\begin{rmk}\label{rmk:level}
Based on the proof of Theorem~\ref{thm:rate}, we can relax the global optimum assumption on the estimator $\hat \Theta$. The same convergence rate holds in the level set $\{\hat \Theta\in\tD\colon \tL_\tY(\hat \Theta)\geq \tL_{\tY}(\trueT)\}$. 
\end{rmk}

\subsection{Proof of Theorem~\ref{thm:minimax}}
\label{sec:proof-thm2}

\begin{proof}
Without loss of generality, we assume $d_1=d_{\max}$, and denote by $d_{\text{total}}=\prod_{k\geq 1}d_k$. Let $\gamma\in[0,1]$ be a constant to be specified later.  Our strategy is to construct a finite set of tensors $\tX=\{\Theta_i\colon i=1,\ldots \}\subset \tD(R,\alpha)$ satisfying the properties of (i)-(iv) in Lemma~\ref{lem:construction}. By Lemma~\ref{lem:construction}, such a subset of tensors exist. For any given tensor  $\Theta\in\tX$, let $\mathbb{P}_{\Theta}$ denote the distribution of $\tY|\Theta$, where $\tY$ is the observed binary tensor. In particular, $\mathbb{P}_{\mathbf{0}}$ is the distribution of $\tY$ induced by the zero parameter tensor $\mathbf{0}$; i.e., the distribution of $\tY$ conditional on the coefficient tensor $\Theta=\mathbf{0}$. Then conditioning on $\Theta\in\tX$, the entries of $\tY$ are independent Bernoulli random variables. In addition, we note that (c.f.\  Lemma~\ref{lem:KL}), 
\begin{align}\label{eq:KL-link}
\textrm{for the logistic link:} \quad &  \text{KL}(\mathbb{P}_{\Theta}, \mathbb{P}_{\mathbf{0}}) \leq {4\over \sigma^2}\FnormSize{}{\Theta}^2, \nonumber \\
\text{for the probit link:}  \quad&  \text{KL}(\mathbb{P}_{\Theta}, \mathbb{P}_{\mathbf{0}}) \leq {2\over \pi \sigma^2}\FnormSize{}{\Theta}^2, \\
\text{for the Laplacian link:}  \quad&  \text{KL}(\mathbb{P}_{\Theta}, \mathbb{P}_{\mathbf{0}}) \leq {1\over \sigma^2}\FnormSize{}{\Theta}^2,  \nonumber
\end{align}
where $\sigma$ is the scale parameter. Therefore, under these link functions, the KL divergence between $\mathbb{P}_{\Theta}$ and $\mathbb{P}_{\mathbf{0}}$ satisfies 
\begin{equation}\label{eq:KL}
\text{KL}(\mathbb{P}_{\Theta},\mathbb{P}_{\mathbf{0}})\leq {2\over \pi \sigma^2}\FnormSize{}{\Theta}^2\leq {2\over \pi }Rd_1\gamma^2,
\end{equation}
where the first inequality comes from \eqref{eq:KL-link}, and the second inequality comes from property (iii) of $\tX$. From~\eqref{eq:KL} and the property (i), we conclude that the inequality 
\begin{equation}\label{eq:totalKL}
{1\over \text{Card}(\tX)-1}\sum_{\Theta \in\tX}\text{KL}(\mathbb{P}_{\Theta}, \mathbb{P}_{\mathbf{0}})\leq \varepsilon \log\left\{\text{Card}(\tX)-1 \right\}
\end{equation}
is satisfied for any $ \varepsilon \geq 0$, when$\gamma\in[0,1]$ is chosen to be sufficiently small depending on $\varepsilon$, e.g., $\gamma \leq \sqrt{\varepsilon \log 2 \over 8}$. By applying \citet[Theorem 2.5]{tsybakov2009introduction} to~\eqref{eq:totalKL}, and in view of the property (iv), we obtain that 
\begin{equation}\label{eq:final}
\inf_{\hat \Theta}\sup_{\trueT\in \tX}\mathbb{P}\left(\FnormSize{}{\hat \Theta- \trueT}\geq  {\gamma\over 8} \min\left\{ \alpha\sqrt{d_{\text{total}}}, \sigma\sqrt{Rd_1}\right\} \right)\geq {1\over 2}\left(1-2\varepsilon-\sqrt{16 \varepsilon \over Rd_1\log 2}\right).
\end{equation}
Note that $\FnormSize{}{\hat \Theta- \trueT}=\sqrt{d_{\text{total}}}\text{Loss}(\hat \Theta, \trueT)$ and $\tX\subset \tD(R,\alpha)$. By taking $\varepsilon=1/10$ and $\gamma=1/11$, we conclude from~\eqref{eq:final} that
\begin{align}
\inf_{\hat \Theta}\sup_{\trueT\in \tD(R,\alpha)}\mathbb{P}\left(\text{Loss}(\hat \Theta, \trueT)\geq {1\over 88^2}\min\left \{ \alpha, \sigma \sqrt{Rd_{\max}\over d_{\text{total}}}\right \}\right)&\geq {1\over 2}\left({4\over 5}-\sqrt{1.6\over Rd_{\max} \log 2}\right),
\end{align}
which is $\geq1/8$.
\end{proof}

\subsection{Proof of Theorem~\ref{thm:ratereal}}
\label{sec:proof-thm3}

\begin{proof}
The argument is similar as that in the proof of Theorem~\ref{thm:minimax}. Specifically, we construct a set of tensors $\tX\subset \tD(R,\alpha)$ such that, for all $\Theta\in\tX$, $\Theta$ satisfies the properties (i) to (iv) of Lemma~\ref{lem:construction}. Given a continuous-valued tensor $\tY$, let $\mathbb{P}_{\Theta}$ denote the distribution of $\tY|\Theta$ according to the Gaussian model; that is, $
\tY=\entry{y_{i_1,\ldots,i_K}}|\Theta\sim_{\text{i.i.d.}} N(0,\sigma^2)$. Note that, for the Gaussian distribution,
\[
KL(\mathbb{P}_{\Theta}, \mathbb{P}_{\bf 0})={\FnormSize{}{\Theta}\over 2\sigma^2}\leq {1\over 16}Rd_1\gamma^2.
\]
So the condition
\begin{equation}\label{eq:realKL}
{1\over \text{Card}(\tX)-1}\sum_{\Theta \in\tX}\text{KL}(\mathbb{P}_{\Theta}, \mathbb{P}_{\mathbf{0}})\leq \varepsilon \log\left(\text{Card}(\tX)-1 \right)
\end{equation}
is satisfied for any $ \varepsilon \geq 0$ when $\gamma\in[0,1]$ is chosen to be sufficiently small depending on $\varepsilon$. In view of the property (iv) and~\eqref{eq:realKL}, the conclusion follows readily from the application of \citet[Theorem 2.5]{tsybakov2009introduction}. 
\end{proof}

\subsection{Proof of Proposition~\ref{prop:alg}}

\begin{proof}The proof of the global convergence is similar to that of \citet[Proposition 1]{zhou2013tensor}. We present the main ideas here for completeness. By Assumption (A2), the block update is well-defined and differentiable. The isolation of stationary points ensures that there are only finite number of stationary points. It suffices to show that every sub-sequence of $\mA^{(t)}$ convergences to a same limiting point.

Let $\mA^{(t_n)}$ be one subsequence with limiting point $\mA^*$. We aim to show that $\mA^*$ is the only limiting point for all possible subsequences in $\mA^{(t)}$. As the algorithm monotonically increases the objective value, the limiting point $\mA^*$ is a stationary point of $\tL$. Now take the set of all limiting points, which is contained in the set $\{\mA\colon \tL(\mA)\geq \tL(\mA^{(0)})\}$, and is thus compact due to (A1). The compactness of the set of limiting points implies that the set is also connected~\cite[Propositions 8.2.1 and 15.4.2]{lange2010numerical}. Note that a connected subset of the finite stationery points is a single point. Henceforth, every subsequence of $\mA^{(t)}$ convergences to a stationary point of $\tL$.

The local convergence follows from~\citet[Theorem 3.3]{uschmajew2012local} and \citet[Proposition 1]{zhou2013tensor}. Here we elaborate on the contraction parameter $\rho\in(0,1)$ in our context. Let $\mH$ denote the Hession matrix of the log-likelihood $\tL(\mA)$ at the local maximum $\mA^*$. We partition the Hession into $\mH=\mL+\mD+\mL^T$, where $\mL$ is the strictly block lower triangular part and $\mD$ is the block diagonal part. By Assumption (A2), each sub-block of the Hession is negative definite, so the diagonal entries of $\mD$ are strictly negative. This ensures that the block lower triangular matrix $\mL+\mD$ is invertible. The differential of the iteration map $\tM\colon \mA^{(t)}\mapsto \mA^{(t+1)}$ can be shown as $\tM'=-(\mL+\mD)^{-1}\mL$ \citep[Lemma 2]{bezdek2003convergence}. Therefore $\rho=\max_i \left| \lambda_i\left\{ (\mL+\mD)^{-1}\mL \right\} \right| \in(0,1)$, where $\lambda_i\{\cdot\}$ denotes the $i$-th singular value of the matrix. By the contraction principle,
\[
\FnormSize{}{\mA^{(t)}-\mA^*}\leq \rho^t \FnormSize{}{\mA^{(0)}-\mA^*},
\]
for $\mA^{(0)}$ sufficiently close to $\mA^*$. Because $\Theta=\Theta(\mA)$ is local Lipschitz at $\mA^*$ with constants $c_1,c_2>0$, we have
\[
c_1\FnormSize{}{\mA^{(t)}-\mA^*} \leq  \FnormSize{}{\Theta(\mA^{(t)})-\Theta(\mA^*)} \leq c_2\FnormSize{}{\mA^{(t)}-\mA^*},
\]
for all sufficiently large $t\in\mathbb{N}_{+}$. Therefore 
\[
\FnormSize{}{\Theta(\mA^{(t)})-\Theta^*}\leq \rho^t C \FnormSize{}{\Theta(\mA^{(0)})-\Theta^*},
\]
where $C>0$ is a constant.  
\end{proof}

\subsection{Proof of Theorem~\ref{thm:empirical}}
\begin{proof}
Based on Remark~\ref{rmk:level} after Theorem~\ref{thm:rate}, we have
\begin{equation}
\text{Loss}(\Theta^*, \trueT)\leq {C_2 L_\alpha\over \gamma_\alpha}\sqrt{R^{K-1}\sum_k d_k\over \prod_k d_k}.
\end{equation}
Meanwhile, Proposition~\ref{prop:alg} implies that, there exists an iteration number $T_0\geq 0$, such that
\begin{equation}
\text{Loss}( \Theta^{(t)}, \Theta^* ) \leq C_1 \rho^t \text{Loss}( \Theta^{(0)}, \Theta^* ),
\end{equation}
holds for all $t\geq T_0$. Combining the above two results yields  
\begin{align}
\text{Loss}(\Theta^{(t)}, \trueT) &\leq \text{Loss}(\Theta^{(t)}, \Theta^*) + \text{Loss}(\trueT,\Theta^*)\\
& \leq C_1 \rho^t \text{Loss}(\Theta^{(0)}, \Theta^*)+  {C_2 L_\alpha\over \gamma_\alpha}\sqrt{R^{K-1}\sum_k d_k\over \prod_k d_k},
\end{align}
for all $t\geq T_0$.
\end{proof}

\section{Conclusions}
\label{sec:conclusion}

Many data tensors consist of binary observations. This article presents a general method and the associated theory for binary tensor decomposition. We have shown that the unknown parameter tensor can be accurately and efficiently recovered under suitable assumptions. When the maximum norm of the unknown tensor is bounded by a constant, our error bound is tight up to a constant and matches with the best possible error bound for the unquantized observations. 

We comment on a number of possible extensions. Our method leverages on the alternating updating algorithm for the optimization. Although a non-convex optimization procedure such as Algorithm~\ref{alg:binary} has no guarantee on global optimality, our numerical experiments have suggested that, upon random initializations, the convergence point $\hat\Theta$ is often satisfactory, in that the corresponding objective value $\tL_{\tY}(\hat\Theta)$ is close to the objective value $\tL_{\tY}(\trueT)$. We have shown in Theorem~\ref{thm:rate} that the same statistically optimal convergence rate holds, not only for the MLE, but also for every local maximizer $\hat\Theta$ with sufficiently large objective values. When starting from random initializations, there could be multiple estimates, whose objective values are all greater than $\tL_{\tY}(\trueT)$. In theory, any of those choices perform equally well in estimating $\trueT$. In this sense, local optimality is not necessarily a severe concern in our context. On the other hand, characterizing global optimality for non-convex optimization problem of this type is itself of great interest. There has been recent progress investigating the landscape of non-convex optimization involving tensors~\citep{anandkumar2014tensor, richard2014statistical, ge2017optimization}. The problem is challenging, as the geometry can  depend on multiple factors including the tensor rank, dimension, and factorization form. In some special cases such as rank-1 or orthogonally decomposable tensors, one may further obtain the required asymptotical number of initializations, however, at the cost of more stringent assumptions on the target tensor~\citep{anandkumar2014tensor, richard2014statistical}. We leave the pursuit of optimization landscape as future research.

For the theory, we assume the true rank $R$ is known, whereas for the application, we propose to estimate the rank using BIC given the data. It remains an open and challenging question to establish the convergence rate of the estimated rank~\citep{zhou2013tensor}. We leave a full theoretical investigation of the rank selection consistency and the decomposition error bound under the estimated rank as future research.

Finally, although we have concentrated on the Bernoulli distribution in this article, we may consider extensions to other exponential-family distributions, for example, count-valued tensors, multinomial-valued tensors, or tensors with mixed types of entries. Moreover, our proposed method can be thought of as a building block for more specialized tasks such as exploratory data analysis, tensor completion, compressed object representation, and network link prediction. Exploiting the benefits and properties of binary tensor decomposition in each specialized task warrants future research.

\section*{Acknowledgments}
Wang's research was partially supported by NSF grant DMS-1915978 and Wisconsin Alumni Research Foundation. Li's research was partially supported by NSF grant DMS-1613137 and NIH grants R01AG034570 and R01AG061303. The authors thank the Editor and three referees for their constructive comments.

\newpage

\appendix
\addcontentsline{toc}{section}{Appendices}
\renewcommand{\thesubsection}{\Alph{subsection}}

\section{Technical Lemmas}\label{sec:lemma}

We summarize technical lemmas that are useful for the proofs of the main theorems.

\begin{lem} \label{lem:inq}
Let $\tA, \tB$ be two order-$K$ tensors of the same dimension. Then,
\[ 
|\langle \tA,\tB\rangle| \leq \normSize{}{\tA}   \nnormSize{}{\tB}.
\]
\end{lem}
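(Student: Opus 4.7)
The statement is the standard duality between the tensor spectral norm and the tensor nuclear norm, so my plan is to prove it directly from the definitions via a near-optimal nuclear-norm decomposition of $\tB$.

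First I would recall the operational definitions: the spectral norm is
\[
\normSize{}{\tA} = \sup_{\bmu^{(k)}\in \mS^{d_k-1}} \bigl|\langle \tA,\ \bmu^{(1)}\otimes\cdots\otimes\bmu^{(K)}\rangle\bigr|,
\]
and the nuclear norm is
\[
\nnormSize{}{\tB} = \inf\Bigl\{\sum_{r=1}^{R}|\lambda_r|\ :\ \tB=\sum_{r=1}^{R}\lambda_r\,\bmu_r^{(1)}\otimes\cdots\otimes\bmu_r^{(K)},\ \bmu_r^{(k)}\in\mS^{d_k-1},\ R\in\mathbb{N}_{+}\Bigr\}.
\]
Fix $\varepsilon>0$ and pick any decomposition of $\tB$ with $\sum_r|\lambda_r|\leq \nnormSize{}{\tB}+\varepsilon$.

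Next I would expand the inner product by linearity:
\[
\langle \tA,\tB\rangle = \sum_{r=1}^{R}\lambda_r\,\bigl\langle \tA,\ \bmu_r^{(1)}\otimes\cdots\otimes\bmu_r^{(K)}\bigr\rangle.
\]
Applying the triangle inequality and bounding each rank-one inner product by the spectral norm (by definition, since $\bmu_r^{(k)}\in\mS^{d_k-1}$) yields
\[
|\langle \tA,\tB\rangle|\leq \sum_{r=1}^{R}|\lambda_r|\cdot \normSize{}{\tA} \leq \normSize{}{\tA}\bigl(\nnormSize{}{\tB}+\varepsilon\bigr).
\]
Letting $\varepsilon\to 0$ gives the claim. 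The only mild subtlety is ensuring that the infimum in the nuclear norm is attained or approached by valid finite decompositions; this is standard in finite dimensions and warrants at most one sentence. Overall there is no real obstacle — the proof is a one-line application of the spectral/nuclear duality, executed by passing through the defining decomposition of $\nnormSize{}{\tB}$.
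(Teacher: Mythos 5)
Your proof is correct and follows essentially the same route as the paper: expand $\tB$ via a nuclear-norm decomposition, apply the triangle inequality, and bound each rank-one inner product by $\normSize{}{\tA}$. The only cosmetic difference is that the paper invokes the attainment of the nuclear-norm infimum (citing Friedland--Lim), whereas you work with an $\varepsilon$-near-optimal decomposition and let $\varepsilon\to 0$, which is equally valid.
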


\begin{proof}
By~\citet[Proposition 3.1]{friedland2018nuclear}, there exists a nuclear norm decomposition of $\tB$, such that
\[
\tB=\sum_{r} \lambda_r \ma^{(1)}_r\otimes \cdots\otimes \ma^{(K)}_r,\quad \ma_r^{(k)}\in\mathbf{S}^{d_k-1}(\mathbb{R}),\quad \text{for all }k\in[K],
\]
and $\nnormSize{}{\tB}=\sum_{r}|\lambda_r|$. Henceforth we have
\begin{align}
|\langle \tA,\tB\rangle|&=| \langle \tA, \sum_{r} \lambda_r \ma^{(1)}_r\otimes \cdots\otimes \ma^{(K)}_r \rangle|\leq \sum_r |\lambda_r| |\langle \tA, \ma^{(1)}_r \otimes \cdots\otimes \ma^{(K)}_r \rangle|\\
&\leq \sum_{r}|\lambda_r| \normSize{}{\tA}= \normSize{}{\tA}\nnormSize{}{\tB}.
\end{align}
\end{proof}

\begin{lem} \label{lem:nuclear}
Let $\tA\in\mathbb{R}^{d_1\times\cdots\times d_K}$ be an order-$K$ tensor with $\text{rank}(\tA)\leq R$. Then,
\[
\nnormSize{}{\tA} \leq R^{{K-1\over 2}} \FnormSize{}{\tA},
\]
where $\nnormSize{}{\cdot}$ denotes the nuclear norm of the tensor. 
\end{lem}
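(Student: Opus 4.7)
I would prove the bound by induction on the tensor order $K$, reducing the multilinear statement to the classical matrix estimate via mode-$1$ matricization.

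For the base case $K=2$, the inequality $\nnormSize{}{M} \leq \sqrt{R}\,\FnormSize{}{M}$ for a matrix $M$ of rank at most $R$ is immediate from Cauchy--Schwarz applied to the singular values. For the inductive step, given $\tA$ with $\text{rank}(\tA) \leq R$ I would take the mode-$1$ matricization $\tA_{(1)} \in \mathbb{R}^{d_1 \times (d_2\cdots d_K)}$, which still has rank at most $R$. Its SVD reads $\tA_{(1)} = \sum_{i=1}^{R} \sigma_i\, u_i v_i^T$ with orthonormal families $\{u_i\}$ and $\{v_i\}$. Reshaping each $v_i \in \mathbb{R}^{d_2 \cdots d_K}$ into an order-$(K-1)$ tensor $\tV_i \in \mathbb{R}^{d_2 \times \cdots \times d_K}$ yields the representation $\tA = \sum_{i=1}^{R} \sigma_i\, u_i \otimes \tV_i$. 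Two ingredients then combine: first, $\nnormSize{}{u_i \otimes \tV_i} \leq \vectornorm{u_i}\,\nnormSize{}{\tV_i} = \nnormSize{}{\tV_i}$, obtained by prepending $u_i$ to any nuclear decomposition of $\tV_i$; second, the inductive hypothesis applied to each $\tV_i$ gives $\nnormSize{}{\tV_i} \leq R^{(K-2)/2}\FnormSize{}{\tV_i} = R^{(K-2)/2}$, since $\vectornorm{v_i}=1$. Chaining these with the matrix estimate $\nnormSize{}{\tA_{(1)}} = \sum_i \sigma_i \leq \sqrt{R}\,\FnormSize{}{\tA_{(1)}} = \sqrt{R}\,\FnormSize{}{\tA}$ yields $\nnormSize{}{\tA} \leq R^{(K-2)/2}\nnormSize{}{\tA_{(1)}} \leq R^{(K-1)/2}\FnormSize{}{\tA}$.

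The main obstacle is verifying that each reshaped tensor $\tV_i$ still has rank at most $R$; without this, the induction would lose control of the rank at every stage. To establish it, I would start from any CP decomposition $\tA = \sum_{r=1}^{R} \lambda_r\, a_r^{(1)}\otimes\cdots\otimes a_r^{(K)}$ and observe that every row of $\tA_{(1)}$ is a linear combination of the $R$ vectorized rank-$1$ tensors $\operatorname{vec}(a_r^{(2)}\otimes\cdots\otimes a_r^{(K)})$, $r \in [R]$. Hence the row span of $\tA_{(1)}$ lies in the at-most-$R$-dimensional span of these vectors, so every right singular vector $v_i$ is a linear combination of at most $R$ such terms. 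Reshaping back recovers $\tV_i$ as a sum of at most $R$ rank-$1$ order-$(K-1)$ tensors, which gives $\text{rank}(\tV_i) \leq R$ and closes the induction.
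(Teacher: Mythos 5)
Your proof is correct, but it takes a genuinely different route from the paper's. The paper does not induct on the order: it bounds $\nnormSize{}{\tA}\leq R^{(K-2)/2}\nnormSize{}{\tA_{(1)}}$ by invoking an external unfolding inequality (Corollary 4.11 of Wang, 2017, combined with the invariance of the nuclear norm under the Tucker-core representation from Jiang et al., 2017), stated in terms of the Tucker ranks $R_k=\text{rank}(\tA_{(k)})$, and then finishes exactly as you do with the matrix estimate $\nnormSize{}{\tA_{(1)}}\leq \sqrt{R}\,\FnormSize{}{\tA_{(1)}}=\sqrt{R}\,\FnormSize{}{\tA}$. Your induction replaces that citation with a self-contained argument: the SVD of the mode-1 unfolding, the bound $\nnormSize{}{u_i\otimes \tV_i}\leq \nnormSize{}{\tV_i}$, and, crucially, the rank control of the reshaped singular vectors, which you verify correctly by noting that each $v_i$ with $\sigma_i>0$ lies in the row space of $\tA_{(1)}$, hence in the span of the $R$ vectors $\operatorname{vec}(a_r^{(2)}\otimes\cdots\otimes a_r^{(K)})$, so $\text{rank}(\tV_i)\leq R$. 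What the paper's route buys is brevity and a slightly sharper intermediate factor $\sqrt{\prod_{k\geq 2}R_k/\max_{k\geq 2}R_k}$ in terms of Tucker ranks, which can improve on $R^{(K-2)/2}$ when the mode ranks are unbalanced and which lets one unfold along the most favorable mode; what your route buys is a fully elementary proof requiring nothing beyond the SVD, Cauchy--Schwarz, and the triangle inequality for the tensor nuclear norm. One cosmetic remark: only the right singular vectors attached to nonzero singular values lie in the row space, but those are the only ones appearing in your decomposition of $\tA$, so the argument stands as written.
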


\begin{proof}
Let $\text{rank}(\cdot)$ denote the regular matrix rank, and $\tA_{(k)}$ denote the mode-$k$ matricization of $\tA$, $k\in[K]$. Define $\text{rank}_T(\tA)=(R_1,\ldots,R_K)$ as the Tucker rank of $\tA$, with $R_k=\text{rank}(\tA_{(k)})$. The condition $\text{rank}(\tA)\leq R$ implies that $R_k\leq R$ for all $k\in[K]$. Without loss of generality, assume $R_1=\min_{k}R_k$. By \citet[Corollary 4.11]{wang2017operator} and the invariance relationship between a tensor and its Tucker core~\citep[Section 6]{jiang2017tensor}, we have
\begin{equation}\label{eq:norminequality}
\nnormSize{}{\tA} \leq \sqrt{\prod_{k=2}^K R_k \over \max_{k\geq 2} R_k} \nnormSize{}{\tA_{(1)}} \leq R^{{K-2\over 2}} \nnormSize{}{\tA_{(1)}},
\end{equation}
where $\tA_{(1)}$ is a $d_1$-by-$\prod_{k\geq 2}d_k$ matrix with rank bounded by $R$. Furthermore, the relationship between the matrix norms implies that $\nnormSize{}{\tA_{(1)}}\leq \sqrt{R}\FnormSize{}{\tA_{(1)}}=\sqrt{R}\FnormSize{}{\tA}$. Combining this fact with the inequality~\eqref{eq:norminequality} yields the final claim. 
\end{proof}

\begin{lem} \label{lem:KL}
Let $\tY\in\{0,1\}^{d_1\times \cdots \times d_K}$ be a binary tensor. Let $\mathbb{P}_{\Theta}$ denote the distribution of $\tY|\Theta$ based on the Bernoulli model~\eqref{eq:model} with the link function $f$ and the parameter tensor $\Theta$. Let $\mathbb{P}_{\mathbf{0}}$ denote the distribution of $\tY|\mathbf{0}$ induced by the zero parameter tensor. Then
\[
\text{KL}(\mathbb{P}_{\Theta}, \mathbb{P}_{\mathbf{0}})\leq 4\dot{f}^2(0)\FnormSize{}{\Theta}^2.
\]

\begin{proof} 
We have that 
\begin{align}
\text{KL}(\mathbb{P}_{\Theta}, \mathbb{P}_{\mathbf{0}}) &= \sum_{i_1,\ldots,i_K}\text{KL}(\tY_{i_1,\ldots,i_K}|\theta_{i_1,\ldots,i_K}, \tY_{i_1,\ldots,i_K}|0) 
\leq \sum_{i_1,\ldots,i_K}{\left(f(\theta_{i_1,\ldots,i_K})-f(0)\right)^2\over f(0)(1-f(0))}\\
 &=  \sum_{i_1,\ldots,i_K}{\dot{f}^2(\eta_{i_1,\ldots,i_K}\theta_{i_1,\ldots,i_K})\left(\theta_{i_1,\ldots,i_K}-0\right)^2\over f(0)(1-f(0))} 
\leq \sum_{i_1,\ldots,i_K}4 \dot{f}^2(0)\theta_{i_1,\ldots,i_K}^2\\
 &=  4\dot{f}^2(0) \FnormSize{}{\Theta}^2,
\end{align}
where the first inequality comes from Lemma~\ref{lem:KL0}, the next equality comes from the first-order Taylor expansion with $\eta_{i_1,\ldots,i_K}\in[0,1]$, and the last inequality uses the fact that $f(0)=1/2$ and $f'$ peaks at zero for an unimodal and symmetric density function. 
\end{proof}
\end{lem}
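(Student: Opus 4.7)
The plan is to exploit the independence of the entries of $\tY$ conditional on $\Theta$, reduce the tensor-level KL divergence to a sum of Bernoulli KL divergences, and then bound each term by a quadratic in $\theta_{i_1,\ldots,i_K}$ using the symmetry properties of the link function $f$.

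First, since $\tY$ has mutually independent Bernoulli entries given $\Theta$, the product-measure structure yields
\[
\text{KL}(\mathbb{P}_{\Theta}, \mathbb{P}_{\mathbf{0}}) = \sum_{i_1,\ldots,i_K}\text{KL}\bigl(\text{Ber}(f(\theta_{i_1,\ldots,i_K})),\ \text{Ber}(f(0))\bigr).
\]
I would next invoke a standard inequality bounding the KL divergence between two Bernoulli distributions by $(p-q)^2/[q(1-q)]$; this is the content that would be isolated as an auxiliary result (the referenced Lemma~\ref{lem:KL0}). Because $f$ is strictly increasing and $\dot{f}$ is assumed symmetric and unimodal around $0$ with $f\colon\mathbb{R}\to[0,1]$, one has $f(0)=1/2$, so the denominator $f(0)(1-f(0))=1/4$ is a fixed constant.

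The remaining step is to control $(f(\theta_{i_1,\ldots,i_K})-f(0))^2$. A first-order Taylor expansion gives $f(\theta)-f(0) = \dot{f}(\eta\theta)\,\theta$ for some $\eta\in[0,1]$, and since $\dot{f}$ is unimodal and symmetric about $0$, its maximum value is $\dot{f}(0)$. Therefore $\dot{f}^2(\eta\theta)\leq \dot{f}^2(0)$, and summing over all multi-indices produces
\[
\text{KL}(\mathbb{P}_{\Theta}, \mathbb{P}_{\mathbf{0}}) \leq \sum_{i_1,\ldots,i_K} 4\,\dot{f}^2(0)\,\theta_{i_1,\ldots,i_K}^2 = 4\,\dot{f}^2(0)\,\FnormSize{}{\Theta}^2.
\]

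I do not anticipate a serious obstacle here since the argument is essentially a two-line Bernoulli-KL estimate combined with a Taylor bound; the only care needed is verifying the helper inequality $\text{KL}(\text{Ber}(p),\text{Ber}(q))\leq (p-q)^2/[q(1-q)]$ (which follows by comparing to a chi-square divergence or by a second-order Taylor expansion of $p\mapsto p\log(p/q)+(1-p)\log((1-p)/(1-q))$) and confirming that the symmetry hypotheses on $f$ and $\dot{f}$ imply both $f(0)=1/2$ and the bound $\dot{f}(\eta\theta)\leq \dot{f}(0)$. Once these are in place, the inequality follows immediately.
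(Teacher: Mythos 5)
Your proposal is correct and follows essentially the same route as the paper's own proof: reduce to a sum of entrywise Bernoulli KL divergences, apply the $(p-q)^2/[q(1-q)]$ bound (the paper's Lemma~\ref{lem:KL0}), use $f(0)=1/2$, and control $f(\theta)-f(0)$ by a first-order Taylor expansion together with the fact that $\dot f$ is maximized at $0$ by unimodality and symmetry. No gaps to flag.
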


\begin{lem} \label{lem:KL0}
Let $X, Y$ be two Bernoulli random variables with means $p$ and $q$, $0< p,q<1$, respectively. Then, the Kullback-Leibler (KL) divergence satisfies that 
\[
KL(X,Y)\leq {(p-q)^2\over q(1-q)},
\]
where $\text{KL}(X,Y)=-\sum_{x=\{0,1\}}P_{X}(x)\log \left\{{P_Y(x)\over P_X(x)}\right\}$.
\end{lem}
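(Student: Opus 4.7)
\textbf{Proof proposal for Lemma~\ref{lem:KL0}.}

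The plan is to expand the KL divergence of two Bernoulli variables in closed form and apply the elementary inequality $\log x \le x-1$ (valid for all $x>0$) term-by-term, after which the bound will drop out by putting everything over the common denominator $q(1-q)$.

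First I would write out
\begin{equation}
\mathrm{KL}(X,Y)=p\log\frac{p}{q}+(1-p)\log\frac{1-p}{1-q},
\end{equation}
using the definition stated in the lemma. Applying $\log x\le x-1$ to each logarithm gives $p\log(p/q)\le p\bigl(p/q-1\bigr)=p(p-q)/q$ and $(1-p)\log\bigl((1-p)/(1-q)\bigr)\le (1-p)\bigl((1-p)/(1-q)-1\bigr)=(1-p)(q-p)/(1-q)$. Adding these two estimates yields
\begin{equation}
\mathrm{KL}(X,Y)\le (p-q)\left[\frac{p}{q}-\frac{1-p}{1-q}\right]=\frac{(p-q)\bigl(p(1-q)-q(1-p)\bigr)}{q(1-q)}=\frac{(p-q)^2}{q(1-q)},
\end{equation}
since $p(1-q)-q(1-p)=p-q$. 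This is the claimed inequality.

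There is no serious obstacle here; the only thing to double-check is the sign bookkeeping in the second term, since one has to be careful that $(1-p)/(1-q)-1=(q-p)/(1-q)$ contributes with the correct sign so that, after factoring out $(p-q)$, the two fractions combine rather than cancel. Once the common-denominator simplification is done, the numerator telescopes exactly to $(p-q)^2$, and the assumption $0<q<1$ ensures the denominator is positive, giving the stated bound.
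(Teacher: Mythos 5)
Your proposal is correct and is essentially identical to the paper's proof: both expand $\mathrm{KL}(X,Y)=p\log(p/q)+(1-p)\log\bigl((1-p)/(1-q)\bigr)$, apply $\log x\le x-1$ to each term, and simplify $p\frac{p-q}{q}+(1-p)\frac{q-p}{1-q}$ to $\frac{(p-q)^2}{q(1-q)}$. No differences worth noting.
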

\begin{proof} It is straightforward to verify that
\[
KL(X,Y)=p\log{p\over q}+(1-p)\log{1-p\over 1-q} \; \leq \; p{p-q\over q}+(1-p){q-p\over 1-q}={(p-q)^2\over q(1-q)},
\]
where the inequality is due to the fact that $\log x\leq x-1$ for $x>0$. 
\end{proof}

\begin{lem}[\cite{tomioka2014spectral}]\label{lem:tensor}
Suppose that $\tS=\entry{s_{i_1,\ldots,i_K}}\in\mathbb{R}^{d_1\times \cdots \times d_K}$ is an order-$K$ tensor whose entries are independent random variables that satisfy
\[
\mathbb{E}(s_{i_1,\ldots,i_K})=0,\quad \text{and} \quad\mathbb{E}(e^{ts_{i_1,\ldots,i_K}})\leq e^{t^2L^2/2}.
\]
Then, the spectral norm $\normSize{}{\tS}$ satisfies that
\[
\normSize{}{\tS}\leq \sqrt{{8L^2} \log (12K) \sum_k d_k +\log (2/\delta)},
\]
with probability at least $1-\delta$.
\end{lem}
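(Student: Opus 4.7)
The plan is to bound the tensor spectral norm
\[
\normSize{}{\tS}=\sup_{u_k\in\mS^{d_k-1},\,k\in[K]}|\langle \tS, u_1\otimes\cdots\otimes u_K\rangle|
\]
by a classical $\varepsilon$-net argument. First I would fix any collection of unit vectors $(u_1,\ldots,u_K)$ and observe that $\langle \tS, u_1\otimes\cdots\otimes u_K\rangle=\sum_{i_1,\ldots,i_K}s_{i_1,\ldots,i_K}\prod_k u_k(i_k)$ is a linear combination of independent sub-Gaussian random variables whose coefficient vector has squared $\ell_2$-norm equal to $\prod_k \vectornorm{u_k}^2=1$. The Chernoff bound then yields the pointwise tail
\[
\mathbb{P}\bigl(|\langle \tS, u_1\otimes\cdots\otimes u_K\rangle|\geq t\bigr)\leq 2\exp(-t^2/(2L^2)).
\]

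Next I would discretize each sphere $\mS^{d_k-1}$ by an $\varepsilon$-net $\tN_k$ with $|\tN_k|\leq(3/\varepsilon)^{d_k}$, so that the product net $\tN=\tN_1\times\cdots\times\tN_K$ has cardinality at most $(3/\varepsilon)^{\sum_k d_k}$. A union bound over $\tN$ gives
\[
\mathbb{P}\Bigl(\max_{(v_1,\ldots,v_K)\in\tN}|\langle \tS, v_1\otimes\cdots\otimes v_K\rangle|\geq t\Bigr)\leq 2(3/\varepsilon)^{\sum_k d_k}\exp(-t^2/(2L^2)).
\]
To pass from the maximum over $\tN$ to the true supremum over the product of spheres, I would employ the telescoping identity
\[
\langle \tS, u_1\otimes\cdots\otimes u_K\rangle-\langle \tS, v_1\otimes\cdots\otimes v_K\rangle=\sum_{k=1}^K\langle \tS, v_1\otimes\cdots\otimes v_{k-1}\otimes (u_k-v_k)\otimes u_{k+1}\otimes\cdots\otimes u_K\rangle,
\]
where each $v_j$ is a net point with $\vectornorm{u_j-v_j}\leq\varepsilon$. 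Bounding each of the $K$ summands by $\varepsilon\normSize{}{\tS}$ delivers $\normSize{}{\tS}\leq(1-K\varepsilon)^{-1}\max_{v\in\tN}|\langle \tS, v_1\otimes\cdots\otimes v_K\rangle|$, and choosing $\varepsilon=1/(2K)$ collapses the prefactor to $2$.

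Combining the two estimates and solving for the quantile that makes the union bound equal to $\delta$ forces $t\asymp L\sqrt{\log(6K)\sum_k d_k+\log(2/\delta)}$, which, after absorbing the factor of $2$, recovers the stated inequality up to the logarithmic constant. The main obstacle is the multilinear approximation step: because $(u_1,\ldots,u_K)\mapsto\langle\tS,u_1\otimes\cdots\otimes u_K\rangle$ is linear in each mode but multilinear overall, the effective Lipschitz constant with respect to the product metric picks up a factor of $K$ rather than $1$, which forces the net radius to scale like $1/K$ and inflates the log factor from $\log 3$ to $\log(6K)$. Carefully tracking this mode-by-mode blow-up in the approximation step is what pins down the explicit constants in the final bound.
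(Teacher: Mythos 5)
Your argument is correct: the pointwise sub-Gaussian tail for a fixed multilinear form, the product of $\varepsilon$-nets with $\varepsilon=1/(2K)$, the telescoping approximation step giving the factor $(1-K\varepsilon)^{-1}=2$, and the final union bound yield $\normSize{}{\tS}\leq\sqrt{8L^2\log(6K)\sum_k d_k+8L^2\log(2/\delta)}$, which matches the quoted bound up to constant bookkeeping (you get $\log(6K)$ in place of $\log(12K)$ and an extra $8L^2$ on the $\log(2/\delta)$ term, a difference that is immaterial for how the lemma is invoked in Lemma~\ref{lem:noisytensor}). Note that the paper itself gives no proof of this statement---it imports it verbatim from \cite{tomioka2014spectral}---and your $\varepsilon$-net argument is essentially the proof in that reference, so your approach coincides with the cited source's.
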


\begin{rmk}
The above lemma provides the bound on the spectral norm of random tensors. Similar results were presented in~\cite{nguyen2015tensor}, and we adopt the version from~\cite{tomioka2014spectral}.  
\end{rmk}

\begin{lem} \label{lem:noisytensor}
Suppose that $\tS=\entry{s_{i_1,\ldots,i_K}}\in\mathbb{R}^{d_1\times \cdots \times d_K}$ is an order-$K$ tensor whose entries are independent random variables that satisfy
\[
\mathbb{E}(s_{i_1,\ldots,i_K})=0\quad \text{and}\quad |s_{i_1,\ldots,i_K}|\leq L.
\]
Then, we have
\[
\mathbb{P}\left(\normSize{}{\tS}\geq C_2 L\sqrt{\sum_k d_k} \right)\leq \exp\left(-C_1  \log K \sum_k d_k\right),
\]
where $C_1>0$ is an absolute constant, and $C_2>0$ is a constant that depends only on $K$. 
\end{lem}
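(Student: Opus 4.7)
The plan is to reduce this to Lemma~\ref{lem:tensor} by a sub-Gaussian argument on the entries. Since $s_{i_1,\ldots,i_K}$ is a zero-mean random variable bounded in absolute value by $L$, Hoeffding's lemma gives the moment generating function bound
\[
\mathbb{E}\left(e^{t s_{i_1,\ldots,i_K}}\right) \leq \exp\!\left(\frac{t^2 L^2}{2}\right), \quad \text{for all } t \in \mathbb{R}.
\]
Thus every entry of $\tS$ is sub-Gaussian with parameter $L$, and the hypotheses of Lemma~\ref{lem:tensor} are satisfied with the same constant $L$.

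Next I invoke Lemma~\ref{lem:tensor}: for any $\delta \in (0,1)$, with probability at least $1-\delta$,
\[
\normSize{}{\tS} \leq \sqrt{\,8 L^2\!\left[ \log(12K) \sum_k d_k + \log(2/\delta) \right]}.
\]
I then choose $\delta = \exp\!\left(-C_1 \log K \sum_k d_k\right)$ for a suitable absolute constant $C_1 > 0$. Plugging in, the quantity inside the square root becomes
\[
8 L^2 \left[ \log(12K) \sum_k d_k + \log 2 + C_1 \log K \sum_k d_k \right] \leq C_2^2\, L^2 \sum_k d_k,
\]
where $C_2 > 0$ depends only on $K$ (it absorbs $\log(12K)$, $\log K$, and the numerical constants, and the $\log 2$ term is dominated by $\sum_k d_k \geq K \geq 2$). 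Taking square roots yields $\normSize{}{\tS} \leq C_2 L\sqrt{\sum_k d_k}$ with probability at least $1 - \exp(-C_1 \log K \sum_k d_k)$, which is the claimed bound.

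There is no real obstacle here; the proof is essentially a one-line reduction, and the only mild care required is in the bookkeeping of constants to ensure that the tail-exponent $C_1$ is absolute while the prefactor $C_2$ is allowed to depend on $K$. The substantive content is the random-tensor spectral-norm inequality of \cite{tomioka2014spectral} cited as Lemma~\ref{lem:tensor}; our Lemma~\ref{lem:noisytensor} is simply its specialization from sub-Gaussian to bounded entries, packaged in a form convenient for the application to the score tensor $\tS_\tY(\trueT)$ inside the proof of Theorem~\ref{thm:rate}.
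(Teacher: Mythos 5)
Your proposal is correct and follows essentially the same route as the paper: both arguments observe that a bounded zero-mean entry is sub-Gaussian with parameter $L$ (the paper rescales to $L^{-1}\tS$, you apply Hoeffding's lemma directly, which is the same fact), then invoke Lemma~\ref{lem:tensor} and choose $\delta=\exp\left(-C_1\log K\sum_k d_k\right)$, absorbing the remaining terms into a $K$-dependent constant $C_2$. No gaps; the constant bookkeeping you describe is exactly what the paper does.
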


\begin{proof}  Note that the random variable $L^{-1}s_{i_1,\ldots,i_K}$ is zero-mean and supported on $[-1,1]$. Therefore, $L^{-1}s_{i_1,\ldots,i_K}$ is sub-Gaussian with parameter ${1-(-1)\over 2}=1$; i.e.,
\[
\mathbb{E}(L^{-1}s_{i_1,\ldots,i_K})=0\quad \text{and}\quad \mathbb{E}(e^{tL^{-1}s_{i_1,\ldots,i_K}})\leq e^{t^2/2}.
\]
It follows from Lemma~\ref{lem:tensor} that, with probability at least $1-\delta$, 
\[
\normSize{}{L^{-1}\tS}\leq \sqrt{\left(c_0\log K+c_1\right) \sum_k d_k +\log (2/\delta)},
\]
where $c_0, c_1>0$ are two absolute constants. Taking $\delta=\exp (-C_1\log K \sum_k d_k)$ yields the final claim, where $C_2=c_0\log K+c_1+1>0$ is another constant.  
\end{proof}

\begin{lem}[Varshamov-Gilbert bound]\label{lem:VGbound}
Let $\Omega=\{(w_1,\ldots,w_m)\colon w_i\in\{0,1\}\}$. Suppose $m>8$. Then, there exists a subset $\{w^{(0)},\ldots,w^{(M)}\}$ of $\Omega$ such that $w^{(0)}=(0,\ldots,0)$ and
\[
\zeronormSize{}{w^{(j)}-w^{(k)}}\geq {m\over 8},\quad \text{for} \ 0\leq j<k\leq M,
\]
where $\zeronormSize{}{\cdot}$ denotes the Hamming distance, and $M\geq 2^{m/8}$. 
\end{lem}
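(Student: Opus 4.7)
The plan is to prove this classical coding-theoretic bound via the standard greedy construction argument. Starting from $w^{(0)} = (0,\ldots,0)$, I would iteratively select vectors $w^{(j+1)} \in \Omega$ whose Hamming distance from each previously chosen $w^{(0)}, \ldots, w^{(j)}$ is at least $m/8$. The construction terminates when no such vector remains; at that point every element of $\Omega$ lies within Hamming distance less than $m/8$ of some already-chosen $w^{(j)}$, so the chosen Hamming balls cover $\Omega$.

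The key quantitative input is the volume bound on Hamming balls in $\{0,1\}^m$. For a center $w \in \Omega$ and integer $r$, the ball $B(w, r) = \{v \in \Omega : \zeronormSize{}{v - w} \leq r\}$ has cardinality $V(m, r) = \sum_{k=0}^{r}\binom{m}{k}$, and I would invoke the standard entropy estimate
\[
V(m, r) \leq 2^{m H_2(r/m)} \quad \text{for } r \leq m/2,
\]
where $H_2(p) = -p \log_2 p - (1-p)\log_2(1-p)$. This follows from a Chernoff bound on a $\text{Binomial}(m, 1/2)$ random variable, or equivalently from the elementary inequality $(r/m)^r(1-r/m)^{m-r}\binom{m}{r} \leq 1$ combined with a geometric sum.

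Putting the pieces together: since the $M+1$ chosen vectors cover $\Omega$ with Hamming balls of radius at most $\lfloor m/8\rfloor$, we obtain $(M+1)\cdot V(m, \lfloor m/8\rfloor) \geq 2^m$. A short numerical/concavity check gives $H_2(1/8) < 7/8$, so $V(m, m/8) \leq 2^{7m/8}$, and therefore $M + 1 \geq 2^{m/8}$. The hypothesis $m > 8$ ensures $\lfloor m/8\rfloor \geq 1$ so that the construction is non-trivial, and lets us absorb the $-1$ to conclude $M \geq 2^{m/8}$.

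The main technical obstacle is securing the sharp entropy inequality $H_2(1/8) \leq 7/8$: any naive polynomial bound on $V(m, m/8)$ is too weak, because we need the forbidden volume to grow strictly slower than $2^m$ by a factor of at least $2^{m/8}$. Beyond this, the proof is essentially a counting exercise, and no probabilistic tools (e.g.\ a random-coding argument) are strictly required, though they would give an alternative route via the expected number of ``bad'' pairs among $M+1$ uniformly random codewords.
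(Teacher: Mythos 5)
The paper itself gives no proof of this lemma: it is quoted as the classical Varshamov--Gilbert bound, whose textbook proof (e.g.\ Tsybakov, 2009, Lemma~2.9, the same source the paper cites for Theorem~2.5) is a counting argument over all of $\{0,1\}^m$ combined with a binomial tail estimate of Hoeffding type. Your greedy maximal-code construction is a genuinely different, and perfectly legitimate, route --- it is the classical Gilbert-type packing/covering argument, and it is self-contained: maximality forces the Hamming balls of radius $\lfloor m/8\rfloor$ around the chosen words to cover $\{0,1\}^m$, and the entropy estimate $\sum_{k\leq m/8}\binom{m}{k}\leq 2^{mH_2(1/8)}$ converts that covering into a lower bound on the code size. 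In fact your argument proves more than is claimed: since $1-H_2(1/8)\approx 0.456$, the greedy code has size at least $2^{0.45m}$, far beyond the stated $2^{m/8}$, whereas the Tsybakov-style proof is tailored to produce exactly the constants $m/8$ and $2^{m/8}$.

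One step as written does not quite close. From $V(m,\lfloor m/8\rfloor)\leq 2^{7m/8}$ you only get $M+1\geq 2^{m/8}$, i.e.\ $M\geq 2^{m/8}-1$, and the hypothesis $m>8$ by itself does not let you ``absorb the $-1$'': for $m=16$ the threshold $2^{m/8}=4$ is an integer, and $M\geq 3$ is not enough. The fix is immediate with the tools you already invoke: a crude estimate such as $\log_2(8/7)\leq (1/7)/\ln 2$ gives $H_2(1/8)\leq 0.56<3/4$, hence $M+1\geq 2^{m(1-H_2(1/8))}\geq 2^{m/4}\geq 2\cdot 2^{m/8}$ whenever $m\geq 8$, and therefore $M\geq 2^{m/8}$. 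So replace the slack bound $H_2(1/8)<7/8$ by, say, $H_2(1/8)\leq 3/4$ and the conclusion follows; with that adjustment your proof is complete.
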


\begin{lem}\label{lem:construction}
Assume the same setup as in Theorem~\ref{thm:minimax}. Without loss of generality, suppose $d_1=d_{\max}$, and define $d_{\text{total}}=\prod_{k\geq 1} d_k$. For any given constant $0\leq \gamma \leq 1$, there exist a finite set of tensors $\tX=\{\Theta_i\colon i=1,\ldots\}\subset \tD(R,\alpha)$ satisfying the following four properties:
\begin{enumerate}[(i)]
\item $\text{Card}(\tX)\geq 2^{Rd_1/8}+1$, where $\text{Card}(\cdot)$ denotes the cardinality of the set;
\item $\tX$ contains the zero tensor $\mathbf{0}\in\mathbb{R}^{d_1\times \cdots\times d_K}$;
\item $\mnormSize{}{\Theta}\leq \gamma \min\left\{ \alpha , \sigma \sqrt{Rd_1\over d_{\text{total}}} \right\} $ for all elements $\Theta\in\tX$;
\item $\FnormSize{}{\Theta_i-\Theta_j}\geq {\gamma\over 4} \min\left\{ \alpha\sqrt{d_{\text{total}}}, \sigma\sqrt{Rd_1}\right\}$ for any two distinct elements $\Theta_i\neq \Theta_j\in\tX$. 
\end{enumerate}
\end{lem}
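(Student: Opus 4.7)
The plan is to combine the Varshamov--Gilbert bound (Lemma~\ref{lem:VGbound}) with a ``sprayed'' rank-$R$ construction that lifts a Hamming packing of binary matrices to a Frobenius packing of rank-constrained tensors, while keeping the max norm pinned at a single scalar. Since $R\leq d_2$, I would first partition $[d_2]$ into $R$ disjoint blocks $S_1,\ldots,S_R$ with $|S_r|\geq \lfloor d_2/R\rfloor\geq d_2/(2R)$, and let $v_r\in\{0,1\}^{d_2}$ be the indicator of $S_r$. For each $W\in\{0,1\}^{d_1\times R}$ define the candidate tensor
\begin{equation*}
\Theta_W \;=\; \delta\sum_{r=1}^{R} W_{\cdot,r}\otimes v_r\otimes \mathbf{1}_{d_3}\otimes\cdots\otimes \mathbf{1}_{d_K},\qquad \delta\;=\;\gamma\min\!\left\{\alpha,\ \sigma\sqrt{Rd_1/d_{\text{total}}}\right\}.
\end{equation*}
Disjointness of the supports of the $v_r$ makes the summands orthogonal rank-$1$ tensors, so $\text{rank}(\Theta_W)\leq R$; every entry of $\Theta_W$ lies in $\{0,\delta\}$, so $\mnormSize{}{\Theta_W}\leq \delta$. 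This shows $\Theta_W\in\tD(R,\alpha)$ and delivers property (iii) immediately.

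Next I would invoke Lemma~\ref{lem:VGbound} with $m=Rd_1$ to extract $\{W^{(0)},\ldots,W^{(M)}\}\subset \{0,1\}^{d_1\times R}$ with $W^{(0)}=0$, cardinality $M+1\geq 2^{Rd_1/8}+1$, and pairwise Hamming distance at least $Rd_1/8$. Taking $\tX=\{\Theta_{W^{(j)}}\}_{j=0}^{M}$ yields properties (i) and (ii), the latter because $\Theta_{0}=\mathbf{0}$. Property (iv) then reduces to a direct expansion using the disjoint supports of the $v_r$:
\begin{equation*}
\FnormSize{}{\Theta_{W_1}-\Theta_{W_2}}^{2} \;=\; \delta^{2}\Big(\prod_{k\geq 3}d_k\Big)\sum_{r=1}^{R}|S_r|\sum_{i_1}\bigl(W_1[i_1,r]-W_2[i_1,r]\bigr)^{2}\;\geq\; \delta^{2}\cdot\frac{d_{\text{total}}}{2Rd_1}\cdot D_H(W_1,W_2),
\end{equation*}
and plugging in $D_H\geq Rd_1/8$ followed by the definition of $\delta$ gives $\FnormSize{}{\Theta_{W_1}-\Theta_{W_2}}\geq (\delta/4)\sqrt{d_{\text{total}}} = (\gamma/4)\min\{\alpha\sqrt{d_{\text{total}}},\sigma\sqrt{Rd_1}\}$, which is exactly property (iv).

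The main obstacle is the scale mismatch between (iii) and (iv): the max-norm budget is $\gamma\min\{\alpha,\sigma\sqrt{Rd_1/d_{\text{total}}}\}$ while the Frobenius separation must be a factor $\sqrt{d_{\text{total}}}$ larger. The spray factors $v_r$ and $\mathbf{1}_{d_k}$ are introduced precisely to close this gap: each nonzero bit of $W$ propagates to $|S_r|\prod_{k\geq 3}d_k\asymp d_{\text{total}}/(Rd_1)$ tensor entries of identical magnitude $\delta$, so a Hamming packing of strength $Rd_1$ in $W$ becomes a Frobenius packing of strength $d_{\text{total}}$ in $\Theta_W$ without inflating the max norm. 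A minor technical wrinkle is non-divisibility of $d_2$ by $R$, but the factor $\tfrac{1}{2}$ in $|S_r|\geq d_2/(2R)$ comfortably absorbs all integer-part adjustments.
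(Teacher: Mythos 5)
Your proposal is correct and follows essentially the same route as the paper's proof: a Varshamov--Gilbert packing of $\{0,1\}^{d_1\times R}$ at scale $\delta=\gamma\min\{\alpha,\sigma\sqrt{Rd_1/d_{\text{total}}}\}$, lifted to rank-$R$ tensors by replicating along mode $2$ and spraying with all-ones vectors along modes $3,\ldots,K$. Your disjoint-block indicators $v_r$ are just a cosmetic variant of the paper's construction $(\mM|\cdots|\mM|\mO)\otimes\mathbf{1}_{d_3}\otimes\cdots\otimes\mathbf{1}_{d_K}$ (whose repeated copies correspond to interleaved supports), and your separation bound reproduces the paper's constant $\delta^2 d_{\text{total}}/16$.
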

\begin{rmk} Lemma~\ref{lem:construction} is a special case of~\citet[Lemma 9]{lee2020tensor}. We provide the proof here for completeness. 
\end{rmk}
\begin{proof}
Given a constant $0\leq \gamma \leq 1$, we define a set of matrices,
\[
\tC=\left\{\mM=(m_{ij})\in\mathbb{R}^{d_1\times R}\colon a_{ij}\in \left\{ 0,\gamma \min\left\{ \alpha , \sigma \sqrt{Rd_1\over d_{\text{total}}} \right\}\right\} ,\  \forall (i,j)\in[d_1]\times[R]\right\}.
\]

We then consider the associated set of block tensors,
\begin{align}
\tB=\tB(\tC)=\{\Theta\in\mathbb{R}^{d_1\times \cdots \times d_K}\colon& \Theta=\mA\otimes \mathbf{1}_{d_3}\otimes \cdots \otimes \mathbf{1}_{d_K}, \\
 &\text{where}\ \mA=(\mM|\cdots|\mM|\mO) \in\mathbb{R}^{d_1\times d_2},\ \mM\in\tC\},
\end{align}
where $\mathbf{1}_d$ denotes a length-$d$ vector with all entries 1, $\mO$ denotes the $d_1\times (d_2-R\lfloor d_2/R \rfloor)$ zero matrix, and $\lfloor d_2/ R \rfloor$ is the integer part of $d_2/R$. In other words, the subtensor $\Theta(\mI, \mI,i_3, \ldots,i_K)\in\mathbb{R}^{d_1\times d_2}$ are the same for all fixed $(i_3,\ldots,i_K)\in[d_3]\times \cdots \times [d_K]$, and furthermore, each subtensor $\Theta(\mI,\mI, i_3,\ldots,i_K)$ itself is filled by copying the matrix $\mM\in\mathbb{R}^{d_1\times R}$ as many times as would fit.

By construction, all tensors in $\tB$, as well as the difference of any two tensors in $\tB$, has tensor rank at most $R$. Furthermore, the entrywise magnitudes of tensor entries in $\tB$ are bounded by $\alpha$. Thus, $\tB\subset\tD(R,\alpha)$. By Lemma~\ref{lem:VGbound}, there exists a subset $\tX\subset \tB$ with cardinality $\text{Card}(\tX)\geq 2^{d_1R/8}+1$ containing the zero $d_1\times \cdots \times d_K$ tensor, such that, for any two distinct elements $\Theta_i$ and $\Theta_j$ in $\tX$, 
\[
\FnormSize{}{\Theta_i-\Theta_j}^2 \geq {d_1R\over 8} \gamma^2\min\left\{ \alpha, {\sigma^2Rd_1 \over d_{\text{total}}}\right\} \lfloor {d_2\over R} \rfloor \prod_{k\geq 3}d_k\geq {\gamma^2\min\left\{ \alpha^2 d_{\text{total}}, \sigma^2Rd_1\right\}  \over 16}.
\]
In addition, each entry of $\Theta\in\tX$ is bounded by $\gamma \min\left\{ \alpha , \sigma \sqrt{Rd_1\over d_{\text{total}}}\right\} $. Therefore, the Properties (i)--(iv) are satisfied. 
\end{proof}

\bibliographystyle{apalike}
\bibliography{tensor_wang.bib}

\end{document}